\documentclass[accepted]{uai2022} 

\usepackage[american]{babel}

\usepackage{mathtools} 
\usepackage{booktabs} 
\usepackage{tikz} 

\usepackage[numbers]{natbib}



\makeatletter
\newcommand{\rmnum}[1]{\romannumeral #1}
\newcommand{\Rmnum}[1]{\expandafter\@slowromancap\romannumeral #1@}
\makeatother

\usepackage{booktabs} 
\usepackage{tikz} 

\usepackage{caption}
\usepackage{subcaption}
\usepackage{multicol}
\usepackage{wrapfig}
\usepackage{placeins}
\usepackage{hyperref} 

\usepackage{algorithm}
\usepackage{algorithmic}
\usepackage{hyperref}       
\usepackage{amsthm}
\usepackage{amsfonts}       
\usepackage{mathtools}
\usepackage{float} 
\usepackage{bold-extra}
\usepackage{textcase}

\theoremstyle{definition}
\newtheorem{theorem}{Theorem}
\newtheorem{lemma}{Lemma}
\newtheorem{claim}{Claim}
\newtheorem{corollary}{Corollary}
\newtheorem{definition}{Definition}

\theoremstyle{remark}
\newtheorem{remark}{\textbf{Remark}}


\title{Is Simple Uniform Sampling Effective for Center-Based Clustering with Outliers: When and Why?}

%
%
\author[1,2]{Jiawei Huang}
\author[1]{Wenjie Liu}
\author[1]{Hu Ding}
\affil[1]{%
School of Computer Science and Technology, University of Science and Technology of China.
}
\affil[2]{%
Department of Computer Science, City University of Hong Kong. 

\{hjw0330, lwj1217\}@mail.ustc.edu.cn, huding@ustc.edu.cn 
}

  
  \begin{document}
\maketitle

\begin{abstract}
Real-world datasets often contain outliers, and the presence of outliers can make the clustering problems to be much more challenging. In this paper, we propose a simple uniform sampling framework for solving several representative center-based clustering with outliers problems:  $k$-center/median/means clustering with outliers. Our analysis is fundamentally different from the previous (uniform and non-uniform) sampling based ideas. To explain the effectiveness of uniform sampling in theory, we introduce a measure of  ``significance''  and prove that the performance of our framework depends on the significance degree of the given instance. In particular, the sample size can be independent of the input data size $n$ and the dimensionality $d$, if we assume the given instance is  ``significant'', which is in fact a   reasonable assumption in practice. Due to its simplicity, the uniform sampling approach also enjoys several significant advantages over the non-uniform sampling approaches in practice. To the best of our knowledge, this is the first work that systematically studies the effectiveness of uniform sampling from both theoretical and experimental aspects. 
\end{abstract}

\section{Introduction}
\label{sec-intro}
\noindent  Clustering has many important applications in real world~\citep{jain2010data}. An important type of clustering problems is called ``center-based clustering'', such as the well-known {\em $k$-center/median/means} clustering problems~\citep{awasthi2014center}. 
In general, a center-based clustering problem aims to find $k$ cluster centers so as to minimize the induced clustering cost.
But in practice our datasets often contain outliers which can seriously destroy the final clustering results~\citep{georgogiannis2016robust}. 
A key obstacle is that the outliers can be arbitrarily located in the space. {\em Outlier removal} is a topic that has been extensively studied  before~\citep{chandola2009anomaly}. The existing outlier removal methods (like DBSCAN~\citep{ester1996density,bhattacharjee2021survey}) yet cannot simply replace the center-based clustering with outliers approaches, since we often need to use the obtained cluster centers to represent the data or perform other tasks, {\em e.g.,} data compression and data selection for large-scale machine learning~\citep{DBLP:conf/iclr/SenerS18,coleman2020selection}.  

Clustering with outliers can be viewed as a generalization of the vanilla clustering problems, however, the presence  of arbitrary outliers makes the problems to be much more challenging. In particular, the problem of clustering with outliers is often a challenging combinatorial optimization problem~\citep{charikar2001algorithms,gupta2017local}. 
Even for their approximate solutions, most existing quality-guaranteed algorithms have super-linear time complexities. 
So a number of sampling methods have been proposed for reducing their complexities. Namely, we take a small sample (uniformly or non-uniformly) from the input and run an existing approximation algorithm on the sample.

A number of  \textbf{non-uniform sampling methods} have been proposed, such as the {\em $k$-means++} seeding~\citep{gupta2017local,DBLP:journals/corr/abs-2003-02433,NEURIPS2019_73983c01,DBLP:conf/uai/DeshpandeKP20} and the successive sampling method~\citep{DBLP:conf/nips/ChenA018}. However, these non-uniform sampling approaches may suffer several drawbacks in practice. For instance, they need to read the input dataset in multiple passes with high computational complexities, and/or may have to discard more than $z$ outliers ($z$ is the pre-specified number of outliers). For example, the methods~\citep{gupta2017local,DBLP:journals/corr/abs-2003-02433} need to run the $k$-means++ seeding procedure $\Omega(k+z)$ rounds, and discard $O(kz\log n)$ and $O(kz)$ outliers respectively. 

Compared with the non-uniform sampling methods, \textbf{uniform sampling} enjoys several significant advantages, {\em e.g.,} it is very easy to implement in practice.  
We need to emphasize that, beyond the clustering quality and asymptotic complexity, \textbf{the simplicity of implementation is also a major concern in {\em algorithm engineering}, especially for dealing with big data}~\citep{DBLP:conf/gi/Sanders14}. 
However, uniform sampling usually requires a large sample size depending 
on $n/z$ and the dimensionality $d$, which could be very high ({\em e.g.,} $d$ could be high and $z$ could be much smaller than $n$).  For example, as the result proposed by~\citet{huang2018epsilon}, the sample size should be at least 
$\Omega\big((\frac{n}{\delta z})^2kd\log k\big)$, if $\delta\in (0,1)$ and we allow to discard  $(1+\delta)z$ outliers (which is slightly larger than the pre-specified number $z$). If $z=\sqrt{n}$, their sample size will be even larger than $n$.

Actually, for the vanilla $k$-median/means clustering (without outliers) problems, ~\citet[Section 4 and 5]{meyerson2004k} considered using uniform sampling to solve the practical case that each optimal cluster is assumed to have size $\Omega(\frac{\epsilon n}{k})$, where $\epsilon$ is a fixed parameter in $(0,1)$. But unfortunately their result cannot be easily extended to the version with outliers. As  studied in the same paper~\citep[Section 6]{meyerson2004k}, it was shown that the sample size relies on $\frac{n}{z}$, and moreover, the solution has a large error on the number of discarded outliers (it needs to discard $>16z$ outliers).

\textbf{Our contributions and key ideas.} In this paper, we aim to provide the theoretical analysis to explain the effectiveness of uniform sampling in practice. 
Our  algorithmic framework  actually is very simple, where we just need to take a small sample uniformly at random from the input and run an existing approximation algorithm (as the black-box algorithm) on the sample. Our results are partly inspired by the work from~\citet{meyerson2004k}, but as discussed above, we need to go deeper and develop significantly new ideas for analyzing  the influence of outliers. 
We briefly introduce our ideas and results below. 

Suppose $\{C^*_1, C^*_2, \cdots, C^*_k\}$ are the $k$ optimal clusters.  
We consider the ratio between two  values: the size of the smallest optimal cluster $\min_{1\leq j\leq k}|C^*_j|$\footnote{$|C^*_j|$ indicates the number of points in $C^*_j$.} and the number of outliers $z$.  
We show this ratio actually plays a key role to the effectiveness of uniform sampling.   
In real applications, a cluster usually represents a certain scale of population and it is rare to have a cluster with the size much smaller than the number of outliers. 
 So it is  reasonable to assume that each cluster $C^*_j$ has a size at least comparable to $z$, {\em i.e.,} $\min_{1\leq j\leq k}|C^*_j|/z=\Omega(1)$ (note this assumption allows the ratio to be a value smaller than $1$, say $0.5$). 
 
 Under such a realistic assumption, our framework can output $k+O(\log k)$ cluster centers that yield a $4$-approximation for $k$-center clustering with outliers. 
In some scenarios, we may insist on returning exactly $k$ cluster centers.  We prove that, if $\frac{\min_{1\leq j\leq k}|C^*_j|}{z}>1$, our framework can return exactly $k$ cluster centers that yield a $(c+2)$-approximate solution, through running an existing $c$-approximation $k$-center with outliers algorithm (with some $c\geq 1$) on the sample. The framework  also yields similar results for $k$-median/means clustering with outliers.  

We prove that the sample size can be independent of the ratio $n/z$ and the dimensionality $d$, which is fundamentally different from the previous results on uniform sampling~\citep{charikar2003better,huang2018epsilon,DBLP:conf/esa/DingYW19}. Our sample size only depends on $k$, $\min_{1\leq j\leq k}|C^*_j|$, and several parameters controlling the clustering error and success probability. 
Moreover, different from the previous methods which often have the errors on the number of discarded outliers, \textbf{our method allows to discard exactly $z$ outliers}.
Also, if we only require to output the cluster centers, our uniform sampling approach has the sublinear time complexity that is independent of the input size.

\begin{remark}
It is worth noting that \citet{gupta2018approximation} proposed a similar uniform sampling approach to solve $k$-means clustering with outliers. 
However, the analysis and results of~\citet{gupta2018approximation} are quite different from ours. Also the assumption in~\citep{gupta2018approximation} is stronger: it requires that each optimal cluster has size roughly $\Omega(\frac{z}{\gamma^2}\log k)$ ({\em i.e.,} $\frac{\min_{1\leq j\leq k}|C^*_j|}{z}=\Omega(\frac{\log k}{\gamma^2})$) where $\gamma$ is a small parameter in $(0,1)$.
\end{remark}

\subsection{Other related works}
\label{sec-prior}
We overview the  related works below. 

\textbf{$k$-center clustering with outliers.} 
~\citet{charikar2001algorithms} proposed a $3$-approximation algorithm for $k$-center clustering with outliers in arbitrary metrics. 
The time complexity of their algorithm is at least quadratic in data size, since it needs to read all the pairwise distances. A following streaming $(4+\epsilon)$-approximation algorithm was proposed by ~\citet{mccutchen2008streaming}. 
~\citet{DBLP:conf/icalp/ChakrabartyGK16} showed a $2$-approximation algorithm for metric $k$-center clustering with outliers based on the LP relaxation techniques. 
~\citet{BHI} showed a coreset based approach but having an exponential time complexity if $k$ is not a constant. 
 Recently, ~\citet{DBLP:conf/esa/DingYW19} provided a greedy algorithm that yields a bi-criteria approximation (returning more than $k$ clusters) based on the idea from~\citep{G85}; independently, ~\citet{NEURIPS2019_73983c01} also proposed a similar greedy bi-criteria approximation algorithm. Both their results yield $2$ (or $2+\epsilon$)-approximation for the radius and need to return $O(\frac{k}{\epsilon})$ cluster centers. 
Several non-uniform and uniform sampling methods were  studied in~\citep{charikar2003better,DBLP:journals/corr/abs-1802-09205,huang2018epsilon}. 


\textbf{$k$-median/means clustering with outliers.} The early work of $k$-means clustering with outliers dates back to 1990s, in which ~\citet{10.2307/2242558} used the ``trimming''  idea to formulate the robust model for $k$-means clustering. \citet{georgogiannis2016robust} provided a theoretical analysis of the robustness of $k$-means clustering with respect to outliers. 
There are a bunch of  algorithms with provable guarantees that have been proposed  for $k$-means/median clustering with outliers~\citep{charikar2001algorithms,chen2008constant,krishnaswamy2018constant,friggstad2018approximation}, but they are difficult to implement due to their high complexities. Several heuristic but practical  algorithms without provable guarantees have also been studied, such as \citep{chawla2013k,ott2014integrated}. \citet{liu2019clustering} provided a new objective function through combining clustering and outlier removal. \citet{paul2021uniform} recently proposed a robust center-based clustering method based on ``Median-of-Means'' which can be solved via gradient-based algorithms. 

Moreover,  several sampling based methods  were proposed to speed up the existing algorithms. 
By using the local search method, \citet{gupta2017local} provided a $274$-approximation algorithm for $k$-means clustering with outliers; they showed that the well known $k$-means++ method~\citep{arthur2007k} can be used to reduce the complexity. Furthermore, ~\citet{NEURIPS2019_73983c01} and ~\citet{DBLP:conf/uai/DeshpandeKP20} respectively showed that the quality can be improved by modifying the $k$-means++ seeding. 
Partly inspired by the successive sampling method of~\citet{mettu2004optimal}, ~\citet{DBLP:conf/nips/ChenA018} proposed a novel summary construction algorithm to reduce input data size.   \citet{DBLP:journals/corr/abs-2003-02433} provided a sampling method by combining $k$-means++ and uniform sampling. \citet{charikar2003better} and ~\citet{meyerson2004k} provided different uniform sampling approaches respectively; ~\citet{huang2018epsilon}, ~\citet{DBLP:conf/icml/DingW20} also presented the uniform sampling approaches in Euclidean space. Recently ~\citet{huang2022near} obtained a robust coreset for the clustering with outliers problem based on the framework of \citet{braverman2022power}. 

\textbf{Sublinear time algorithms.} Uniform sampling is closely related to sublinear algorithms. A number of uniform sampling based sublinear time algorithms have been studied for the problems like clustering~\citep{DBLP:conf/stoc/Indyk99,mishra2001sublinear,czumaj2004sublinear} and property testing~\citep{DBLP:journals/jacm/GoldreichGR98}. 

\subsection{Preliminaries}
\label{sec-pre}

In this paper, we follow the common definition for ``uniform sampling'' in most previous articles, that is,  we take a sample from the input \textbf{independently and uniformly at random}. 

We use $||\cdot||$ to denote the Euclidean norm. Let the input be a point set $P \subset \mathbb{R}^d$ with $|P|=n$. Given a set of points $H\subset \mathbb{R}^d$ and a positive integer $z<n$, we  let $\texttt{dist}(p, H)\coloneqq\min_{q\in H}||p-q||$ for any point $p\in\mathbb{R}^d$, and define the following notations: 
\begin{itemize}
\item $\Delta^{-z}_{\infty}(P, H)=\min\{\max_{p\in P'}\texttt{dist}(p, H)\mid P'\subset P, |P'|=n-z\}$; 
\item  $\Delta^{-z}_{l}(P, H) = \min\{\frac{1}{|P'|}\sum_{p\in P'}\big(\texttt{dist}(p, H)\big)^l \mid P' \subset  P, |P'|=n-z\}$,   $l \in \{ 1, 2 \}$. 
\end{itemize}


The following definition follows the previous articles (mentioned in Section~\hyperref[sec-prior]{1.1}) on center-based clustering with outliers. As emphasized earlier, we suppose that the outliers  can be arbitrarily located in the space.

\begin{definition}[\textbf{$k$-Center/Median/Means clustering with outliers}]
\label{def-outlier}
Given a set $P$ of $n$ points in $\mathbb{R}^d$ with two positive integers $k<n$ and $z<n$, the problem of $k$-center ({\em resp.,} $k$-median, $k$-means) clustering with outliers is to find $k$ cluster centers $C=\{c_1, \cdots, c_k\}\subset \mathbb{R}^d$, such that the objective function $\Delta^{-z}_{\infty}(P, C)$ ({\em resp.,} $\Delta^{-z}_{1}(P, C)$, $\Delta^{-z}_{2}(P, C)$) is minimized.
\end{definition}

\begin{remark}
\label{rem-def}
Definition~\ref{def-outlier} can be simply modified for arbitrary metric space $(X, \mu)$, where $X$ contains $n$ vertices and $\mu(\cdot, \cdot)$ is the distance function: the Euclidean distance ``$||p-q||$'' is replaced by $\mu(p, q)$; the cluster centers $\{c_1, \cdots, c_k\}$ should be chosen from $X$. 

\end{remark}

In this paper, we always use $P_{\mathtt{opt}}$, a subset of $P$ with size $n-z$, to denote the subset yielding the optimal solution with respect to the objective functions in Definition~\ref{def-outlier}. Additionally, let $\{C^*_1, \cdots, C^*_k\}$ be the $k$ optimal clusters forming $P_{\mathtt{opt}}$. Also, we introduce the following definition for analyzing our algorithms. 


\begin{definition}[\textbf{$\bf{(\epsilon_1, \epsilon_2)}$-Significant instance}]
\label{def-sig}
Let $\epsilon_1, \epsilon_2>0$. Given an instance of $k$-center ({\em resp.,} $k$-median, $k$-means) clustering with outliers as described in Definition~\ref{def-outlier}, if $\min_{1\leq j\leq k} |C^*_j|\geq\frac{\epsilon_1}{k}n$ and $z=\frac{\epsilon_2}{k}n$, we say that it is an $(\epsilon_1,\epsilon_2)$-significant instance.
\end{definition}

\begin{remark}
We assume that  $\min_{1\leq j\leq k} |C^*_j|$ has a lower bound $\frac{\epsilon_1}{k}n$ in Definition~\ref{def-sig} (this assumption was also proposed by~\citet{meyerson2004k} before). 
The ratio $\frac{\epsilon_1}{\epsilon_2}$ ($\leq\frac{\min_{1\leq j\leq k} |C^*_j|}{z}$), measures  the ``significance'' degree of the clusters to outliers. Specifically, the higher the ratio, the more significant  the clusters. In practice, a cluster usually represents a certain scale of population and it is rare to have a cluster with the size much smaller than the number of outliers. So it is reasonable to assume $\frac{\epsilon_1}{\epsilon_2}=\Omega(1)$ in real scenarios.  

\end{remark}
 
 To help our analysis, we introduce the following two important implications  of Definition~\ref{def-sig}. 

\begin{lemma}
\label{lem-imp1}
Given an $(\epsilon_1, \epsilon_2)$-significant instance $P$ as described in Definition~\ref{def-sig}, we select a set $S$ of points from $P$ uniformly at random. Let $\eta, \delta\in(0,1)$.  Then we have: 
\begin{enumerate} 
\item If $|S|\geq \frac{k}{\epsilon_1}\log\frac{k}{\eta}$, with probability at least $1-\eta$, $S\cap C^*_j\neq \emptyset$ for any $1\leq j\leq k$.  
\item  If $|S|\geq\frac{3k}{\delta^2\epsilon_1}\log\frac{2k}{\eta}$, with probability at least $1-\eta$, $|S\cap C^*_j|\in (1\pm\delta)\frac{|C^*_j|}{n} |S|$ for any $1\leq j\leq k$.
\end{enumerate}
\end{lemma}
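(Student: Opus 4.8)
The plan is to treat both parts as standard sampling-concentration arguments, where the only structural input is the significance assumption $\min_{1\leq j\leq k}|C^*_j|\geq\frac{\epsilon_1}{k}n$. This guarantees that every optimal cluster carries probability mass at least $\frac{\epsilon_1}{k}$ under uniform sampling. Since the sample is drawn independently and uniformly at random (as fixed in Section~\ref{sec-pre}), for each fixed cluster $C^*_j$ the event ``a given sampled point lands in $C^*_j$'' is a Bernoulli trial with success probability $p_j:=|C^*_j|/n\geq\epsilon_1/k$, and the $|S|$ sampled points constitute $|S|$ such independent trials. Both bounds then follow by controlling the relevant failure event for a single cluster and taking a union bound over the $k$ clusters.

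For part \textbf{(\rmnum{1})}, I would bound the probability that a fixed cluster $C^*_j$ is entirely missed by the sample. The event that all $|S|$ sampled points avoid $C^*_j$ has probability $(1-p_j)^{|S|}\leq(1-\epsilon_1/k)^{|S|}\leq e^{-\epsilon_1|S|/k}$, using the elementary inequality $1-x\leq e^{-x}$. Substituting $|S|\geq\frac{k}{\epsilon_1}\log\frac{k}{\eta}$ makes this at most $\eta/k$. A union bound over the $k$ clusters then bounds the probability that some cluster is missed by $\eta$, so with probability at least $1-\eta$ every $C^*_j$ is hit.

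For part \textbf{(\rmnum{2})}, I would write $|S\cap C^*_j|=\sum_{i=1}^{|S|}X_i^{(j)}$ as a sum of independent $\mathrm{Bernoulli}(p_j)$ variables with mean $\mu_j:=p_j|S|\geq\frac{\epsilon_1}{k}|S|$, and invoke the two-sided multiplicative Chernoff bound $\Pr[\,|\sum_{i}X_i^{(j)}-\mu_j|\geq\delta\mu_j\,]\leq 2\exp(-\delta^2\mu_j/3)$. Since $\mu_j\geq\frac{\epsilon_1}{k}|S|$ and $|S|\geq\frac{3k}{\delta^2\epsilon_1}\log\frac{2k}{\eta}$, the exponent satisfies $\delta^2\mu_j/3\geq\frac{\delta^2\epsilon_1|S|}{3k}\geq\log\frac{2k}{\eta}$, so each cluster fails the two-sided estimate with probability at most $2\cdot\frac{\eta}{2k}=\frac{\eta}{k}$. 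A union bound over the $k$ clusters yields overall failure probability at most $\eta$, which gives $|S\cap C^*_j|\in(1\pm\delta)\frac{|C^*_j|}{n}|S|$ simultaneously for all $j$ with probability at least $1-\eta$.

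The steps are routine, so there is no genuine obstacle; the only point requiring care is matching the constants in the chosen Chernoff form to the stated sample-size bounds, in particular using the $\delta^2\mu/3$ exponent together with the leading factor $2$ that feeds the $\log\frac{2k}{\eta}$ term rather than $\log\frac{k}{\eta}$. I would also make explicit that the ``independently and uniformly at random'' convention lets us apply the standard Chernoff bound directly; if one instead sampled without replacement, the counts $|S\cap C^*_j|$ would be negatively associated and the same tail bounds would still hold, but under the stated convention this subtlety does not arise.
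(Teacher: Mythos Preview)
Your proposal is correct and follows essentially the same approach as the paper: the paper proves a general sampling claim for a single subset $V\subseteq U$ using the bound $(1-\tau)^{|S|}$ for part (\rmnum{1}) and the multiplicative Chernoff bound with exponent $\delta^2\tau|S|/3$ for part (\rmnum{2}), and then applies it with $\tau=|C^*_j|/n\geq\epsilon_1/k$ together with a union bound over the $k$ clusters (replacing $\eta$ by $\eta/k$). Your argument is the same in content, just inlined rather than factored through an auxiliary claim.
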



Lemma~\ref{lem-imp1} can be directly obtained through the following claim. We just replace $\eta$ by $\eta/k$ in Claim~\ref{cla-sample}, because we need to take the union bound over all the $k$ clusters.

\begin{claim}
\label{cla-sample}
Let $U$ be a set of elements and $V\subseteq U$ with $\frac{|V|}{|U|}=\tau>0$. Given $\eta, \delta\in(0,1)$, we uniformly select a set $S$ of elements from $U$ at random. Then we have:
\begin{itemize}
\item (\rmnum{1}) if $|S|\geq \frac{1}{\tau}\log\frac{1}{\eta}$, with probability at least $1-\eta$, $S$ contains at least one element from $V$;
\item (\rmnum{2}) if $|S|\geq\frac{3}{\delta^2\tau}\log\frac{2}{\eta}$, with probability at least $1-\eta$, we have $\big||S\cap V|-\tau |S|\big|\leq \delta\tau |S|$.
\end{itemize}
\end{claim}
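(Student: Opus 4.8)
The plan is to model the draw as $|S|$ independent, identically distributed samples from $U$, so that for each draw the event of landing in $V$ is a Bernoulli trial with success probability $\tau=|V|/|U|$; this is consistent with the notion of uniform sampling fixed in Section~\ref{sec-pre} (``independently and uniformly at random''). Concretely, I would write $X=|S\cap V|=\sum_{i=1}^{|S|}X_i$, where $X_i$ is the indicator that the $i$-th sampled element belongs to $V$. The $X_i$ are independent with common mean $\tau$, so $\mu:=\mathbb{E}[X]=\tau|S|$, and both parts reduce to tail estimates for $X$.

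For part (\rmnum{1}) no concentration machinery is needed. By independence, the probability that $S$ misses $V$ entirely is $\Pr[S\cap V=\emptyset]=(1-\tau)^{|S|}\le e^{-\tau|S|}$, using $1-x\le e^{-x}$. Substituting the hypothesis $|S|\ge\frac{1}{\tau}\log\frac{1}{\eta}$ gives $e^{-\tau|S|}\le\eta$, hence $S$ contains at least one element of $V$ with probability at least $1-\eta$.

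For part (\rmnum{2}) I would invoke the two-sided multiplicative Chernoff bound~\cite{alon2004probabilistic}: for $0<\delta<1$, $\Pr[\,|X-\mu|\ge\delta\mu\,]\le 2e^{-\mu\delta^2/3}$. Plugging in $\mu=\tau|S|$ together with the hypothesis $|S|\ge\frac{3}{\delta^2\tau}\log\frac{2}{\eta}$ yields $\mu\delta^2/3=\tau|S|\delta^2/3\ge\log\frac{2}{\eta}$, so the right-hand side is at most $\eta$. Therefore $\big|\,|S\cap V|-\tau|S|\,\big|\le\delta\tau|S|$ with probability at least $1-\eta$, which is exactly the claimed inequality; note that the constant $3$ appearing in the sample-size threshold is forced precisely by the $\delta^2/3$ in the exponent of this form of the bound.

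The only point requiring care — and the place I would pause — is the independence assumption underlying the Chernoff bound. If one insists on sampling \emph{without} replacement, the indicators $X_i$ are no longer independent; I would remark that the same concentration inequalities nevertheless hold, since the indicators are then negatively associated (sampling from a finite population), so the bounds above still apply. Alternatively, modeling the sample as drawn with replacement makes the $X_i$ literally i.i.d.\ and the argument goes through verbatim. Apart from this bookkeeping, both parts are entirely routine consequences of standard tail bounds.
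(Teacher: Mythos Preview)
Your proof is correct and matches the paper's argument essentially line for line: both parts model the sample as i.i.d.\ Bernoulli($\tau$) indicators, bound the miss probability in (\rmnum{1}) via $(1-\tau)^{|S|}\le e^{-\tau|S|}$, and apply the two-sided multiplicative Chernoff bound $2e^{-\delta^2\mu/3}$ for (\rmnum{2}). Your additional remark on negative association under sampling without replacement is a welcome clarification that the paper omits.
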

\begin{proof}
Actually, (\rmnum{1}) is a folklore result that has been presented in several papers before (such as~\cite{DX14}). Since each sampled element falls in $V$ with probability $\tau$, we know that the sample $S$ contains at least one element from $V$ with probability $1-(1-\tau)^{|S|}$. Therefore, if we want $1-(1-\tau)^{|S|}\geq 1-\eta$, $|S|$ should be at least $\frac{\log 1/\eta}{\log 1/(1-\tau)}\leq\frac{1}{\tau}\log\frac{1}{\eta}$.

(\rmnum{2}) can be proved by using the Chernoff bound~\citep{alon2004probabilistic}. Define $|S|$ random variables $\{y_1, \cdots, y_{|S|}\}$: for each $1\leq i\leq |S|$, $y_i=1$ if the $i$-th sampled element falls in $V$, otherwise, $y_i=0$. So $E[y_i]=\tau$ for each $y_i$. As a consequence, we have
\begin{align}
\textbf{Pr}\Big[\big|\sum^{|S|}_{i=1}y_i-\tau |S|\big|\leq \delta\tau|S|\Big]\geq 1-2e^{-\frac{\delta^2\tau}{3}|S|}. 
\end{align}
If $|S|\geq\frac{3}{\delta^2\tau}\log\frac{2}{\eta}$, with probability at least $1-\eta$, $\big|\sum^{|S|}_{i=1}y_i \! - \! \tau |S|\big| \! \leq \! \delta\tau|S|$ ({\em i.e.,} $\big||S\cap V| \! - \! \tau |S|\big|\leq \delta\tau |S|$).
\end{proof}

Also, we know that the expected number of outliers contained in the sample $S$ is $\frac{\epsilon_2}{k}|S|$. So we immediately have the following result by using the Markov's inequality.

\begin{lemma}
\label{lem-imp2}
Given an $(\epsilon_1, \epsilon_2)$-significant instance $P$ as described in Definition~\ref{def-sig}, we select a set $S$ of points from $P$ uniformly at random. Let $\eta\in(0,1)$. Then, with probability at least $1-\eta$, $\big|S\setminus P_{\mathtt{opt}}\big|\leq  \frac{\epsilon_2}{k\eta}|S|$.
\end{lemma}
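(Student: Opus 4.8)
The plan is to apply Markov's inequality directly to the number of sampled outliers, so the proof amounts to an expectation computation followed by a one-line tail bound. First I would introduce the nonnegative random variable $X:=\big|S\setminus P_{\mathtt{opt}}\big|$, which counts how many of the sampled points fall outside the optimal inlier set $P_{\mathtt{opt}}$ (equivalently, how many sampled points are outliers). Since each point of $S$ is drawn independently and uniformly at random from $P$, and $P$ contains exactly $z=\frac{\epsilon_2}{k}n$ outliers by Definition~\ref{def-sig}, each individual sample is an outlier with probability $\frac{z}{n}=\frac{\epsilon_2}{k}$. Hence $X$ is a sum of $|S|$ independent Bernoulli trials, and by linearity of expectation $\mathbb{E}[X]=\frac{z}{n}|S|=\frac{\epsilon_2}{k}|S|$, which is precisely the expected outlier count noted just before the statement.

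The second step converts this expectation into a high-probability tail bound. Because $X\geq 0$, Markov's inequality gives $\Pr\big[X\geq a\big]\leq \mathbb{E}[X]/a$ for any $a>0$. Choosing the threshold $a=\frac{1}{\eta}\mathbb{E}[X]=\frac{\epsilon_2}{k\eta}|S|$ yields $\Pr\big[X\geq \frac{\epsilon_2}{k\eta}|S|\big]\leq \eta$. Taking complements, with probability at least $1-\eta$ we have $X<\frac{\epsilon_2}{k\eta}|S|$, which establishes the claimed bound $\big|S\setminus P_{\mathtt{opt}}\big|\leq \frac{\epsilon_2}{k\eta}|S|$.

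Honestly there is no substantial obstacle here: the entire content is the correct choice of Markov threshold as the $\frac{1}{\eta}$-multiple of the mean, so that the failure probability matches the prescribed $\eta$. I would also remark that the argument is insensitive to whether the sampling is performed with or without replacement: even for sampling without replacement, $X$ follows a hypergeometric distribution whose mean is still $\frac{z}{n}|S|$ by linearity of expectation, so the same Markov step applies verbatim. Thus the whole proof reduces to the two-line expectation-plus-Markov computation described above.
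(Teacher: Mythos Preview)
Your proof is correct and matches the paper's approach exactly: the paper simply notes that the expected number of sampled outliers is $\frac{\epsilon_2}{k}|S|$ and invokes Markov's inequality, which is precisely the expectation-plus-Markov computation you carried out. Your additional remark about the with/without-replacement distinction is a welcome clarification but not needed for the paper's purposes.
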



\section{$k$-Center clustering with outliers}
\label{sec-kcenter}
In this section, we  focus on the problem of $k$-center clustering with outliers in Euclidean space, and the results also hold for   abstract metric space by using exactly the same idea. 

\vspace{0.05in}
\textbf{High-level idea.} The two algorithms of Section~\hyperref[sec-kc-first]{2.1} and Section~\hyperref[sec-nips-kcenter-2]{2.2} both follow the simple uniform sampling framework: take a small  sample $S$ from the input, and run an existing black-box algorithm $\mathcal{A}$ on $S$ to obtain the solution. In Algorithm~\ref{alg-kc1}, the sample size $|S|$ is relatively smaller, and thus $S$ contains only a small number of outliers, which is roughly $O(\log k)$, from $P\setminus P_{\mathtt{opt}}$. Therefore, we can run a $\big(k+O(\log k)\big)$-center clustering algorithm (as the algorithm $\mathcal{A}$) on $S$, so as to achieve a constant factor approximate solution (in terms of the radius). In Algorithm~\ref{alg-kc2}, under the assumption $\frac{\epsilon_1}{\epsilon_2}>1$, we can enlarge the sample size $|S|$ and safely output only $k$ (instead of $k+O(\log k)$) cluster centers, also by running a black-box algorithm $\mathcal{A}$. The obtained approximation ratio is $c+2$ if   $\mathcal{A}$ is a $c$-approximation algorithm with some $c\geq 1$. For example, if we apply the $3$-approximation algorithm from~\citet{charikar2001algorithms}, our Algorithm~\ref{alg-kc2} will yield a $5$-approximate solution. 


\subsection{The First Algorithm}
\label{sec-kc-first}

For ease of presentation, we let $r_{\mathtt{opt}}$ be the optimal radius of the instance $P$, {\em i.e.,} each optimal cluster $C^*_j$ is covered by a ball with radius $r_{\mathtt{opt}}$. For any point $p\in \mathbb{R}^d$ and any value $r\geq 0$, we use $\texttt{Ball}(p, r)$ to denote the ball centered at $p$ with radius $r$. 

\begin{theorem}
\label{the-kc1}
In Algorithm~\ref{alg-kc1}, the number of returned cluster centers $|H|=k+k'= k+\frac{1}{\eta}\frac{\epsilon_2}{\epsilon_1}\log \frac{k}{\eta}$. Also, with probability at least $(1-\eta)^2$, $\Delta^{-z}_{\infty}(P, H)\leq 4r_{\mathtt{opt}}$. 
\end{theorem}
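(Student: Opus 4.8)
The plan is to fix the sample size exactly at the threshold $|S| = \frac{k}{\epsilon_1}\log\frac{k}{\eta}$, so that the two implications of significance combine to control both the cluster coverage and the number of sampled outliers at once. First I would invoke Lemma~\ref{lem-imp1}(\rmnum{1}): since $|S|$ meets its threshold, with probability at least $1-\eta$ the sample $S$ intersects every optimal cluster $C^*_j$. Simultaneously, substituting this $|S|$ into Lemma~\ref{lem-imp2} gives $|S\setminus P_{\mathtt{opt}}| \leq \frac{\epsilon_2}{k\eta}|S| = \frac{1}{\eta}\frac{\epsilon_2}{\epsilon_1}\log\frac{k}{\eta} = k'$ with probability at least $1-\eta$; this is precisely why $k'$ is defined as it is, and it means the sample carries at most $k'$ outliers. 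By the union-bound discussion of Remark~\ref{rem-union}, both events hold together with probability at least $(1-\eta)^2$, and I would condition on this good event for the remainder of the argument.

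Next I would bound the ordinary (outlier-free) $(k+k')$-center cost of the sample $S$ by exhibiting an explicit feasible solution. For each $j$, let $c^*_j$ be the center of a radius-$r_{\mathtt{opt}}$ ball covering $C^*_j$; these $k$ points cover every inlier of $S$ within $r_{\mathtt{opt}}$, since each inlier lies in some $C^*_j$. Placing one additional center on each of the at most $k'$ sampled outliers then yields $k+k'$ centers covering all of $S$ within radius $r_{\mathtt{opt}}$, so the optimal $(k+k')$-center radius of $S$ is at most $r_{\mathtt{opt}}$. Running the black-box $2$-approximation $\mathcal{A}$ (e.g. Gonzalez's algorithm, whose guarantee is measured against this continuous optimum) then returns a center set $H$ with $|H| = k+k'$ such that $\texttt{dist}(s, H) \leq 2r_{\mathtt{opt}}$ for every $s \in S$.

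Finally I would lift this sample-level guarantee to the whole dataset through the ``hits every cluster'' property. Take $P' = P_{\mathtt{opt}}$, which has the required size $n-z$. For any $p \in P_{\mathtt{opt}}$, say $p \in C^*_j$, the good event supplies a sampled representative $s \in S\cap C^*_j$; since $p$ and $s$ both lie in the radius-$r_{\mathtt{opt}}$ ball covering $C^*_j$, we have $\|p-s\|\leq 2r_{\mathtt{opt}}$, and combining with $\texttt{dist}(s,H)\leq 2r_{\mathtt{opt}}$ via the triangle inequality gives $\texttt{dist}(p,H)\leq 4r_{\mathtt{opt}}$. Taking the maximum over $p\in P_{\mathtt{opt}}$ yields $\Delta^{-z}_{\infty}(P,H)\leq 4r_{\mathtt{opt}}$, while the cardinality claim $|H| = k+k'$ is immediate from the choice of $\mathcal{A}$.

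The main obstacle I anticipate is the constant bookkeeping rather than any single hard inequality: the factor $4$ must split cleanly as $2r_{\mathtt{opt}}$ from the black box plus $2r_{\mathtt{opt}}$ from bridging a sampled point to an arbitrary point of the same optimal cluster, which forces two set-up choices to be made carefully. The feasible sample solution must be built from the \emph{continuous} optimal centers $c^*_j$ (so the benchmark radius is $r_{\mathtt{opt}}$, rather than the larger discrete radius one would get from sampled representatives), and the lifting step must route through a representative in the \emph{same} cluster as $p$ so that the bridging cost is exactly one cluster diameter. The probabilistic coupling of the coverage and outlier-count events is the other delicate point, but it is already resolved by the reasoning in Remark~\ref{rem-union}.
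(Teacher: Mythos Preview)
Your proposal is correct and follows essentially the same approach as the paper's proof: condition on the joint good event from Lemma~\ref{lem-imp1}(\rmnum{1}) and Lemma~\ref{lem-imp2} via Remark~\ref{rem-union}, argue that $S$ admits a $(k+k')$-center cover of radius $r_{\mathtt{opt}}$ (the paper phrases this simply as ``$S$ can be covered by $k+k'$ balls with radius $r_{\mathtt{opt}}$'' rather than explicitly constructing the cover with outlier-centers as you do), apply the $2$-approximation to get radius $\leq 2r_{\mathtt{opt}}$ on $S$, and then bridge to any $p\in C^*_j$ through a sampled representative $p_j\in S\cap C^*_j$ using the triangle inequality for the final $4r_{\mathtt{opt}}$ bound. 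Your extra care about the continuous-vs-discrete benchmark for Gonzalez's guarantee is well placed but not something the paper dwells on.
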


\begin{remark}
\label{rem-the-kc1}
The sample size $|S|=\frac{k}{\epsilon_1}\log\frac{k}{\eta}$ depends on $k$, $\epsilon_1$, and $\eta$ only. If $\frac{\epsilon_2}{\epsilon_1}=O(1)$ and $\eta$ is assumed to be a fixed constant in $(0,1)$, $k'$ will be $O(\log k)$. Also, the runtime of the algorithm~\citep{G85} used in Step 2 is $O((k+k')|S|d) =O(\frac{k^2}{\epsilon_1}(\log k) d)$, which is independent of the input size $n$. 
\end{remark}

\begin{algorithm}[hbt!]
   \caption{\textsc{Uni-$k$-Center Outliers \Rmnum{1}}}
   \label{alg-kc1}
\begin{algorithmic}
  \STATE {\bfseries Input:} An $(\epsilon_1, \epsilon_2)$-significant instance $P$ of $k$-center clustering with $z$ outliers, and $|P|=n$; a parameter $\eta\in (0,1)$. 
   \STATE
\begin{enumerate}
\item Sample a set $S$ of $\frac{k}{\epsilon_1}\log\frac{k}{\eta}$ points uniformly at random from $P$.
\item Let $k'=\frac{1}{\eta}\frac{\epsilon_2}{k}|S|$, and solve the $(k+k')$-center clustering problem on $S$ by using the $2$-approximation algorithm~\citep{G85}. 
\end{enumerate}
  \STATE {\bfseries Output}  $H$, which is the set of $k+k'$ cluster centers returned in Step 2. 
\end{algorithmic}
\end{algorithm}


\begin{proof}(\textbf{of Theorem~\ref{the-kc1}})
%
First,  it is straightforward to know that $|H|=k+k'= k+\frac{1}{\eta}\frac{\epsilon_2}{\epsilon_1}\log \frac{k}{\eta}$. 
Below, we assume that the sample $S$ contains at least one point from each $C^*_j$, and at most $k'=\frac{\epsilon_2}{k\eta}|S|$ points from $P\setminus P_{\mathtt{opt}}$. These events happen with probability at least $(1-\eta)^2$ according to Lemma~\ref{lem-imp1} and Lemma~\ref{lem-imp2}. It is worth noting that the events  described in Lemma~\ref{lem-imp1} and Lemma~\ref{lem-imp2} are not completely independent. For example, if the event (1) of Lemma~\ref{lem-imp1} occurs, then $\big|S\setminus P_{\mathtt{opt}}\big|$ should be at most $\frac{\epsilon_2}{k\eta}(|S|-k)$ instead of $\frac{\epsilon_2}{k\eta}|S|$. But since $\frac{\epsilon_2}{k\eta}|S|>\frac{\epsilon_2}{k\eta}(|S|-k)$, we can still say ``$\big|S\setminus P_{\mathtt{opt}}\big|\leq  \frac{\epsilon_2}{k\eta}|S|$'' with probability at least $1-\eta$. So we can safely claim that the overall probability is at least $(1-\eta)^2$.

Since the sample $S$ contains at most $k'$ points from $P\setminus P_{\mathtt{opt}}$ and $P_{\mathtt{opt}}$ can be covered by $k$ balls with radius $r_{\mathtt{opt}}$, we know that $S$ can be covered by $k+k'$ balls with radius $r_{\mathtt{opt}}$. Thus, if we perform the $2$-approximation $(k+k')$-center clustering algorithm~\citep{G85} on $S$, the obtained balls should have radius no larger than $2r_{\mathtt{opt}}$. Let $H=\{h_1, \cdots, h_{k+k'}\}$ and $\mathbb{B}_S=\{\texttt{Ball}(h_l, r)\mid 1\leq l\leq k+k'\}$ be those balls covering $S$ with $r\leq 2r_{\mathtt{opt}}$. Also, for each $1\leq j\leq k$, since $S\cap C^*_j\neq\emptyset$, there exists one ball of $\mathbb{B}_S$, say $\texttt{Ball}(h_{l_j}, r)$, covers at least one point, say $p_j$, from $C^*_j$. For any point $p\in C^*_j$, we have $||p-p_j||\leq 2 r_{\mathtt{opt}}$ (by the triangle inequality) and $||p_j-h_{l_j}||\leq r\leq 2 r_{\mathtt{opt}}$; therefore, 
\begin{eqnarray}
 ||p-h_{l_j}||\leq ||p-p_j||+||p_j-h_{l_j}||\leq 4 r_{\mathtt{opt}}.
 \end{eqnarray} 
 
 \begin{figure}[h]
 \begin{center}
    \includegraphics[width=0.45\linewidth]{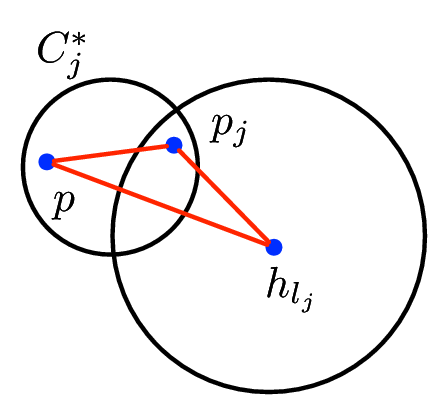}  
    \end{center}
  \vspace{-10pt}
  \caption{$||p-h_{l_j}||\leq ||p-p_j||+||p_j-h_{l_j}||\leq 4 r_{\mathtt{opt}}$.}     
   \label{fig-th1}
\end{figure}

 See Figure~\ref{fig-th1} for an illustration. Overall, $P_{\mathtt{opt}}=\cup^k_{j=1}C^*_j$ is covered by the union of the balls $\cup^{k+k'}_{l=1}\texttt{Ball}(h_l, 4 r_{\mathtt{opt}})$, {\em i.e.,} $\Delta^{-z}_{\infty}(P, H)\leq 4r_{\mathtt{opt}}$. 
 \end{proof}

\textbf{An ``extreme'' example for Theorem~\ref{the-kc1}.}  We present an example to show that the value of $k'$ in Step~2 cannot be reduced. Namely, the clustering quality could be arbitrarily bad if we run $(k+k'')$-center clustering on $S$ with $k''<k'$. Let $P$ be an $(\epsilon_1, \epsilon_2)$-significant instance in $\mathbb{R}^d$, where each optimal cluster $C^*_j$ is a set of $|C^*_j|$ overlapping points located at its cluster center $o^*_j$. Let $x>0$ and $y \gg x$. We assume (1) $||o^*_{j_1}-o^*_{j_2}||=  x$, $\forall j_1\neq j_2$; (2) $||q_1-q_2||\geq y$, $\forall q_1, q_2\in P\setminus P_{\mathtt{opt}}$; (3) $||o^*_j-q||\geq y$, $\forall 1\leq j\leq k, q\in P\setminus P_{\mathtt{opt}}$. 
Obviously, the optimal radius $r_{\mathtt{opt}}=0$. Suppose we obtain a sample $S$ satisfying $S\cap C^*_j\neq\emptyset$ for any $1\leq j\leq k$ and $|S\setminus P_{\mathtt{opt}}|=\frac{1}{\eta}\frac{\epsilon_2}{k}|S|=k'$. Given a number $k''<k'$, we run $(k+k'')$-center clustering on $S$. Since the points of $S$ take $k+k'$ distinct locations in the space,  any $(k+k'')$-center clustering on $S$ will yield a radius  at least $x/2>0$ (because $y \gg x$, it forces to select the points from $S\setminus P_{\mathtt{opt}}$ as the cluster centers, and therefore there must exist two points of $S\cap P_{\mathtt{opt}}$ falling into one cluster); thus the approximation ratio is at least  $\frac{x/2}{0}=+\infty$. 

\subsection{The Second Algorithm}
\label{sec-nips-kcenter-2}

We present the second algorithm (Algorithm~\ref{alg-kc2}) and analyze its quality in this section. 

\begin{theorem}
\label{theorem-kc2}
If $\frac{\epsilon_1}{\epsilon_2}>\frac{1}{\eta(1-\delta)}$, with probability at least $(1-\eta)^2$, Algorithm~\ref{alg-kc2} returns $k$ cluster centers that achieve a $(c+2)$-approximation for  $k$-center clustering with $z$ outliers, {\em i.e.,} $\Delta^{-z}_{\infty}(P, H)\leq (c+2)r_{\mathtt{opt}}$.
\end{theorem}
\begin{remark}
\label{rem-the-kc2}
 As an example, if we set $\eta=\delta=1/2$, the algorithm works for any instance with $\frac{\epsilon_1}{\epsilon_2}>\frac{1}{\eta(1-\delta)}=4$. Actually, as long as $\frac{\epsilon_1}{\epsilon_2}>1$ ({\em i.e.,} $\min_{1\leq j\leq k}|C^*_j|>z$), we can always find the appropriate values for $\eta$ and $\delta$ to satisfy $\frac{\epsilon_1}{\epsilon_2}>\frac{1}{\eta(1-\delta)}$, {\em e.g.,} we can set $\eta=\sqrt{\frac{\epsilon_2}{\epsilon_1}}$ and $\delta<1-\sqrt{\frac{\epsilon_2}{\epsilon_1}}$. Obviously, if $\frac{\epsilon_1}{\epsilon_2}$ is close to $1$, the success probability $(1-\eta)^2$ could be small. To boost the success probability, we repeat the algorithm multiple times and select the best one in our experiments of Section~\ref{sec-exp}. By repeatedly running the algorithm, we can achieve a constant success probability, as stated in Corollary~\ref{cor-kc2-repeat}. This also implies an important observation: \textbf{the larger the ratio $\frac{\epsilon_1}{\epsilon_2}$ is , the more effectively uniform sampling performs. }
\end{remark}

\begin{corollary}
    \label{cor-kc2-repeat}
    By executing Algorithm~\ref{alg-kc1} $O(\frac{1}{(1-\eta)^2})$ times, with constant probability, there exists at least one time where the returned $k$ centers $H$ satisfy $\Delta^{-z}_{\infty}(P, H)\leq (c+2)r_{\mathtt{opt}}$. 
\end{corollary}

\begin{proof}(\textbf{of Theorem~\ref{theorem-kc2}}) Similar to the proof of Theorem~\ref{the-kc1}, we assume that $|S\cap C^*_j|\in(1\pm\delta)\frac{|C^*_j|}{n} |S|$ for each $C^*_j$, and $S$ has at most $\hat{z}=\frac{\epsilon_2}{k\eta}|S|$ points from $P\setminus P_{\mathtt{opt}}$.
Let $\mathbb{B}_S=\{\texttt{Ball}(h_l, r)\mid 1\leq l\leq k\}$ be the set of $k$ balls returned in Step 2 of Algorithm~\ref{alg-kc2}. Since $S\cap P_{\mathtt{opt}}$ can be covered by $k$ balls with radius $r_{\mathtt{opt}}$ and $|S\setminus P_{\mathtt{opt}}|\leq \hat{z}$, the optimal radius for the instance $S$ with $\hat{z}$ outliers should be at most $r_{\mathtt{opt}}$. 
Consequently, $r\leq c r_{\mathtt{opt}}$. Moreover, we have 
\begin{eqnarray}
|S\cap C^*_j| &\geq& (1-\delta)\frac{|C^*_j|}{n} |S| \nonumber\\
&\geq& (1-\delta)\frac{\epsilon_1}{k}|S| \nonumber \\ 
& >& (1-\delta)\frac{\epsilon_2}{\eta(1-\delta)k}|S| \nonumber\\
&=& \frac{\epsilon_2}{\eta k}|S|=\hat{z} \label{for-kc2-1}
\end{eqnarray}  
for any $1\leq j\leq k$, where the last inequality comes from the assumption $\frac{\epsilon_1}{\epsilon_2}>\frac{1}{\eta(1-\delta)}$. Thus, if we perform $k$-center clustering with $\hat{z}$ outliers on $S$, the obtained $k$ balls must cover at least one point from each $C^*_j$ (since $|S\cap C^*_j|>\hat{z}$ from (\ref{for-kc2-1})). Through a similar manner in the proof of Theorem~\ref{the-kc1}, we know that $P_{\mathtt{opt}}=\cup^k_{j=1}C^*_j$ is covered by the union of the balls $\cup^{k}_{l=1}\texttt{Ball}(h_l, r+2 r_{\mathtt{opt}})$, {\em i.e.,}
\begin{eqnarray}
 \Delta^{-z}_{\infty}(P, H)\leq r+2 r_{\mathtt{opt}}\leq (c+2)r_{\mathtt{opt}}.
\end{eqnarray}
\end{proof}

\textbf{Runtime:} Similar to Algorithm~\ref{alg-kc1}, the sample size $|S|=\frac{3k}{\delta^2\epsilon_1}\log\frac{2k}{\eta}$ depends on $k$, $\epsilon_1$, and the parameters $\delta$ and $\eta$ only. 
The  runtime  depends on the complexity of the subroutine $c$-approximation algorithm used in Step~2. For example, the algorithm of~\citet{charikar2001algorithms} takes 
 $O\big(|S|^2d+k|S|^2\log |S|\big)$ 
 time in $\mathbb{R}^d$.

\begin{algorithm}[tb]
   \caption{\textsc{Uni-$k$-Center Outliers \Rmnum{2}}}
   \label{alg-kc2}
\begin{algorithmic}
  \STATE {\bfseries Input:}  An $(\epsilon_1, \epsilon_2)$-significant instance $P$ of $k$-center clustering with $z$ outliers, and $|P|=n$; two parameters $\eta,\delta\in (0,1)$. 
   \STATE
\begin{enumerate}
\item Sample a set $S$ of $\frac{3k}{\delta^2\epsilon_1}\log\frac{2k}{\eta}$ points uniformly at random from $P$.
\item Let $\hat{z}=\frac{1}{\eta}\frac{\epsilon_2}{k}|S|$, and solve the $k$-center clustering with $\hat{z}$ outliers problem on $S$ by using any $c$-approximation algorithm with $c\geq1$ ({\em e.g.,} the $3$-approximation algorithm~\citep{charikar2001algorithms}). 
\end{enumerate}
  \STATE {\bfseries Output} $H$, which is the set of $k$ cluster centers returned in Step~2.
\end{algorithmic}
\end{algorithm}

\section{$k$-Median/Means clustering with outliers}
\label{sec-kmedian}
For $k$-means clustering with outliers, 
we apply the similar uniform sampling ideas as Algorithm~\ref{alg-kc1} and~\ref{alg-kc2}. 
However, the analyses are more complicated here. For ease of understanding, we present our high-level idea first. 
Also,  we provide the extensions for  $k$-median clustering with outliers and their counterparts in arbitrary metric space in appendix.

\vspace{0.1in}
\textbf{High-level idea.} Recall $\{C^*_1, C^*_2, \cdots, C^*_k\}$ are the $k$ optimal clusters. 
 Denote by $O^*=\{o^*_1, \cdots, o^*_k\}$ the mean points of $\{C^*_1, \cdots, C^*_k\}$, respectively. 
 We  define the following transformation on $P_{\mathtt{opt}}$ to help us analyzing the clustering errors. 
For each point in $C^*_j$, we translate it to $o^*_j$; overall, we generate a new set of $n-z$ points located at $\{o^*_1, \cdots o^*_k\}$, where each $o^*_j$ has $|C^*_j|$ overlapping points. 
\textbf{(1)} For any point $p\in C^*_j$ with $1\leq j\leq k$, denote by $\tilde{p}$ its transformed point. 
\textbf{(2)} For any $U\subseteq P_{\mathtt{opt}}$, denote by $\tilde{U}$ its transformed point set. Since the transformation forms $k$ ``stars'' (see Figure~\ref{fig-key}), we call it ``\textbf{star shaped transformation}''.


\begin{figure}[h]
  \vspace{-0.05in}
\begin{center}
    \includegraphics[width=0.66\linewidth]{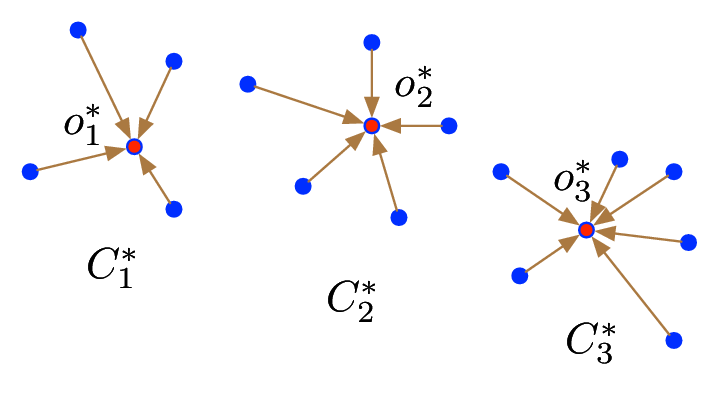}  
    \end{center}
  \vspace{-0.3in}
  \caption{The transformation from $P_{\mathtt{opt}}$ to $\tilde{P}_{\mathtt{opt}}$.}     
   \label{fig-key}
     \vspace{-0.05in}
\end{figure}


 Let $S$ be a sufficiently large random sample from $P$.  We first show that $S\cap C^*_j$ can well approximate $C^*_j$ for each $1\leq j\leq k$. Informally, 


%
%

\vspace{-0.05in}
\begin{equation}
 \hspace{-0.3in}\left.
 \begin{aligned}
  \frac{|S\cap C^*_j|}{|S|}     &\approx \frac{|C^*_j|}{n};  \\
 \frac{1}{|S  \! \cap \!  C^*_j|}\sum_{q\in S\cap C^*_j}||q \! - \! o^*_j||^2   &\approx\frac{1}{|C^*_j|}\sum_{p\in  C^*_j}||p \! - \! o^*_j||^2. 
               \end{aligned}
  \hspace{0in}\right  \}  \hspace{-0.23in} 
  \label{for-key}
\end{equation}
\vspace{-0.05in}



%
Let $S_{\mathtt{opt}}=S\cap P_{\mathtt{opt}}$. By using (\ref{for-key}), we can prove that the clustering costs of $\tilde{P}_{\mathtt{opt}}$ and $\tilde{S}_{\mathtt{opt}}$ are close (after the normalization) for any given set of cluster centers in the Euclidean space. Similar to Algorithm~\ref{alg-kc1}, we compute the $(k+k')$-means clustering on the sample $S$ in Algorithm~\ref{alg-km}, where $k'$ is roughly $O(\frac{\log k}{\xi^2})$ with a parameter $\xi\in (0,1)$. Let $H$ be the returned set of $k+k'$ cluster centers. 
Then, we can use $\tilde{S}_{\mathtt{opt}}$ and $\tilde{P}_{\mathtt{opt}}$ as the ``bridges'' to connect $S$ and $P$, so as to prove the theoretical quality guarantee of Algorithm~\ref{alg-km}. 
%

In the second algorithm (similar to  Algorithm~\ref{alg-kc2}), we run a $k$-means with $\hat{z}$ outliers algorithm on the sample $S$ and return exactly $k$ (rather than $k+k'$) cluster centers. 
Let $S_{\mathtt{in}}$ be the set of obtained $|S|-\hat{z}$ inliers of $S$. If $\frac{\epsilon_1}{\epsilon_2}>1$, we can prove that $|S_{\mathtt{in}}\cap C^*_j|\approx |S\cap C^*_j|$ for each $1\leq j\leq k$. Therefore, we can replace ``$S$'' by ``$S_{\mathtt{in}}$'' in (\ref{for-key}) and prove a similar quality guarantee. 

 We present Algorithm~\ref{alg-km} and Algorithm~\ref{alg-km2} to realize the above ideas, and provide the main theorems below. In Theorem~\ref{the-km}, $\mathcal{L}$ denotes the maximum diameter of the $k$ clusters $C^*_1, \cdots, C^*_k$, {\em i.e.,} $\mathcal{L}$$=\max_{1\leq j\leq k}\max_{p, q\in C^*_j}$ $||p-q||$. 
 Actually, our result can be viewed as an extension of the sublinear time $k$-means clustering algorithms~\citep{mishra2001sublinear,czumaj2004sublinear} (who also have the additive clustering cost errors) to the case with outliers. We need to emphasize that the additive error is unavoidable even for the case without outliers, if we require the sample complexity to be independent of the input size~\citep{mishra2001sublinear,czumaj2004sublinear}. Though the $k$-median clustering with outliers algorithm in~\citet[Section 6]{meyerson2004k} does not yield an additive error, as mentioned in Section~\ref{sec-intro}, it needs to discard more than $16z$ outliers and the sample size depends on the ratio $n/z$.

\begin{algorithm}[tb]
   \caption{\textsc{Uni-$k$-Means Outliers \Rmnum{1}}}
   \label{alg-km}
\begin{algorithmic}
  \STATE {\bfseries Input:} An $(\epsilon_1, \epsilon_2)$-significant instance $P$ of $k$-means clustering with $z$ outliers in $\mathbb{R}^d$, and $|P|=n$; three parameters $\eta, \delta, \xi\in (0,1)$. 
   \STATE
\begin{enumerate}
\item Take a uniform sample $S$ of $\max\{\frac{3k}{\delta^2\epsilon_1}\log\frac{2k}{\eta},$ $ \frac{k}{2\xi^2\epsilon_1(1-\delta)}\log\frac{2k}{\eta}\}$ points   from $P$.
\item Let $k'=\frac{1}{\eta}\frac{\epsilon_2}{k}|S|$, and solve the $(k+k')$-means clustering on $S$ by using any $c$-approximation algorithm with $c\geq 1$.  
\end{enumerate}
  \STATE {\bfseries Output} $H$, which is the set of $k+k'$ cluster centers returned in Step 2. 
\end{algorithmic}
\end{algorithm}

\begin{theorem}
\label{the-km}
Let $0<\delta, \eta, \xi<1$. 
With probability at least $(1-\eta)^3$, the set of cluster centers $H$ returned by Algorithm~\ref{alg-km} yields a clustering cost $\Delta^{-z}_{2}(P, H)\leq\alpha \Delta^{-z}_{2}(P, O^*)+\beta\xi \mathcal{L}^2$, where $\alpha=\big(2+(4+4c)\frac{1+\delta}{1-\delta}\big)$ and $\beta=(4+4c)\frac{1+\delta}{1-\delta}$.
\end{theorem}

\begin{remark}
\label{rem-km}
\textbf{(\rmnum{1})} In Step 2 of Algorithm~\ref{alg-km}, we can apply an $O(1)$-approximation $k$-means algorithm ({\em e.g.,} \citep{kanungo2004local}). If we assume $1/\delta$ and $1/\eta$ are fixed constants, 
then  the sample size $|S|=O( \frac{k}{\xi^2\epsilon_1}\log k)$, and both the factors $\alpha$ and $\beta$ are $O(1)$, {\em i.e.,} 
\begin{eqnarray}
\Delta^{-z}_{2}(P, H)\leq O(1)\cdot\Delta^{-z}_{2}(P, O^*) + O(\xi)\cdot \mathcal{L}^2. 
\end{eqnarray}
The additive error $O(\xi)\cdot \mathcal{L}^2$ converges to $0$ when $\xi$ goes to $0$. Moreover, the number of returned cluster centers $|H|=k+O(\frac{\log k}{\xi^2})$ if $\frac{\epsilon_1}{\epsilon_2}=\Omega(1)$. 

\textbf{(\rmnum{2})} It is also easy to see that the extreme example proposed at the end of Section~\hyperref[sec-kc-first]{2.1} also works for Algorithm~\ref{alg-km}, {\em i.e.,}  the value of $k'$ cannot be reduced. 
 \end{remark}

Before proving Theorem~\ref{the-km}, we need to introduce the following lemmas.

\begin{lemma}
\label{lem-km-sample1}
We fix a cluster $C^*_j$. Given $\eta, \xi\in (0,1)$, if one uniformly selects a set $T$ of $\frac{1}{2\xi^2}\log\frac{2}{\eta}$ or more points at random from $C^*_j$, then
\begin{align}
\bigg|\frac{1}{|T|}\sum_{q\in T}||q-o^*_j||^2-\frac{1}{|C^*_j|}\sum_{p\in C^*_j}||p-o^*_j||^2\bigg|\leq \xi \mathcal{L}^2
\end{align}
with probability at least $1-\eta$.
\end{lemma}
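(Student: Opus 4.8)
The plan is to interpret each sampled squared distance as a bounded random variable and apply Hoeffding's inequality, with the constants chosen so that the threshold $\xi\mathcal{L}^2$ and the stated sample size line up exactly. First I would record the boundedness fact: since the mean point $o^*_j=\frac{1}{|C^*_j|}\sum_{p\in C^*_j}p$ is a convex combination of the points of $C^*_j$, the triangle inequality gives, for every $q\in C^*_j$,
\[
\|q-o^*_j\|=\left\|\frac{1}{|C^*_j|}\sum_{p\in C^*_j}(q-p)\right\|\leq \frac{1}{|C^*_j|}\sum_{p\in C^*_j}\|q-p\|\leq \mathcal{L},
\]
where the last step uses the definition $\mathcal{L}=\max_{1\leq j\leq k}\max_{p,q\in C^*_j}\|p-q\|$. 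Consequently $\|q-o^*_j\|^2\in[0,\mathcal{L}^2]$ for every $q\in C^*_j$.

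Next, writing $m=|T|$ and letting $q_1,\ldots,q_m$ denote the sampled points, I would set $X_i=\|q_i-o^*_j\|^2$. Each $X_i$ lies in $[0,\mathcal{L}^2]$, and because each draw is marginally uniform over $C^*_j$, we have $\mathbb{E}[X_i]=\frac{1}{|C^*_j|}\sum_{p\in C^*_j}\|p-o^*_j\|^2$, which is precisely the population average appearing in the lemma. Thus the left-hand side of the claimed inequality is exactly the deviation of the empirical mean $\frac{1}{m}\sum_i X_i$ from its expectation.

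Applying Hoeffding's inequality~\cite{alon2004probabilistic} to the bounded variables $X_1,\ldots,X_m$ (the bound is valid whether the sample is taken with or without replacement, since the without-replacement case is at least as concentrated), I obtain
\[
\Pr\!\left[\,\left|\frac{1}{m}\sum_{i=1}^m X_i-\frac{1}{|C^*_j|}\sum_{p\in C^*_j}\|p-o^*_j\|^2\right|\geq \xi\mathcal{L}^2\right]\leq 2\exp\!\left(-\frac{2m(\xi\mathcal{L}^2)^2}{\mathcal{L}^4}\right)=2\exp(-2m\xi^2).
\]
Substituting the hypothesis $m\geq\frac{1}{2\xi^2}\log\frac{2}{\eta}$ makes the right-hand side at most $\eta$, so the desired bound holds with probability at least $1-\eta$.

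I do not expect a serious obstacle; the only step needing genuine care is the boundedness claim $\|q-o^*_j\|\leq\mathcal{L}$, which relies on $o^*_j$ being a convex combination of the cluster's points so that its distance to any point of $C^*_j$ is controlled by the maximum pairwise distance. Everything else is a direct instantiation of a standard concentration inequality, and the arithmetic is arranged so that the failure probability collapses exactly to the stated sample-size condition.
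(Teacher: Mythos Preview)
Your proposal is correct and follows essentially the same approach as the paper: the paper simply remarks that each $\|q-o^*_j\|^2$ is a random variable in $[0,\mathcal{L}^2]$ and invokes Hoeffding's inequality, which is exactly what you carry out in detail (including the convexity argument for the boundedness, which the paper leaves implicit).
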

Lemma~\ref{lem-km-sample1} can be obtained via the Hoeffding's inequality (each $||q-o^*_j||^2$ can be viewed as a random variable between $0$ and $\mathcal{L}^2$)~\citep{alon2004probabilistic}.

\begin{lemma}
\label{lem-km-sample2}
If one uniformly selects a set $S$ of $\max\{\frac{3k}{\delta^2\epsilon_1}\log\frac{2k}{\eta}, $ $\frac{k}{2\xi^2\epsilon_1(1-\delta)}\log\frac{2k}{\eta}\}$ points at random from $P$, then
\begin{align}
\sum_{q\in S\cap C^*_j}&||q-o^*_j||^2\nonumber\\
&\leq (1+\delta)\frac{|S|}{n}\big(\sum_{p\in C^*_j}||p-o^*_j||^2+\xi |C^*_j| \mathcal{L}^2\big)
\end{align}
for each $1\leq j\leq k$, with probability at least $(1-\eta)^2$.
\end{lemma}
\begin{proof}
Suppose 
$$|S|=\max\{\frac{3k}{\delta^2\epsilon_1}\log\frac{2k}{\eta}, \frac{k}{2\xi^2\epsilon_1(1-\delta)}\log\frac{2k}{\eta}\}.$$
According to Lemma~\ref{lem-imp1}, $|S|\geq\frac{3k}{\delta^2\epsilon_1}\log\frac{2k}{\eta}$ implies  
\begin{eqnarray}
|S\cap C^*_j|\geq (1-\delta)\frac{|C^*_j|}{n}|S|\geq (1-\delta)\frac{\epsilon_1}{k}|S| \label{for-lem-km-sample3-2}
\end{eqnarray}
for each $1\leq j\leq k$, with probability at least $1-\eta$. Below, we assume (\ref{for-lem-km-sample3-2}) occurs. Further, $|S|\geq\frac{k}{2\xi^2\epsilon_1(1-\delta)}\log\frac{2k}{\eta}$ implies 
\begin{eqnarray}
(1-\delta)\frac{\epsilon_1}{k}|S|\geq \frac{1}{2\xi^2}\log\frac{2k}{\eta}. \label{for-lem-km-sample3-3}
\end{eqnarray}
Combining (\ref{for-lem-km-sample3-2}) and (\ref{for-lem-km-sample3-3}), we have $|S\cap C^*_j|\geq\frac{1}{2\xi^2}\log\frac{2k}{\eta}$. Consequently, through Lemma~\ref{lem-km-sample1} we obtain
\begin{align}
\bigg|\frac{1}{|S\cap C^*_j|}\sum_{q\in S\cap C^*_j}||q-o^*_j||^2-\frac{1}{|C^*_j|}\sum_{p\in C^*_j}||p-o^*_j||^2\bigg|\leq \xi\mathcal{L}^2 \label{for-km-sample3-1}
\end{align}
for each $1\leq j\leq k$, with probability at least $1-\eta$ ($\eta$ is replaced by $\eta/k$ in  Lemma~\ref{lem-km-sample1} for taking the union bound). From (\ref{for-km-sample3-1}) we have
\begin{align}
&\sum_{q\in S\cap C^*_j}||q-o^*_j||^2\leq |S\cap C^*_j|\big(\frac{1}{|C^*_j|}\sum_{p\in C^*_j}||p-o^*_j||^2+\xi \mathcal{L}^2\big)\nonumber\\
&\leq(1+\delta)\frac{|C^*_j|}{n}|S|\big(\frac{1}{|C^*_j|}\sum_{p\in C^*_j}||p-o^*_j||^2+\xi \mathcal{L}^2\big)\nonumber\\
&=(1+\delta)\frac{|S|}{n}\big(\sum_{p\in C^*_j}||p-o^*_j||^2+\xi |C^*_j| \mathcal{L}^2\big),
\end{align}
where the second inequality comes from Lemma~\ref{lem-imp1}. The overall success probability $(1-\eta)^2$ comes from the success probabilities of (\ref{for-lem-km-sample3-2}) and (\ref{for-km-sample3-1}). So we complete the proof.
\end{proof}


For ease of presentation, we define a new notation that is used in the following lemmas. Given two point sets $X$ and $Y\subset\mathbb{R}^d$, we use $\texttt{\bf{Cost}}(X, Y)$ to denote the clustering cost of $X$ by taking $Y$ as the cluster centers, {\em i.e.,} $\texttt{\bf{Cost}}(X, Y)=\sum_{q\in X}(\texttt{dist}(q, Y))^2$. Obviously, $\Delta^{-z}_2(P, H)=\frac{1}{n-z}\texttt{\bf{Cost}}(P_{\mathtt{opt}}, H)$. Let $S_{\mathtt{opt}}=S\cap P_{\mathtt{opt}}$. 
Below, we prove the upper bounds of $\texttt{\bf{Cost}}(S_{\mathtt{opt}}, O^*)$, $\texttt{\bf{Cost}}(\tilde{S}_{\mathtt{opt}}, H)$, and $\texttt{\bf{Cost}}(\tilde{P}_{\mathtt{opt}}, H)$ respectively, and use these bounds to complete the proof of Theorem~\ref{the-km}. For convenience, we always assume that the events described in Lemma~\ref{lem-imp1}, Lemma~\ref{lem-imp2}, and Lemma~\ref{lem-km-sample2} all occur, so that we do not need to repeatedly state the success probabilities. 


\begin{lemma}
\label{lem-km-1}
$\texttt{\bf{Cost}}(S_{\mathtt{opt}}, O^*)\leq (1+\delta)\frac{|S|}{n}(n-z)\big(\Delta^{-z}_2(P, O^*)+\xi  \mathcal{L}^2\big)$.
\end{lemma}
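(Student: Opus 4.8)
The plan is to bound $\texttt{\bf{Cost}}(S_{\mathtt{opt}}, O^*)$ by decomposing it over the $k$ optimal clusters and applying Lemma~\ref{lem-km-sample3} to each piece. First I would write
\[
\texttt{\bf{Cost}}(S_{\mathtt{opt}}, O^*)=\sum_{q\in S_{\mathtt{opt}}}\big(\texttt{dist}(q, O^*)\big)^2\leq\sum_{j=1}^{k}\sum_{q\in S\cap C^*_j}||q-o^*_j||^2,
\]
where the inequality holds because for any $q\in S\cap C^*_j$ we have $\texttt{dist}(q, O^*)\leq ||q-o^*_j||$ (since $o^*_j\in O^*$), and because $S_{\mathtt{opt}}=S\cap P_{\mathtt{opt}}=\bigcup_{j=1}^k(S\cap C^*_j)$ is partitioned by the clusters.

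Next I would apply Lemma~\ref{lem-km-sample3} to each inner sum, which gives
\[
\sum_{q\in S\cap C^*_j}||q-o^*_j||^2\leq(1+\delta)\frac{|S|}{n}\Big(\sum_{p\in C^*_j}||p-o^*_j||^2+\xi|C^*_j|\mathcal{L}^2\Big),
\]
and then sum over $j=1,\dots,k$. Pulling out the common factor $(1+\delta)\frac{|S|}{n}$, the summed right-hand side becomes $(1+\delta)\frac{|S|}{n}\big(\sum_{j=1}^k\sum_{p\in C^*_j}||p-o^*_j||^2+\xi\sum_{j=1}^k|C^*_j|\mathcal{L}^2\big)$. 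The first double sum is exactly $\texttt{\bf{Cost}}(P_{\mathtt{opt}}, O^*)$, and since $\sum_{j=1}^k|C^*_j|=|P_{\mathtt{opt}}|=n-z$, the second term equals $\xi(n-z)\mathcal{L}^2$.

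Finally I would translate back to the normalized objective using the identity $\texttt{\bf{Cost}}(P_{\mathtt{opt}}, O^*)=(n-z)\Delta^{-z}_2(P, O^*)$, which is noted just before the lemma (recall $\Delta^{-z}_2(P,H)=\frac{1}{n-z}\texttt{\bf{Cost}}(P_{\mathtt{opt}},H)$, and $O^*$ is the optimal set of centers so $P_{\mathtt{opt}}$ is indeed the optimal inlier set for $O^*$). Substituting gives
\[
\texttt{\bf{Cost}}(S_{\mathtt{opt}}, O^*)\leq(1+\delta)\frac{|S|}{n}\big((n-z)\Delta^{-z}_2(P, O^*)+\xi(n-z)\mathcal{L}^2\big)=(1+\delta)\frac{|S|}{n}(n-z)\big(\Delta^{-z}_2(P, O^*)+\xi\mathcal{L}^2\big),
\]
as claimed. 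Since I may assume the events of Lemma~\ref{lem-imp1}, Lemma~\ref{lem-imp2}, and Lemma~\ref{lem-km-sample3} all hold (as stated in the paragraph preceding the lemma), no probability bookkeeping is needed here. The only mildly subtle step is the very first inequality: one must be careful that $S_{\mathtt{opt}}$ restricted to cluster $C^*_j$ contributes at most $||q-o^*_j||^2$ rather than the true squared distance to the nearest center in $O^*$; this is a safe over-estimate and is where the bound is (slightly) loosened, but it is exactly what makes the clean per-cluster application of Lemma~\ref{lem-km-sample3} possible, so I do not expect any real obstacle.
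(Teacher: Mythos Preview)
Your proposal is correct and follows essentially the same approach as the paper: decompose $\texttt{\bf{Cost}}(S_{\mathtt{opt}},O^*)$ cluster by cluster, apply Lemma~\ref{lem-km-sample3} to each $S\cap C^*_j$, and then use $\sum_j\sum_{p\in C^*_j}\|p-o^*_j\|^2=(n-z)\Delta^{-z}_2(P,O^*)$ together with $\sum_j|C^*_j|=n-z$. The only cosmetic difference is that the paper writes the first step as an equality (since in the optimal clustering each $q\in C^*_j$ has $o^*_j$ as its nearest center in $O^*$), whereas you use the safe inequality $\texttt{dist}(q,O^*)\le\|q-o^*_j\|$; either way the bound goes through unchanged.
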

\begin{proof}
First, we have 
\begin{align}
\texttt{\bf{Cost}}&(S_{\mathtt{opt}}, O^*)\!=\!\sum^k_{j=1}\sum_{q\in S\cap C^*_j}||q-o^*_j||^2\nonumber\\
&\!\leq\!(1\!+\!\delta)\frac{|S|}{n}\sum^k_{j=1}\big(\sum_{p\in C^*_j}||p-o^*_j||^2\!+\!\xi |C^*_j| \mathcal{L}^2\big)\label{for-lem-km-1-1}
\end{align} 
by Lemma~\ref{lem-km-sample2}. Further, since $\sum^k_{j=1}\sum_{p\in C^*_j}||p-o^*_j||^2=(n-z)\Delta^{-z}_2(P, O^*)$ and $\sum^k_{j=1}|C^*_j|=n-z$, by plugging them into (\ref{for-lem-km-1-1}), we obtain Lemma~\ref{lem-km-1}.
%
\end{proof}

\begin{lemma}
\label{lem-km-2}
$\texttt{\bf{Cost}}(\tilde{S}_{\mathtt{opt}}, H)\leq (2+2c)\cdot\texttt{\bf{Cost}}(S_{\mathtt{opt}}, O^*)$.
\end{lemma}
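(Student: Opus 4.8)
The plan is to bound $\texttt{\bf{Cost}}(\tilde{S}_{\mathtt{opt}}, H)$ by splitting it, via the (relaxed) triangle inequality, into a piece measuring the cost of $S_{\mathtt{opt}}$ under the optimal means $O^*$ and a piece measuring the cost of $S_{\mathtt{opt}}$ under $H$, and then to control each of the two pieces separately by $\texttt{\bf{Cost}}(S_{\mathtt{opt}}, O^*)$.

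First I would invoke the star shaped transformation: for each $q\in S\cap C^*_j$ we have $\tilde{q}=o^*_j$, so $\texttt{dist}(\tilde{q}, H)=\texttt{dist}(o^*_j, H)$. Letting $h_q\in H$ be a center nearest to $q$, the triangle inequality gives $\texttt{dist}(o^*_j, H)\leq\|o^*_j-h_q\|\leq\|o^*_j-q\|+\texttt{dist}(q, H)$, and combining this with the elementary bound $(a+b)^2\leq 2a^2+2b^2$ yields $\texttt{dist}(\tilde{q}, H)^2\leq 2\|q-o^*_j\|^2+2\,\texttt{dist}(q, H)^2$. Summing over all $j$ and all $q\in S\cap C^*_j$ (and recalling $\texttt{\bf{Cost}}(S_{\mathtt{opt}}, O^*)=\sum_{j=1}^{k}\sum_{q\in S\cap C^*_j}\|q-o^*_j\|^2$) produces
\[
\texttt{\bf{Cost}}(\tilde{S}_{\mathtt{opt}}, H)\leq 2\,\texttt{\bf{Cost}}(S_{\mathtt{opt}}, O^*)+2\,\texttt{\bf{Cost}}(S_{\mathtt{opt}}, H).
\]

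The remaining task is to show $\texttt{\bf{Cost}}(S_{\mathtt{opt}}, H)\leq c\cdot\texttt{\bf{Cost}}(S_{\mathtt{opt}}, O^*)$. Since $S_{\mathtt{opt}}\subseteq S$ we have $\texttt{\bf{Cost}}(S_{\mathtt{opt}}, H)\leq\texttt{\bf{Cost}}(S, H)$, and since $H$ is a $c$-approximate $(k+k')$-means solution on $S$, it suffices to exhibit a feasible set of $k+k'$ centers on $S$ whose cost is at most $\texttt{\bf{Cost}}(S_{\mathtt{opt}}, O^*)$. Here is the key construction: take the $k$ means $O^*$ together with the sampled outliers $S\setminus P_{\mathtt{opt}}$ as singleton centers. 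This is legitimate because Lemma~\ref{lem-imp2} guarantees $|S\setminus P_{\mathtt{opt}}|\leq k'$, so the center set has at most $k+k'$ points. Under it every sampled outlier contributes zero, while each inlier $q\in S\cap C^*_j$ contributes at most $\|q-o^*_j\|^2$; hence the total cost is at most $\texttt{\bf{Cost}}(S_{\mathtt{opt}}, O^*)$. The $c$-approximation guarantee then gives $\texttt{\bf{Cost}}(S, H)\leq c\cdot\texttt{\bf{Cost}}(S_{\mathtt{opt}}, O^*)$.

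Finally, substituting $\texttt{\bf{Cost}}(S_{\mathtt{opt}}, H)\leq c\cdot\texttt{\bf{Cost}}(S_{\mathtt{opt}}, O^*)$ into the displayed inequality gives $\texttt{\bf{Cost}}(\tilde{S}_{\mathtt{opt}}, H)\leq(2+2c)\,\texttt{\bf{Cost}}(S_{\mathtt{opt}}, O^*)$, as claimed. The one non-routine step I anticipate is the feasible-solution construction: the trick of spending the $k'$ spare centers to absorb the sampled outliers for free is precisely what makes the $(k+k')$-means optimum on $S$ controllable purely by the inlier cost $\texttt{\bf{Cost}}(S_{\mathtt{opt}}, O^*)$, thereby avoiding any dependence on how far the (possibly arbitrarily placed) outliers lie. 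Everything else is just the standard relaxed triangle inequality for squared Euclidean distances.
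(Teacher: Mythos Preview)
Your proof is correct and follows essentially the same approach as the paper: first split $\texttt{\bf{Cost}}(\tilde{S}_{\mathtt{opt}}, H)$ via the relaxed triangle inequality into $2\,\texttt{\bf{Cost}}(S_{\mathtt{opt}}, O^*)+2\,\texttt{\bf{Cost}}(S_{\mathtt{opt}}, H)$, then bound $\texttt{\bf{Cost}}(S_{\mathtt{opt}}, H)\le\texttt{\bf{Cost}}(S,H)\le c\cdot W$ by exhibiting a feasible $(k+k')$-center set on $S$ that pays nothing for the sampled outliers. The only cosmetic difference is the choice of feasible set: the paper takes $O^*$ together with the $k'$ farthest points of $S$ from $O^*$ and then argues $(|S|-k')\Delta^{-k'}_2(S,O^*)\le\texttt{\bf{Cost}}(S_{\mathtt{opt}},O^*)$, whereas you take $O^*$ together with the actual sampled outliers $S\setminus P_{\mathtt{opt}}$ directly; your construction is slightly more direct and avoids the extra inequality, but the two arguments are interchangeable.
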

\begin{proof}
We fix a point $q\in S_{\mathtt{opt}}$, and assume that the nearest neighbors of $q$ and $\tilde{q}$ in $H$ are $h_{j_q}$ and $h_{\tilde{j}_q}$, respectively. Then, we have
\begin{align}
||\tilde{q}-h_{\tilde{j}_q}||^2&\leq ||\tilde{q}-h_{j_q}||^2\nonumber\\
&\leq 2||\tilde{q}-q||^2+2||q-h_{j_q}||^2.\label{for-km-2-1}
\end{align}
Therefore,
{\small
\begin{align}
\sum_{q\in S_{\mathtt{opt}}}||\tilde{q} \! - \! h_{\tilde{j}_q}||^2& \! \leq \!  2 \! \sum_{q\in S_{\mathtt{opt}}}||\tilde{q} \! - \! q||^2 \! + \! 2 \! \sum_{q\in S_{\mathtt{opt}}}||q \! - \! h_{j_q}||^2,\nonumber\\
\implies \texttt{\bf{Cost}}(\tilde{S}_{\mathtt{opt}}, H)&\leq 2\texttt{\bf{Cost}}(S_{\mathtt{opt}}, O^*)+2\texttt{\bf{Cost}}(S_{\mathtt{opt}}, H).\label{for-km-2-2}
\end{align} 
}
\normalsize
Moreover, since $S_{\mathtt{opt}}\subseteq S$ (because $S_{\mathtt{opt}}=S\cap P_{\mathtt{opt}}$) and $H$ yields a $c$-approximate clustering cost of the $(k+k')$-means clustering on $S$, we have 
\begin{align}
\texttt{\bf{Cost}}(S_{\mathtt{opt}}, H)\leq \texttt{\bf{Cost}}(S, H)\leq c\cdot W,\label{for-km-2-3}
\end{align}
where $W$ is the optimal clustering cost of $(k+k')$-means clustering on $S$. Let $S'$ be the $k'$ farthest points of $S$ to $O^*$, then the set $O^*\cup S'$ also  
forms a solution for $(k+k')$-means clustering on $S$; namely, $S$ is partitioned into $k+k'$ clusters where each point of $S'$ is a cluster having a single point. Obviously, such a clustering yields a clustering cost $(|S|-k')\Delta^{-k'}_2(S, O^*)$. Consequently,
\begin{align}
W\leq (|S|-k')\Delta^{-k'}_2(S, O^*).\label{for-km-2-4} 
\end{align}
Also, Lemma~\ref{lem-imp2}  shows that $S$ contains at most $k'$ points from $P\setminus P_{\mathtt{opt}}$, {\em i.e.,} $|S_{\mathtt{opt}}|\geq |S|-k'$. Thus, 
\begin{eqnarray}
\texttt{\bf{Cost}}(S_{\mathtt{opt}}, O^*)\geq (|S|-k')\Delta^{-k'}_2(S, O^*). 
\end{eqnarray}
Together with (\ref{for-km-2-2}), (\ref{for-km-2-3}), and (\ref{for-km-2-4}), we have $\texttt{\bf{Cost}}(\tilde{S}_{\mathtt{opt}}, H)\leq (2+2c)\cdot\texttt{\bf{Cost}}(S_{\mathtt{opt}}, O^*)$.
\end{proof}

\begin{lemma}
\label{lem-km-3}
$\texttt{\bf{Cost}}(\tilde{P}_{\mathtt{opt}}, H)\leq \frac{1}{1-\delta}\frac{n}{|S|}\texttt{\bf{Cost}}(\tilde{S}_{\mathtt{opt}}, H)$.
\end{lemma}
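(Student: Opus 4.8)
The plan is to exploit the special structure of the star shaped transformation: both $\tilde{P}_{\mathtt{opt}}$ and $\tilde{S}_{\mathtt{opt}}$ consist \emph{entirely} of copies of the $k$ mean points $o^*_1, \ldots, o^*_k$, so their clustering costs collapse into weighted sums indexed by $j$. First I would record the two decompositions. Since every point of $C^*_j$ is translated to $o^*_j$, the set $\tilde{P}_{\mathtt{opt}}$ contains exactly $|C^*_j|$ copies of $o^*_j$, and likewise $\tilde{S}_{\mathtt{opt}}$ contains exactly $|S\cap C^*_j|$ copies of $o^*_j$. Hence
\begin{align}
\texttt{\bf{Cost}}(\tilde{P}_{\mathtt{opt}}, H)&=\sum_{j=1}^k |C^*_j|\,\big(\texttt{dist}(o^*_j, H)\big)^2, \nonumber\\
\texttt{\bf{Cost}}(\tilde{S}_{\mathtt{opt}}, H)&=\sum_{j=1}^k |S\cap C^*_j|\,\big(\texttt{dist}(o^*_j, H)\big)^2. \nonumber
\end{align}

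The heart of the argument is then a term-by-term comparison of these two sums. Writing $a_j:=\big(\texttt{dist}(o^*_j, H)\big)^2\geq 0$, I would invoke Lemma~\ref{lem-imp1}~(\rmnum{2}), whose event is assumed to hold throughout this portion of the analysis. It gives $|S\cap C^*_j|\geq (1-\delta)\frac{|C^*_j|}{n}|S|$ for every $j$, which rearranges to
\begin{align}
|C^*_j|\leq \frac{1}{1-\delta}\frac{n}{|S|}\,|S\cap C^*_j|. \nonumber
\end{align}

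Finally I would substitute this size bound into the expression for $\texttt{\bf{Cost}}(\tilde{P}_{\mathtt{opt}}, H)$. Because each $a_j$ is nonnegative, the inequality is preserved in every summand, and so
\begin{align}
\texttt{\bf{Cost}}(\tilde{P}_{\mathtt{opt}}, H)=\sum_{j=1}^k |C^*_j|\,a_j\leq \frac{1}{1-\delta}\frac{n}{|S|}\sum_{j=1}^k |S\cap C^*_j|\,a_j=\frac{1}{1-\delta}\frac{n}{|S|}\,\texttt{\bf{Cost}}(\tilde{S}_{\mathtt{opt}}, H), \nonumber
\end{align}
which is exactly the claim. There is no genuine obstacle here; the only points requiring care are (i) using the one-sided lower bound on $|S\cap C^*_j|$ from Lemma~\ref{lem-imp1}~(\rmnum{2}) rather than its two-sided form, and (ii) noting that it is precisely the nonnegativity of the weights $a_j$ that lets the per-cluster cardinality comparison lift to a comparison of the full aggregate costs.
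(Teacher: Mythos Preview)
Your proposal is correct and mirrors the paper's own proof essentially line for line: both write $\texttt{\bf{Cost}}(\tilde{P}_{\mathtt{opt}}, H)$ and $\texttt{\bf{Cost}}(\tilde{S}_{\mathtt{opt}}, H)$ as weighted sums over $\{o^*_1,\dots,o^*_k\}$, invoke the lower bound $|S\cap C^*_j|\geq (1-\delta)\frac{|C^*_j|}{n}|S|$ from Lemma~\ref{lem-imp1}, and conclude by the termwise comparison. There is nothing to add.
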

\begin{proof}
From the constructions of $\tilde{P}_{\mathtt{opt}}$ and $\tilde{S}_{\mathtt{opt}}$, we know that they are overlapping points locating at $\{o^*_1, \cdots, o^*_k\}$. From Lemma~\ref{lem-imp1}, we know $|S\cap C^*_j|\geq (1-\delta)\frac{|C^*_j|}{n}|S|$, {\em i.e.,} 
\begin{eqnarray}
|C^*_j|\leq \frac{1}{1-\delta}\frac{n}{|S|}|S\cap C^*_j| \text{ for $1\leq j\leq k$.} 
\end{eqnarray}
Overall, we have 
$\texttt{\bf{Cost}}(\tilde{P}_{\mathtt{opt}}, H)=\sum^k_{j=1}|C^*_j|\big(\texttt{dist}(o^*_j, H)\big)^2$ that is at most 
\begin{align}
 && \frac{1}{1-\delta}\frac{n}{|S|}\sum^k_{j=1}|S\cap C^*_j|\big(\texttt{dist}(o^*_j, H)\big)^2\nonumber\\
 && =  \frac{1}{1-\delta}\frac{n}{|S|}\texttt{\bf{Cost}}(\tilde{S}_{\mathtt{opt}}, H).\nonumber
 \end{align}
%
%
%
%
\end{proof}
Now, we are ready to prove Theorem~\ref{the-km}. 

\begin{proof}(\textbf{of Theorem~\ref{the-km}}) Note that $(n-z)\Delta^{-z}_2(P, H)$ actually is the $|H|$-means clustering cost of $P$ by removing the farthest $z$ points to $H$, and $|P_{\mathtt{opt}}|=n-z$. So we have 
\begin{eqnarray}
(n-z)\Delta^{-z}_2(P, H)\leq \texttt{\bf{Cost}}(P_{\mathtt{opt}}, H). \label{for-the-km-sig-1}
\end{eqnarray}
Further, by using a similar manner of (\ref{for-km-2-2}), we have $\texttt{\bf{Cost}}(P_{\mathtt{opt}}, H)\leq 2\texttt{\bf{Cost}}(P_{\mathtt{opt}}, O^*)+2\texttt{\bf{Cost}}(\tilde{P}_{\mathtt{opt}}, H)$.
 Therefore,
{\small
\begin{align}
&\Delta^{-z}_2(P, H) \leq \frac{1}{n-z}\texttt{\bf{Cost}}(P_{\mathtt{opt}}, H)    \\
&\leq \frac{2}{n-z}\Big(\texttt{\bf{Cost}}(P_{\mathtt{opt}}, O^*)+\texttt{\bf{Cost}}(\tilde{P}_{\mathtt{opt}}, H)\Big).\label{for-the-km-f4-1}
\end{align}
Moreover, 
\begin{align}
\texttt{\bf{Cost}}(\tilde{P}_{\mathtt{opt}}, H)&\leq  \frac{1}{1-\delta}\frac{n}{|S|}\texttt{\bf{Cost}}(\tilde{S}_{\mathtt{opt}}, H)  
\label{for-the-km-f1}
\\
&\leq \frac{1}{1-\delta}\frac{n}{|S|}(2+2c)\texttt{\bf{Cost}}(S_{\mathtt{opt}}, O^*) 
\label{for-the-km-f2}
\\
&\leq   \! \frac{1 \! + \! \delta}{1 \! - \! \delta}(2 \! + \! 2c)(n \! - \! z)\big(\Delta^{-z}_2(P, O^*) \! + \! \xi  \mathcal{L}^2\big),
\label{for-the-km-f3} 
\end{align}
}
where (\ref{for-the-km-f1}), (\ref{for-the-km-f2}), and (\ref{for-the-km-f3}) come from Lemma~\ref{lem-km-3}, Lemma~\ref{lem-km-2}, and Lemma~\ref{lem-km-1} respectively.
From the fact $ \texttt{\bf{Cost}}(P_{\mathtt{opt}}, O^*)=(n-z)\Delta^{-z}_2(P, O^*)$, (\ref{for-the-km-f4-1}) and (\ref{for-the-km-f3}) imply $\Delta^{-z}_2(P, H)\leq\big(2 \! + \! (4 \! + \! 4c \! )\frac{1 \! + \! \delta}{1 \! - \! \delta}\big)\Delta^{-z}_{2}(P \!,\! O^*)\!+ \! (4 \! + \! 4c)\frac{1 \! + \! \delta}{1 \! - \! \delta}\xi \mathcal{L}^2$.

The success probability $(1-\eta)^3$ comes from Lemma~\ref{lem-km-sample2} and Lemma~\ref{lem-imp2}  (note that Lemma~\ref{lem-km-sample2} already takes into account of the success probability of Lemma~\ref{lem-imp1} ). Thus, we complete the proof of Theorem~\ref{the-km}.
\end{proof}

 \begin{algorithm}[hbt!]
   \caption{\textsc{Uni-$k$-Means Outliers \Rmnum{2}}}
   \label{alg-km2}
\begin{algorithmic}
  \STATE {\bfseries Input:} An $(\epsilon_1, \epsilon_2)$-significant instance $P$ of $k$-means clustering with $z$ outliers in $\mathbb{R}^d$, and $|P|=n$; three parameters $\eta, \delta, \xi\in (0,1)$. 
   \STATE
\begin{enumerate}
\item Take a uniform sample $S$ of $\max\{\frac{3k}{\delta^2\epsilon_1}\log\frac{2k}{\eta},$ $ \frac{k}{2\xi^2\epsilon_1(1-\delta)}\log\frac{2k}{\eta}\}$ points   from $P$.
\item Let $\hat{z}=\frac{1}{\eta}\frac{\epsilon_2}{k}|S|$, and solve the $k$-means clustering with $\hat{z}$ outliers on $S$ by using any $c$-approximation algorithm with $c\geq 1$. 
\end{enumerate}
  \STATE {\bfseries Output}  $H$, which is the set of $k$ cluster centers returned in Step~2. 
\end{algorithmic}

\end{algorithm}

\begin{theorem}
\label{the-km2}
Let $0<\delta, \eta, \xi<1$, and $t=\eta(1-\delta)\frac{\epsilon_1}{\epsilon_2}$. 
Assume $t>1$.
With probability at least $(1-\eta)^3$, the set of cluster centers $H$ returned by Algorithm~\ref{alg-km2} yields a clustering cost $\Delta^{-z}_{2}(P, H)\leq\alpha \Delta^{-z}_{2}(P, O^*)+\beta\xi \mathcal{L}^2$, where $\alpha=\big(2+(4+4c)\frac{t}{t-1}\frac{1+\delta}{1-\delta}\big)$ and $\beta=(4+4c)\frac{t}{t-1}\frac{1+\delta}{1-\delta}$. 
\end{theorem}
\begin{remark}
Similar to Remark~\ref{rem-the-kc2}, as long as $\frac{\epsilon_1}{\epsilon_2}>1$, we can set $\eta=\sqrt{\frac{\epsilon_2}{\epsilon_1}}$ and $\delta<1-\sqrt{\frac{\epsilon_2}{\epsilon_1}}$ to keep $t>1$. Additionally, like Corollary~\ref{cor-kc2-repeat}, we can repeat the algorithm $\frac{1}{(1-\eta)^3}$ times to achieve a constant success probability. If $\frac{t}{t-1}=O(1)$ and $c=O(1)$, then the clustering cost of Theorem~\ref{the-km2} will be 
 \begin{eqnarray}
 \Delta^{-z}_{2}(P, H)\leq O(1)\cdot \Delta^{-z}_{2}(P, O^*)+O(\xi)\cdot \mathcal{L}^2.
 \end{eqnarray}
 Also, when $\frac{\epsilon_1}{\epsilon_2}$ is large, the success probability becomes high as well. This also agrees with our previous observation concluded in Remark~\ref{rem-the-kc2}, that is, the ratio $\frac{\epsilon_1}{\epsilon_2}$ is an important factor that affects the performance of the uniform sampling approach. 
 
 \end{remark}

Before proving Theorem~\ref{the-km2}, we introduce the following lemmas first. 
Suppose the $k$ clusters of $S$ obtained in Step (2) of Algorithm~\ref{alg-km2} are $S_1, S_2, \cdots, S_k$, and thus the inliers $S_{\mathtt{in}}=\cup^k_{j=1}S_j$. Similar to the proof of Theorem~\ref{the-km}, below we always assume that the events described in Lemma~\ref{lem-imp1}, Lemma~\ref{lem-imp2}, and Lemma~\ref{lem-km-sample2} all occur, so that we do not need to repeatedly state the success probabilities.

\begin{lemma}
\label{lem-km2-1}
$\frac{|C^*_j|}{|C^*_j\cap S_{\mathtt{in}}|}\leq \frac{n}{|S|}\frac{t}{(t-1)(1-\delta)}$ for each $1\leq j\leq k$. 
\end{lemma}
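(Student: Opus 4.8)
The plan is to track how many points of $S\cap C^*_j$ can be lost when the $z'$ outliers are discarded from $S$. First I would observe that, by construction, $S_{\mathtt{in}}=\cup_{j=1}^k S_j$ is obtained from $S$ by deleting exactly $z'=\frac{\epsilon_2}{k\eta}|S|$ points (the outliers identified by the $c$-approximation algorithm in Step~2 of Algorithm~\ref{alg-km2}). Since these $z'$ deletions are spread over all clusters, at most $z'$ of them can fall inside $C^*_j$, which gives the crude but crucial bound $|C^*_j\cap S_{\mathtt{in}}|\geq |S\cap C^*_j|-z'$.

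Next I would feed in the sampling guarantee. Under the events assumed throughout this subsection, Lemma~\ref{lem-imp1}(\rmnum{2}) supplies $|S\cap C^*_j|\geq (1-\delta)\frac{|C^*_j|}{n}|S|$. The key algebraic step is to rewrite $z'$ so that it becomes a $\frac{1}{t}$-fraction of this lower bound. Using the significance assumption $|C^*_j|\geq\frac{\epsilon_1}{k}n$ together with the definition $t=\eta(1-\delta)\frac{\epsilon_1}{\epsilon_2}$, a direct substitution shows $z'=\frac{\epsilon_2}{k\eta}|S|\leq \frac{1}{t}(1-\delta)\frac{|C^*_j|}{n}|S|$; indeed $\frac{1}{t}(1-\delta)\frac{\epsilon_1}{k}=\frac{\epsilon_2}{\eta k}$, so this bound is tight exactly when $C^*_j$ is the smallest optimal cluster.

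Combining the two estimates then yields $|C^*_j\cap S_{\mathtt{in}}|\geq(1-\delta)\frac{|C^*_j|}{n}|S|\bigl(1-\frac{1}{t}\bigr)=(1-\delta)\frac{|C^*_j|}{n}|S|\,\frac{t-1}{t}$, and taking reciprocals (legitimate because $t>1$ keeps the right-hand side strictly positive) gives precisely $\frac{|C^*_j|}{|C^*_j\cap S_{\mathtt{in}}|}\leq\frac{n}{|S|}\frac{t}{(t-1)(1-\delta)}$.

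The only delicate point — and where the hypothesis $t>1$ earns its keep — is ensuring the subtracted outlier count does not swamp the sampled cluster mass; this is exactly what the rewriting $z'\leq\frac{1}{t}(1-\delta)\frac{|C^*_j|}{n}|S|$ controls, and it is the step that simultaneously uses all three ingredients: the Chernoff-type lower bound of Lemma~\ref{lem-imp1}, the significance assumption $|C^*_j|\geq\frac{\epsilon_1}{k}n$, and the definition of $t$. Everything else is routine bookkeeping.
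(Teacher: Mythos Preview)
Your proposal is correct and follows essentially the same approach as the paper: bound $|C^*_j\cap S_{\mathtt{in}}|\geq |S\cap C^*_j|-z'$, invoke Lemma~\ref{lem-imp1}(\rmnum{2}) for the lower bound on $|S\cap C^*_j|$, use the significance assumption $|C^*_j|\geq\frac{\epsilon_1}{k}n$ together with the definition of $t$ to absorb $z'$ as a $\frac{1}{t}$-fraction, and take reciprocals. The only difference is cosmetic ordering of the algebra---the paper factors out $(1-\delta)\frac{|C^*_j|}{n}|S|$ first and then applies the significance bound, whereas you bound $z'$ directly before combining---but the substance is identical.
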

\begin{proof}
Since $\hat{z}=\frac{1}{\eta}\frac{\epsilon_2}{k}|S|$ and $|S\cap C^*_j|\geq (1-\delta)\frac{|C^*_j|}{n}|S|$ for each $1\leq j\leq k$ (by Lemma~\ref{lem-imp1}), we have 
\begin{align}
|S_{\mathtt{in}}\cap C^*_j|&\geq |S\cap C^*_j|-\hat{z}\geq (1-\delta)\frac{|C^*_j|}{n}|S|-\frac{1}{\eta}\frac{\epsilon_2}{k}|S|\nonumber\\
&=\Big(1-\frac{1}{\eta(1-\delta)}\frac{\epsilon_2}{k}\frac{n}{|C^*_j|}\Big)(1-\delta)\frac{|C^*_j|}{n}|S|\nonumber\\
&\geq\Big(1-\frac{1}{\eta(1-\delta)}\frac{\epsilon_2}{\epsilon_1}\Big)(1-\delta)\frac{|C^*_j|}{n}|S|\nonumber\\
&=(1-\frac{1}{t})(1-\delta)\frac{|C^*_j|}{n}|S|,
\end{align}
where the last inequality comes from $|C^*_j|\geq \frac{\epsilon_1}{k}n$. Thus 
$\frac{|C^*_j|}{|C^*_j\cap S_{\mathtt{in}}|}\leq \frac{n}{|S|}\frac{t}{(t-1)(1-\delta)}$.  
\end{proof}

\begin{lemma}
\label{lem-km2-2}
$\texttt{\bf{Cost}}(S_{\mathtt{in}}\cap P_{\mathtt{opt}}, H)\leq (1+\delta)\frac{|S|}{n}\cdot c\cdot \big(\texttt{\bf{Cost}}(P_{\mathtt{opt}}, O^*)+(n-z)\cdot \xi \mathcal{L}^2\big)$. 
\end{lemma}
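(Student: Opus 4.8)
The plan is to bound the cost $\texttt{\bf{Cost}}(S_{\mathtt{in}}\cap P_{\mathtt{opt}}, H)$ by relating it back to the optimal clustering cost on $S$, and then transferring that bound to $P_{\mathtt{opt}}$ using the sampling guarantees already established. First I would observe that $S_{\mathtt{in}}\cap P_{\mathtt{opt}}\subseteq S_{\mathtt{in}}\subseteq S$, so monotonicity of the clustering cost gives $\texttt{\bf{Cost}}(S_{\mathtt{in}}\cap P_{\mathtt{opt}}, H)\leq \texttt{\bf{Cost}}(S_{\mathtt{in}}, H)$. Since $H$ is a $c$-approximate solution for the $k$-means-with-$z'$-outliers problem on $S$ and $S_{\mathtt{in}}$ is exactly the inlier set chosen by that solution, we have $\texttt{\bf{Cost}}(S_{\mathtt{in}}, H)\leq c\cdot W'$, where $W'$ is the optimal cost of $k$-means clustering on $S$ after discarding the $z'$ farthest outliers.

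The key step is then to exhibit a feasible competitor for this outlier-clustering problem on $S$ whose cost we can control. The natural choice is to take $O^*=\{o^*_1,\dots,o^*_k\}$ as the $k$ centers and discard the $z'$ points of $S$ lying in $S\setminus P_{\mathtt{opt}}$ (there are at most $z'$ such points by Lemma~\ref{lem-imp2}). This leaves $S_{\mathtt{opt}}=S\cap P_{\mathtt{opt}}$ as inliers, so $W'\leq \texttt{\bf{Cost}}(S_{\mathtt{opt}}, O^*)$. At this point the bound on $\texttt{\bf{Cost}}(S_{\mathtt{opt}}, O^*)$ from Lemma~\ref{lem-km-1} applies directly, giving $\texttt{\bf{Cost}}(S_{\mathtt{opt}}, O^*)\leq (1+\delta)\frac{|S|}{n}(n-z)\big(\Delta^{-z}_2(P, O^*)+\xi\mathcal{L}^2\big)$. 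Combining these inequalities and using $\texttt{\bf{Cost}}(P_{\mathtt{opt}}, O^*)=(n-z)\Delta^{-z}_2(P, O^*)$ yields exactly the stated claim $\texttt{\bf{Cost}}(S_{\mathtt{in}}\cap P_{\mathtt{opt}}, H)\leq (1+\delta)\frac{|S|}{n}\cdot c\cdot\big(\texttt{\bf{Cost}}(P_{\mathtt{opt}}, O^*)+(n-z)\cdot\xi\mathcal{L}^2\big)$.

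The main obstacle I anticipate is justifying that $O^*$ together with discarding $S\setminus P_{\mathtt{opt}}$ constitutes a legitimate feasible solution for the $z'$-outlier problem on $S$. This requires that the number of discarded points be at most $z'$, which is precisely the content of Lemma~\ref{lem-imp2} (our standing assumption says this event holds), so $|S\setminus P_{\mathtt{opt}}|\leq z'$ and the construction is valid; the inliers we keep are a superset of $S_{\mathtt{opt}}$, so the optimal outlier cost $W'$ is bounded by the cost of $S_{\mathtt{opt}}$ under $O^*$. A subtle point is that the optimal $z'$-outlier solution may discard a different set of points than $S\setminus P_{\mathtt{opt}}$, but since we only need an \emph{upper} bound on $W'$, exhibiting one feasible solution suffices and no matching is required. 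The remaining steps are the routine chain of inequalities and the substitution from Lemma~\ref{lem-km-1}, which are mechanical once the feasibility argument is in place.
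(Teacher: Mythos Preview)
Your proposal is correct and follows essentially the same route as the paper: bound $\texttt{\bf{Cost}}(S_{\mathtt{in}}\cap P_{\mathtt{opt}},H)$ by $\texttt{\bf{Cost}}(S_{\mathtt{in}},H)$, use the $c$-approximation guarantee to compare against the optimal $z'$-outlier cost on $S$, exhibit $O^*$ (with the outliers $S\setminus P_{\mathtt{opt}}$ discarded, which is feasible since $|S\setminus P_{\mathtt{opt}}|\le z'$ by Lemma~\ref{lem-imp2}) as a competitor to obtain $\texttt{\bf{Cost}}(S_{\mathtt{in}},H)\le c\cdot\texttt{\bf{Cost}}(S_{\mathtt{opt}},O^*)$, and finish with Lemma~\ref{lem-km-1}. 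The only cosmetic difference is that the paper phrases the middle step via the normalized quantity $\Delta^{-z'}_2(S,O^*)$ rather than your global optimum $W'$, but the inequalities are identical.
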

\begin{proof}
Since $\hat{z}\geq |S\setminus P_{\mathtt{opt}}|=|S|-|S_{\mathtt{opt}}|$, {\em i.e.,} $|S_{\mathtt{opt}}|\geq |S|-\hat{z}$, we have
\begin{align}
(|S|-\hat{z})\Delta^{-\hat{z}}_2(S, O^*)&\leq |S_{\mathtt{opt}}|\cdot \Delta^{-\hat{z}}_2(S, O^*) \nonumber\\
&\leq\texttt{\bf{Cost}}(S_{\mathtt{opt}}, O^*) \nonumber\\
\implies \Delta^{-\hat{z}}_2(S, O^*)&\leq\frac{1}{|S|-\hat{z}}\texttt{\bf{Cost}}(S_{\mathtt{opt}}, O^*),\label{for-km2-2-2}
\end{align}
where the second inequality is due to the same reason of (\ref{for-the-km-sig-1}). Because $H$ is a $c$-approximation on $S$, 
\begin{align}
\Delta^{-\hat{z}}_2(S, H)&\leq c\cdot \Delta^{-\hat{z}}_2(S, O^*)\nonumber\\
&\leq \frac{c}{|S|-\hat{z}}\texttt{\bf{Cost}}(S_{\mathtt{opt}}, O^*), \label{for-km2-2-3}
\end{align}
where the second inequality comes from (\ref{for-km2-2-2}). Therefore,  
\begin{align}
& \texttt{\bf{Cost}}(S_{\mathtt{in}} \cap  P_{\mathtt{opt}}, H)\nonumber\\
&\leq \texttt{\bf{Cost}}(S_{\mathtt{in}}, H) = (|S|-\hat{z})\Delta^{-\hat{z}}_2(S, H)\nonumber \nonumber\\
&\leq c\cdot \texttt{\bf{Cost}}(S_{\mathtt{opt}}, O^*)\nonumber\\
& \leq (1+\delta)\frac{|S|}{n}  \cdot  c  \cdot   (n-z)\big(\Delta^{-z}_2(P, O^*)+\xi  \mathcal{L}^2\big)\nonumber\\
& =  (1 \! + \! \delta)\frac{|S|}{n} \! \cdot \! c \! \cdot \! \big(\texttt{\bf{Cost}}(P_{\mathtt{opt}}, \!  O^*) \! + \! (n \! - \! z) \! \cdot \!  \xi \mathcal{L}^2\big),
\end{align}
where the second and third inequalities comes from (\ref{for-km2-2-3}) and Lemma~\ref{lem-km-1}, respectively. So we complete the proof.
\end{proof}
Since $S_{\mathtt{in}}\cap P_{\mathtt{opt}}\subseteq S_{\mathtt{opt}}$, we immediately have the following lemma via Lemma~\ref{lem-km-1}.
\begin{lemma}
\label{lem-km2-3}
$\texttt{\bf{Cost}}(S_{\mathtt{in}}\cap P_{\mathtt{opt}}, O^*)\leq (1+\delta)\frac{|S|}{n}(n-z)\big(\Delta^{-z}_2(P, O^*)+\xi  \mathcal{L}^2\big)$.
\end{lemma}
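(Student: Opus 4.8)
The plan is to obtain this bound as an immediate corollary of Lemma~\ref{lem-km-1}, exploiting the fact that $\texttt{\bf{Cost}}(\cdot, O^*)$ is monotone under set inclusion. The only structural observation needed is that the inlier set chosen by the $c$-approximation algorithm in Step~(2) of Algorithm~\ref{alg-km2} is a subset of the whole sample, i.e. $S_{\mathtt{in}}\subseteq S$. Intersecting both sides with $P_{\mathtt{opt}}$ then gives $S_{\mathtt{in}}\cap P_{\mathtt{opt}}\subseteq S\cap P_{\mathtt{opt}}=S_{\mathtt{opt}}$, which is exactly the inclusion quoted in the sentence preceding the lemma statement.

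With the inclusion in hand, I would argue monotonicity directly from the definition $\texttt{\bf{Cost}}(X, O^*)=\sum_{q\in X}\big(\texttt{dist}(q, O^*)\big)^2$. Since every summand is a nonnegative squared distance and the center set $O^*$ is held fixed, passing from the larger set $S_{\mathtt{opt}}$ to its subset $S_{\mathtt{in}}\cap P_{\mathtt{opt}}$ can only delete nonnegative terms from the sum. Hence
\begin{align}
\texttt{\bf{Cost}}(S_{\mathtt{in}}\cap P_{\mathtt{opt}}, O^*)\leq \texttt{\bf{Cost}}(S_{\mathtt{opt}}, O^*).
\end{align}
It is worth noting that the same monotonicity is not available for $\texttt{\bf{Cost}}(\cdot, H)$ coupled with an approximation guarantee (that is the delicate point handled separately in Lemma~\ref{lem-km2-2}); here, however, the centers are the fixed optimal means $O^*$, so the elementary term-dropping argument is legitimate.

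Finally I would invoke Lemma~\ref{lem-km-1}, which bounds the right-hand side by $(1+\delta)\frac{|S|}{n}(n-z)\big(\Delta^{-z}_2(P, O^*)+\xi \mathcal{L}^2\big)$, and chain the two inequalities to conclude. As with the other lemmas in this subsection, I would carry out the argument under the blanket assumption that the events of Lemma~\ref{lem-imp1}, Lemma~\ref{lem-imp2}, and Lemma~\ref{lem-km-sample3} all hold, so the probability bookkeeping is already absorbed into Lemma~\ref{lem-km-1}. I do not expect any genuine obstacle: the entire content is the subset relation plus nonnegativity, so the proof is essentially a one-line reduction, and the only thing to be careful about is to keep the fixed center set $O^*$ (not the algorithm's output $H$) throughout, since that is precisely what makes monotonicity valid.
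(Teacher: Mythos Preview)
Your proposal is correct and matches the paper's own argument exactly: the paper simply notes that $S_{\mathtt{in}}\cap P_{\mathtt{opt}}\subseteq S_{\mathtt{opt}}$ and deduces the lemma immediately from Lemma~\ref{lem-km-1}, which is precisely the monotonicity-plus-reduction you outlined.
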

Now, we are ready to prove Theorem~\ref{the-km2}. 

\begin{proof}(\textbf{of Theorem~\ref{the-km2}}) 
For convenience, let $S'_{\mathtt{in}}=S_{\mathtt{in}}\cap P_{\mathtt{opt}}$. Using the same manner of (\ref{for-km-2-2}), we have 
\begin{align}
\texttt{\bf{Cost}}(\tilde{S}'_{\mathtt{in}}, H)&\leq 2 \texttt{\bf{Cost}}(S'_{\mathtt{in}}, O^*)+2 \texttt{\bf{Cost}}(S'_{\mathtt{in}}, H);\label{for-km2-2-4}\\
\texttt{\bf{Cost}}(P_{\mathtt{opt}}, H)&\leq 2\texttt{\bf{Cost}}(P_{\mathtt{opt}}, O^*)+2\texttt{\bf{Cost}}(\tilde{P}_{\mathtt{opt}}, H). \label{for-km2-2-5}
\end{align}
Also, because $\texttt{\bf{Cost}} \! (\tilde{P}_{\mathtt{opt}}, \!  H)=\sum^k_{j=1}|C^*_j|(\texttt{dist}(o^*_j, \!  H))^2$ and $\texttt{\bf{Cost}}(\tilde{S}'_{\mathtt{in}}, H)$ $=\sum^k_{j=1}|C^*_j\cap S_{\mathtt{in}}|(\texttt{dist}(o^*_j, H))^2$, we have 
\begin{eqnarray}
\frac{\texttt{\bf{Cost}}(\tilde{P}_{\mathtt{opt}}, H)}{\texttt{\bf{Cost}}(\tilde{S}'_{\mathtt{in}}, H)}\leq \max_{1\leq j\leq k}\frac{|C^*_j|}{|C^*_j\cap S_{\mathtt{in}}|}.
\end{eqnarray}
As a consequence, 
\begin{align}
\texttt{\bf{Cost}}(\tilde{P}_{\mathtt{opt}}, H)&\leq \max_{1\leq j\leq k}\frac{|C^*_j|}{|C^*_j\cap S_{\mathtt{in}}|}\cdot \texttt{\bf{Cost}}(\tilde{S}'_{\mathtt{in}}, H)\nonumber\\
&\leq \frac{n}{|S|}\frac{t}{(t-1)(1-\delta)}\cdot \texttt{\bf{Cost}}(\tilde{S}'_{\mathtt{in}}, H) \label{for-km2-2-6}
\end{align}
where the last inequality comes from Lemma~\ref{lem-km2-1}. From (\ref{for-km2-2-4}), (\ref{for-km2-2-5}), and (\ref{for-km2-2-6}), we have
\begin{align}
\texttt{\bf{Cost}}(P_{\mathtt{opt}}, H)&\leq 2\texttt{\bf{Cost}}(P_{\mathtt{opt}}, O^*)  +  \nonumber\frac{4n}{|S|}\frac{t}{(t-1)(1-\delta)}\cdot \\
& \big(\texttt{\bf{Cost}}(S'_{\mathtt{in}}, O^*)+\texttt{\bf{Cost}}(S'_{\mathtt{in}}, H)\big).\!\!\label{for-km2-2-7}
\end{align}
By plugging the inequalities of Lemma~\ref{lem-km2-2} and Lemma~\ref{lem-km2-3} into (\ref{for-km2-2-7}), we can obtain Theorem~\ref{the-km2}. 
\end{proof}

\section{Experiments}
\label{sec-exp}
All the experiments were conducted  on a Ubuntu workstation with 2.40GHz Intel(R) Xeon(R) CPU E5-2680 and 256GB main memory. The algorithms were implemented in Matlab R2020b, and our code is available at \href{https://github.com/h305142/lightweight-clustering}{https://github.com/h305142/lightweight-clustering}

\textbf{Algorithms for testing.} We use several baseline algorithms  mentioned in Section~\ref{sec-prior}.  For $k$-center clustering with outliers, we consider  the $3$-approximation \textsc{\textbf{Charikar}}~\citep{charikar2001algorithms},  the $(4+\epsilon)$-approximation \textbf{\textsc{MK}} \citep{mccutchen2008streaming},  the $13$-approximation \textbf{\textsc{Malkomes}} \citep{malkomes2015fast}, and the non-uniform sampling algorithms \textbf{\textsc{BVX}}~\citep{NEURIPS2019_73983c01} and 
\textbf{\textsc{DYW}}~\citep{DBLP:conf/esa/DingYW19}. In our Algorithm~\ref{alg-kc2}, we apply \textsc{MK} as the black-box algorithm  in Step 2 (though  \textsc{Charikar} has a lower approximation ratio, we observe that  \textsc{MK} runs  faster and often achieves comparable clustering results in practice).

For $k$-means/median clustering with outliers, we consider the heuristic algorithm \textbf{\textsc{$k$-means$--$}}~\citep{chawla2013k} and three non-uniform sampling methods: the local search algorithm \textbf{\textsc{LocalSearch}}~\citep{gupta2017local} (we use the fast implementation~\citep{10.5555/3157096.3157103} for its  $k$-means++ seeding), the  data summary based algorithm \textbf{\textsc{DataSummary}}~\citep{DBLP:conf/nips/ChenA018}, and the recently  proposed hybrid sampling algorithm \textbf{\textsc{Hybrid}}~\citep{DBLP:journals/corr/abs-2003-02433}. In our Algorithm~\ref{alg-km} and  Algorithm~\ref{alg-km2}, we apply the $k$-means++~\citep{10.5555/3157096.3157103}  and \textsc{$k$-means$--$} respectively as the black-box algorithms in their Step 2. 

 {\textbf{Datasets.}} We generate the synthetic datasets in $\mathbb{R}^{100}$, where the points of each cluster follow a random Gaussian distribution (similar with the methods in~\citet{DBLP:conf/nips/ChenA018,DBLP:journals/corr/abs-2003-02433}). 
%
 For each synthetic dataset, we uniformly generate $z$ outliers at random outside the minimum enclosing balls of the obtained $k$ clusters. 

We also choose $4$ real datasets from {\em UCI machine learning repository}~\citep{Dua:2019}. 
(1) \textbf{Covertype} has $7$ clusters with $5.8\times 10^5$ points in $\mathbb{R}^{54}$; (2) \textbf{Kddcup} has $23$ clusters with $4.9\times 10^6$ points in $\mathbb{R}^{38}$; (3)  \textbf{Poking Hand} has $10$ clusters with $ 10^6$ points in $\mathbb{R}^{10}$; (4) \textbf{Shuttle} has $7$ clusters with $5.8\times 10^4$ points in $\mathbb{R}^9$.
We add $1\%$ outliers uniformly at random  outside the enclosing balls of the clusters as we did for  the synthetic datasets.


\textbf{Some implementation details.} When implementing our proposed algorithms (Algorithm~\ref{alg-kc1}-\ref{alg-km2}), it is not quite convenient to set the values for the parameters $\{\eta,\xi,\delta\}$ in practice. In fact, we only need to determine the sample size $|S|$ and $k'$ for running  Algorithm~\ref{alg-kc1} and Algorithm~\ref{alg-km} (and similarly, $|S|$ and $\hat{z}$ for running  Algorithm~\ref{alg-kc2} and Algorithm~\ref{alg-km2}). As an example, for Algorithm~\ref{alg-kc1}, it is sufficient to input $|S|$ and $k'$ only, since we just need to compute the $(k+k')$-center clustering on the sample $S$; moreover, it is more intuitive to directly set $|S|$ and $k'$  rather than to set the parameters $\{\eta,\xi,\delta\}$. 
Therefore, we  will conduct the experiments to observe the trends when varying the values $|S|$, $k'$, and $\hat{z}$.

Another practical issue for implementation is the success probability. For Algorithm~\ref{alg-kc2} and \ref{alg-km2},  when $\frac{\epsilon_1}{\epsilon_2}$ is close to $1$, the success probability could be low (as discussed in Remark~\ref{rem-the-kc2}). We can run the algorithm multiple times, and at least one time the algorithm will return a qualified solution with a higher probability (by simply taking the union bound). For example, if $\eta=0.8$ and we run Algorithm~\ref{alg-kc2} $50$ times, the success probability will be $1-(1-(1-0.8)^2)^{50}\approx 87\%$. Suppose we run the algorithm $m>1$ times and let $H_1, \cdots, H_{m}$ be the set of output candidates. 
The remaining issue is that how to select the one that achieves the smallest objective value among all the candidates. 
We directly scan the whole dataset in one pass. When reading a point $p$ from $P$, we calculate its distance to all the candidates, {\em i.e.,} $\mathtt{dist}(p, H_1), \cdots, \mathtt{dist}(p, H_m)$; after scanning the whole dataset, we have calculated the clustering costs $\Delta^{-z}_\infty(P, H_l)$ ({\em resp.,} $\Delta^{-z}_1(P, H_l)$ and $\Delta^{-z}_2(P, H_l)$) for $1\leq l\leq m$ and then   return the best one. Another benefit of this operation is that we can determine the clustering assignments for the data points simultaneously. When calculating $\mathtt{dist}(p, H_l)$ for $1\leq l\leq m$, we record the index of its nearest cluster center in $H_l$; finally, we can return its corresponding clustering assignment in the selected best candidate.


\textbf{Experimental design.} Our experiments include  three parts. \textbf{(1)} We fix  the sample size $|S|$ to be $5\times 10^{-3} n$, and study the clustering quality (such as clustering cost, precision, and purity) and the running time. \textbf{(2)} We study  the scalabilities of the algorithms with enlarging the data size $n$, and the stabilities of our proposed algorithms with varying the sampling ratio $|S|/n$. \textbf{(3)} Finally, we focus on other factors that may affect the practical performances.


\subsection{Clustering Quality and Running Time}

In this part, we fix the sample size $|S|$ to be $5\times 10^{-3} n$. For the synthetic datasets, we first set $n=10^5$, $k=8$, and $z=2\% n$. For Algorithm~\ref{alg-kc1} and~\ref{alg-km}, the value $k'$ should be $\frac{1}{\eta}\frac{\epsilon_2}{k}|S|$ which could be large (though it is only $O(\log k)$ in the asymptotic analysis). Although we have constructed the example  in Section~\ref{sec-kc-first} to show that $k'$ cannot be reduced in the worst case, we do not strictly follow this theoretical value in our experiments. Instead, we keep the ratio $\tau=\frac{k+k'}{k}$ to be $1, \frac{4}{3}, \frac{5}{3}$, and $2$ ({\em i.e.,} run the black-box algorithms from~\citet{G85,arthur2007k} $\tau k$ steps). For Algorithm~\ref{alg-kc2}  and Algorithm~\ref{alg-km2}, we set $\hat{z}=2\frac{\epsilon_2}{k}|S|$; 
 as discussed for the implementation details before, to boost the success probability we run Algorithm~\ref{alg-kc2}  ({\em resp.,} Algorithm~\ref{alg-km2}) $10$ times  and select the best candidate (as one trial); we count the total time of the $10$ runs  for one trial. 
For each instance, we take $20$ trials and report the average results  in Figure~\ref{fig-exp-dataset-sup}. 


\begin{figure*}
     \centering
     \begin{subfigure}[b]{0.38\textwidth}
         \centering
         \includegraphics[width=1\textwidth]{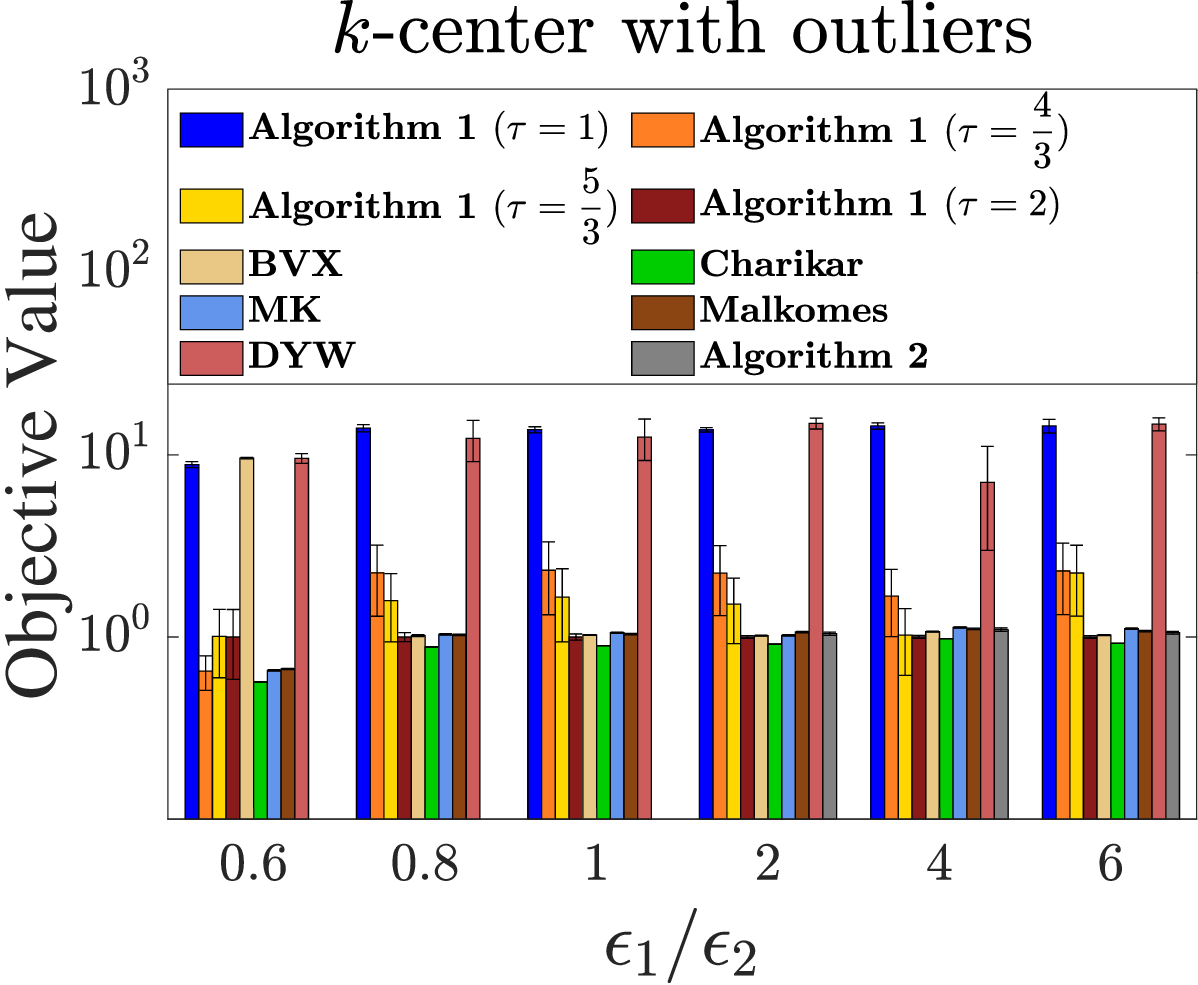}  
         \caption{Synthetic Datasets}
         \label{fig:subfig1}
     \end{subfigure}
     \hspace{0.2in}
     \begin{subfigure}[b]{0.38\textwidth}
         \centering
         \includegraphics[width=1\textwidth]{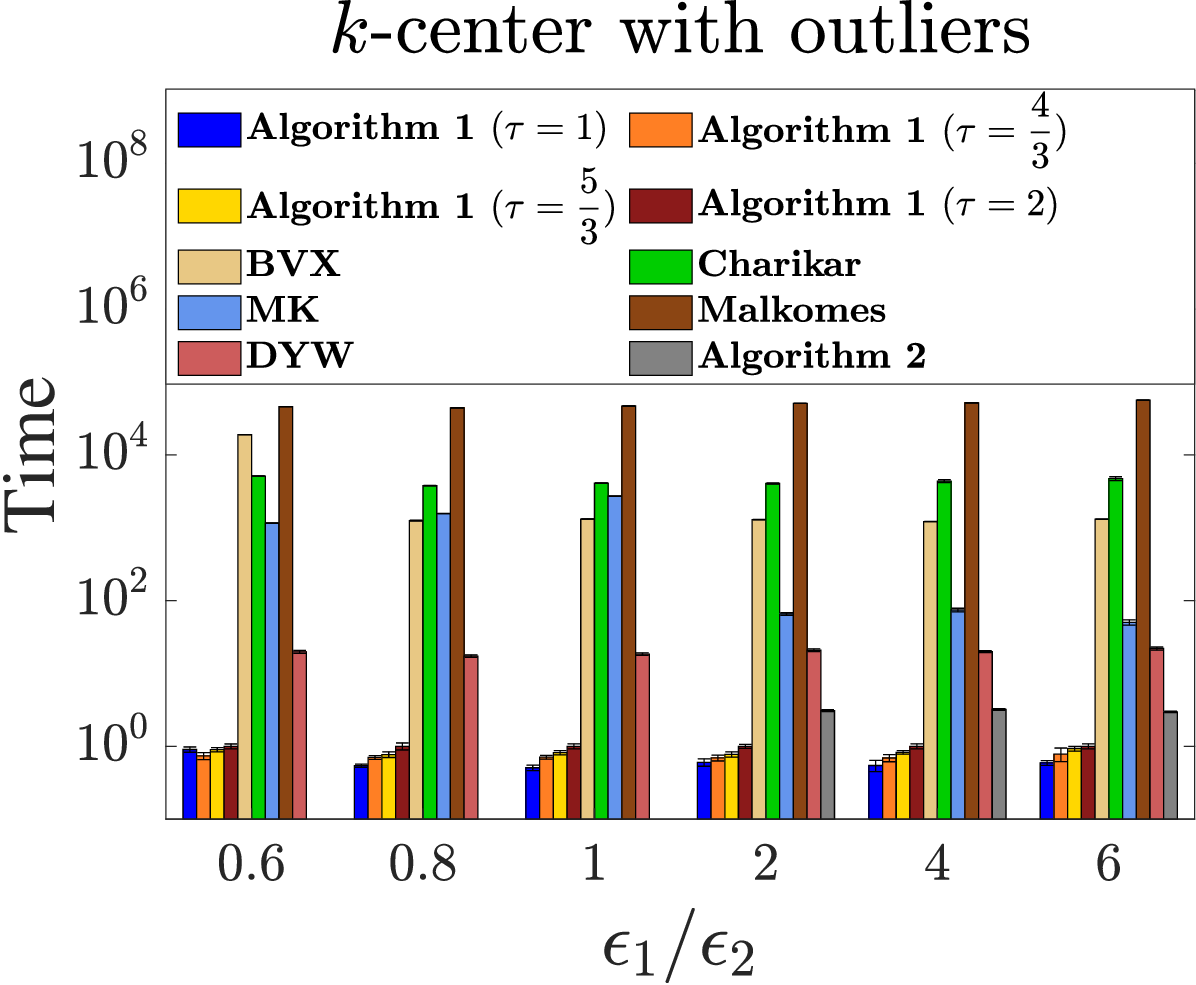}
         \caption{Synthetic Datasets}
         \label{fig:subfig2}
     \end{subfigure}
     \hfill

     \begin{subfigure}[b]{0.38\textwidth}
         \centering
         \includegraphics[width=1\textwidth]{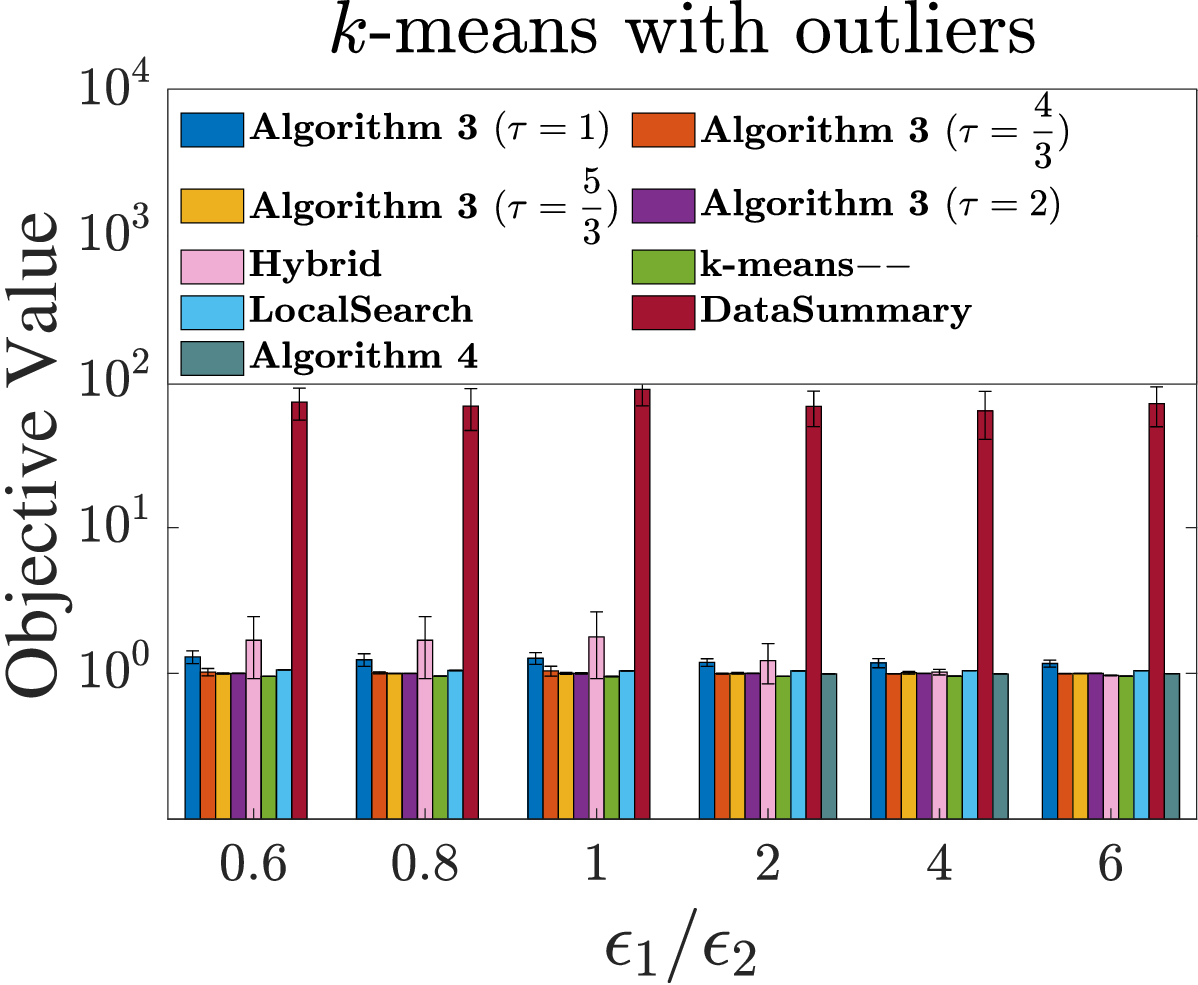}
         \caption{Synthetic Datasets}
         \label{fig:subfig3}
     \end{subfigure}
     \hspace{0.2in}
     \begin{subfigure}[b]{0.38\textwidth}
         \centering
         \includegraphics[width=1\textwidth]{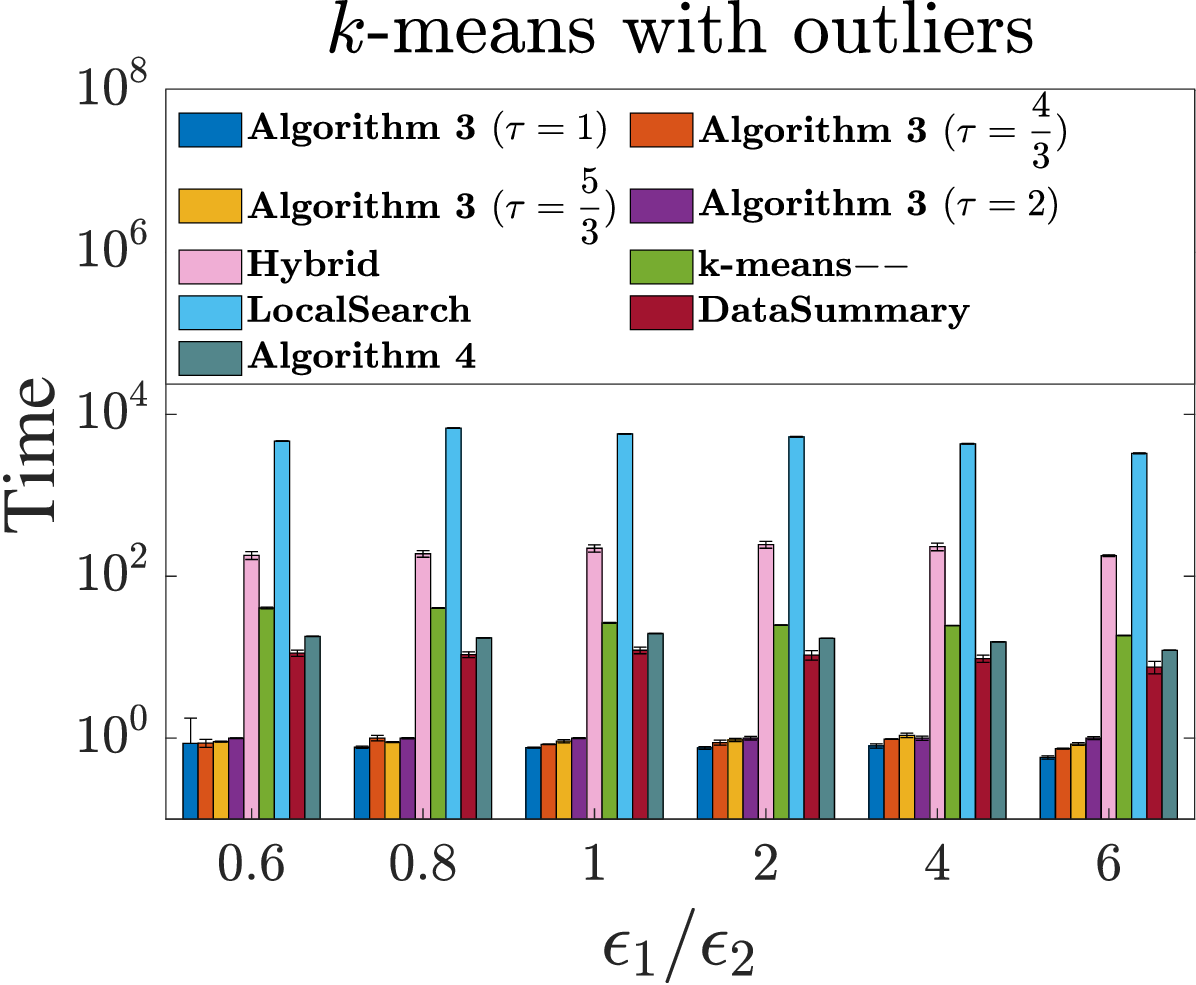} 
         \caption{Synthetic Datasets}
         \label{fig:subfig4}
     \end{subfigure}
     
     \begin{subfigure}[b]{0.38\textwidth}
         \centering
         \includegraphics[width=1\textwidth]{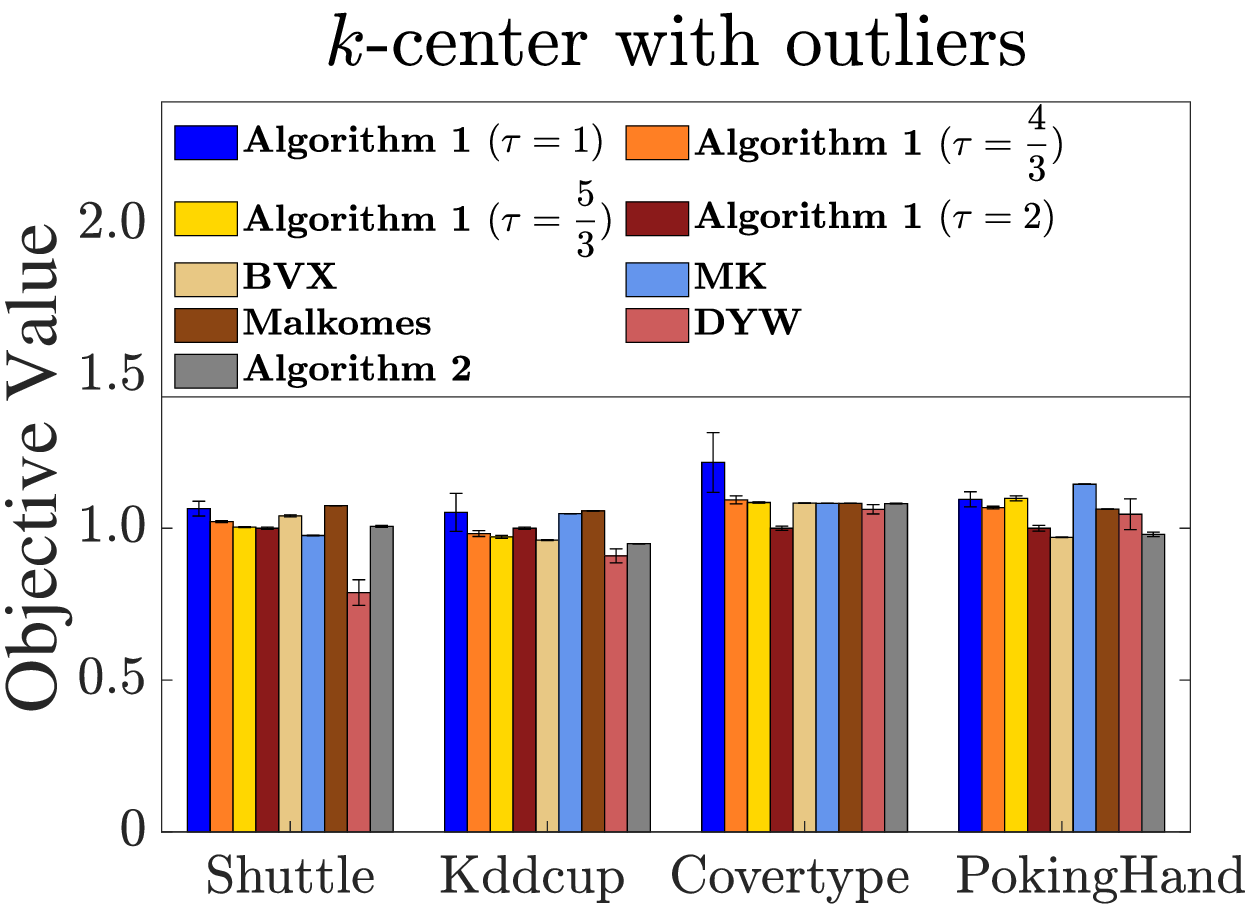} 
         \caption{Real Datasets}
         \label{fig:subfig5}
     \end{subfigure}
     \hspace{0.2in}
     \begin{subfigure}[b]{0.38\textwidth}
         \centering
         \includegraphics[width=1\textwidth]{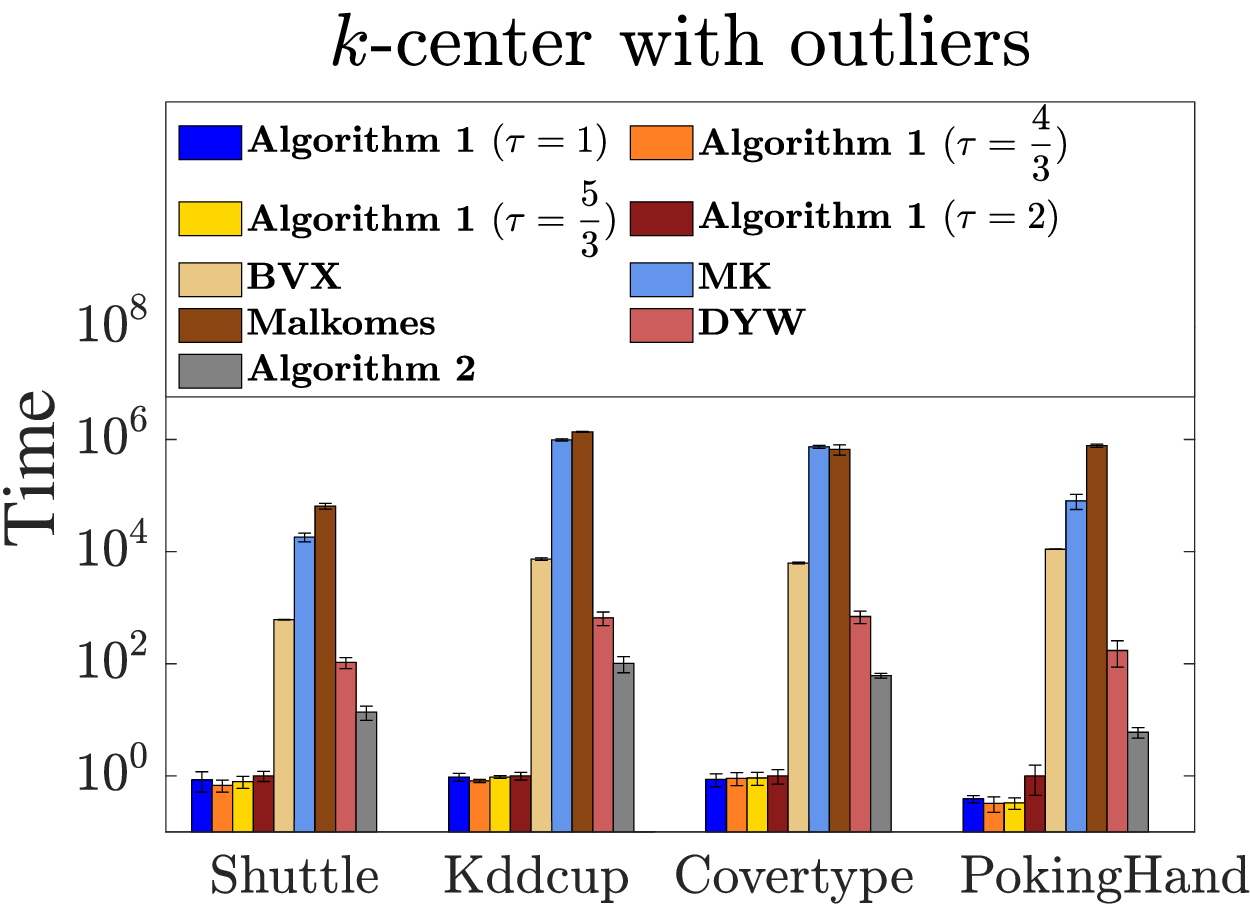}  
         \caption{Real Datasets}
         \label{fig:subfig6}
     \end{subfigure}
     
     \begin{subfigure}[b]{0.38\textwidth}
         \centering
         \includegraphics[width=1\textwidth]{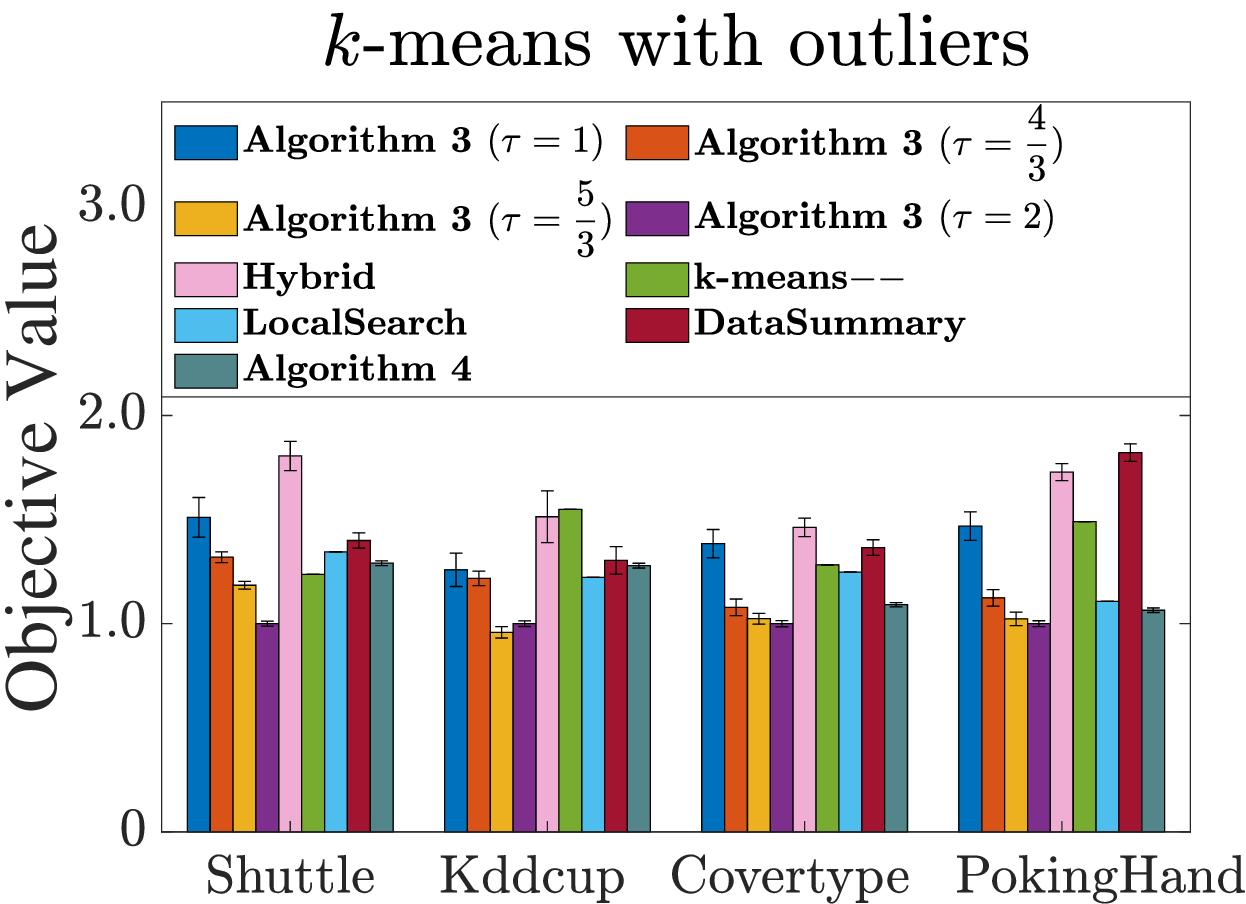}  
         \caption{Real Datasets}
         \label{fig:subfig7}
     \end{subfigure}
     \hspace{0.2in}
     \begin{subfigure}[b]{0.38\textwidth}
         \centering
         \includegraphics[width=1\textwidth]{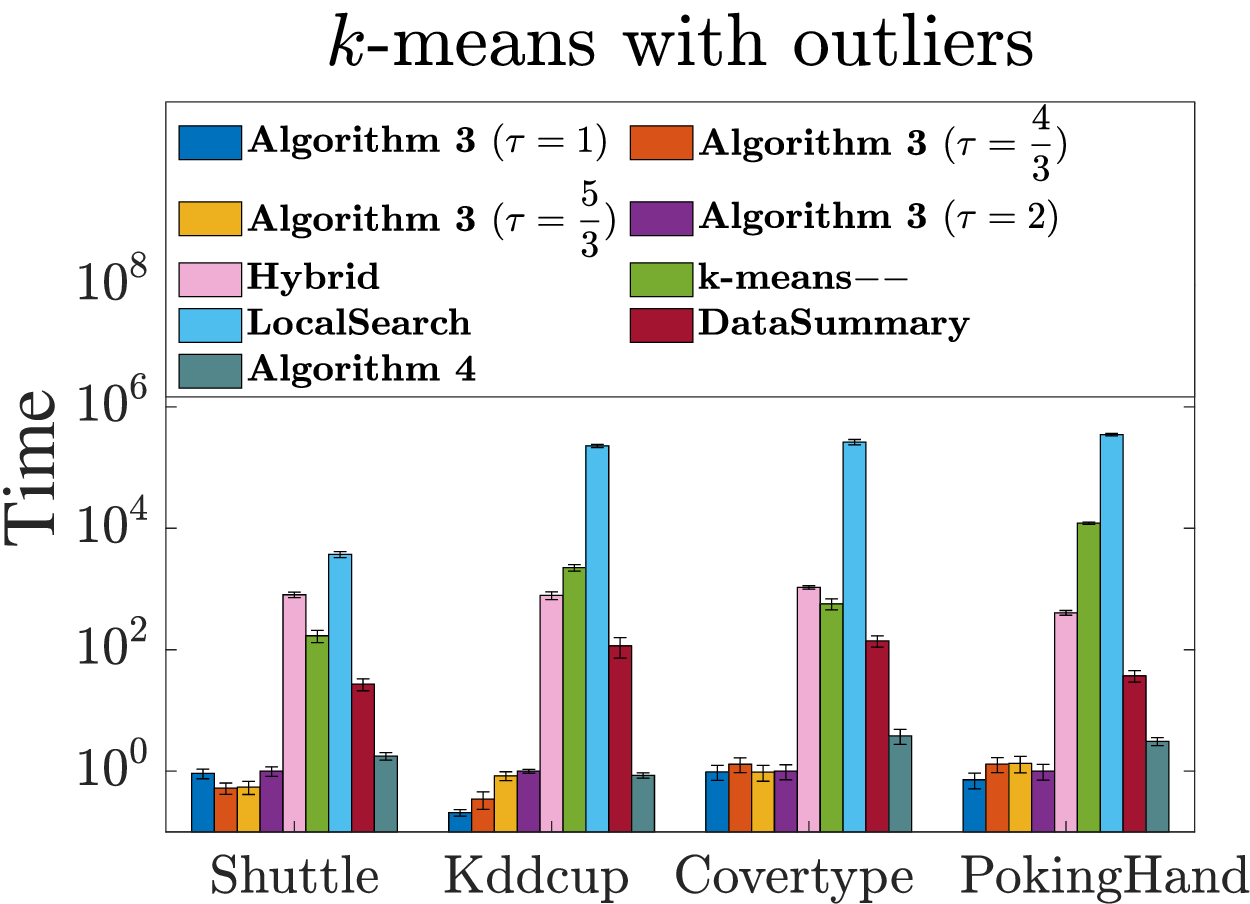}
         \caption{Real Datasets}
         \label{fig:subfig8}
     \end{subfigure}
    \vspace{-0.1in}

	 \caption{The normalized results (objective values and  running times) for $k$-center and $k$-means clustering with outliers in synthetic data (Figure~\ref{fig:subfig1} $\sim$ ~\ref{fig:subfig4}) and real data (Figure~\ref{fig:subfig5} $\sim$ ~\ref{fig:subfig8}). To illustrate the comparisons more clearly, we normalize all the results by dividing them over the results of Algorithm~\ref{alg-kc1} with $\tau = 2$ ({\em resp.,} Algorithm~\ref{alg-km} with $\tau = 2$ ) for $k$-center ({\em resp.,} $k$-means/median ) clustering with outliers. }
	 \label{fig-exp-dataset-sup}
\end{figure*}

For $k$-center clustering with outliers, our algorithms (Algorithm~\ref{alg-kc1} with $\tau\geq4/3$ and Algorithm~\ref{alg-kc2}) and the four baseline algorithms achieve similar objective values for most of the instances (we run Algorithm~\ref{alg-kc2} on the synthetic datasets with $\frac{\epsilon_1}{\epsilon_2}>1$ only; we do not run \textsc{Charikar} on the real datasets due to its high complexity). 
For $k$-means clustering with outliers, Algorithm~\ref{alg-km}  with $\tau\geq4/3$ and Algorithm~\ref{alg-km2} can achieve the results close to the best of the three baseline algorithms. Moreover, the running times of  our algorithms are significantly lower comparing with the baseline algorithms. 
Overall, we conclude that (1) Algorithm~\ref{alg-kc1} and Algorithm~\ref{alg-km} just need to return slightly more than $k$ cluster centers ({\em e.g.,} $\frac{4}{3} k$) for achieving low clustering cost; (2) Algorithm~\ref{alg-kc2} and  Algorithm~\ref{alg-km2} can achieve low clustering cost when $\frac{\epsilon_1}{\epsilon_2}\geq 2$.


To further evaluate their clustering qualities, we consider the measures \textbf{{\em precision}} and  \textbf{{\em purity}}, which have been widely used before~\citep{DBLP:books/daglib/0021593}; these two measures both aim to evaluate the difference between the obtained clusters and the ground truth. 
(1) The \textbf{precision} is the proportion of the ground-truth outliers found by the algorithm ({\em i.e.,} $\frac{|Out\cap Out_\mathtt{truth}|}{|Out_\mathtt{truth}|}$, where $Out$ is the set of returned outliers and $Out_\mathtt{truth}$ is the set of ground-truth outliers). (2) For each obtained cluster, we assign it to the ground-truth cluster which is most frequent in the obtained cluster, and the \textbf{purity} measures the accuracy of this assignment. 
Specifically, let $\{C_1, C_2, \cdots, C_k\}$ be the ground-truth clusters and $\{C'_1, C'_2, \cdots, C'_k\}$ be the obtained clusters from the algorithm; the purity is equal to $\frac{1}{n-z}\sum^k_{j=1}\max_{1\leq l\leq k}|C'_j\cap C_l|$. In general, the precisions and the purities achieved by our uniform sampling approaches and the baseline algorithms  are relatively close. 
The numerical results are shown in Table~\ref{tab-2} and \ref{tab-3}.

\subsection{Scalability and Sampling Ratio}
We consider the scalability first. We enlarge the data size $n$ from $10^5$ to $10^7$, and illustrate the results in Figure~\ref{fig-exp-size-sup}. We fix the sampling size $|S| = 10^{3}$.   For Algorithm~\ref{alg-kc1} and~\ref{alg-km}, we set $\tau=2$. We can see that our algorithms are several orders of magnitude faster than the baselines when $n$ is large. Actually, since our approach is just simple uniform sampling, the advantage over the non-uniform sampling approaches will be more significant when $n$ becomes large. 


We then study the influence of the sampling ratio $|S|/n$ to  our algorithms. 
 We vary $|S|/n$ from $1\times 10^{-3}$ to $10\times 10^{-3}$ and run our algorithms on the synthetic datasets. We show the results (averaged across $20$ trials) in Figure~\ref{fig-exp-sample-sup}. 
 We can see that the trends tend to be  ``flat'' when the ratio $|S|/n>4\times 10^{-3}$.

\subsection{Other Influence Factors}

For completeness, we also consider several other influence factors. We fix the sampling ratio $|S|/n=5\times 10^{-3}$ as Section~\ref{sec-exp-part1}. 
As discussed before, the performances of Algorithm~\ref{alg-kc2} and Algorithm~\ref{alg-km2} depend on the ratio  $\epsilon_1/\epsilon_2$. An interesting question is that how about their stabilities in terms of $\epsilon_1/\epsilon_2$ (in particular, when $\epsilon_1/\epsilon_2\leq 1$). 
We repeat the experiments $20$ times on the synthetic datasets and compute the obtained average objective values and standard deviations. 
In Figure~\ref{fig-stability-sup} (the two figures in the first line), we can see that their performances are actually quite stable when $\epsilon_1/\epsilon_2\geq 2$. This also agrees with our previous theoretical analysis, that is,  larger $\epsilon_1/\epsilon_2$ is more friendly to uniform sampling.

In Figure~\ref{fig-stability-sup} (the two figures in the second line), we study the influence of $\hat{z}$  on the performances (averaged across $20$ times). We vary $\hat{z}/\tilde{z}$ from $1.1$ to $2$, where $\tilde{z}=\frac{\epsilon_2}{k}|S|$ is the expected number of outliers contained in $S$. 
We also illustrate the experimental results on the synthetic datasets with  varying $k$ from $8$ to $24$, and $z$ from $0.2\%n$ to $10\% n$, in Figure~\ref{fig-exp-k-sup} and Figure~\ref{fig-exp-outlier-sup} respectively. 
 Overall, we observe that the influences from   these parameters to the clustering qualities of our algorithms are relatively limited. 
And  in general,  our algorithms are considerably faster than the baseline algorithms.

\section{Future Work}
\label{sec-future}
In this paper, we study the effectiveness of uniform sampling for center-based clustering with outliers problems.  
Following this work, an interesting question  is that whether the  significance measure (or some other realistic assumptions) can be applied to analyze uniform sampling for other robust optimization problems, such as {\em PCA with outliers}~\citep{DBLP:journals/jacm/CandesLMW11} and {\em projective clustering with outliers}~\citep{feldman2011unified}. 

\footnotesize{
\subsection*{Acknowledgements}
The research of this work was supported in part by National Key R\&D program of China through grant 2021YFA1000900, the NSFC throught Grant 62272432, and the Provincial NSF of Anhui through grant 2208085MF163.  }


\appendix
\section{Extensions}
\label{sec-extension} 
\subsection{For $k$-median clustering with outliers}
\label{sec-extension-1} 
The results of Theorem~\ref{the-km} and Theorem \ref{the-km2} can be easily extended to $k$-median clustering with outliers in Euclidean space by using almost the same idea, where the only difference is that we can directly use triangle inequality in the proofs ({\em e.g.,} the inequality (\ref{for-km-2-1}) is replaced by $||\tilde{q}-h_{\tilde{j}_q}||\leq ||\tilde{q}-q||+||q-h_{j_q}||$). As a consequence, the coefficients $\alpha$ and $\beta$ are reduced to be $\big(1+(1+c)\frac{1+\delta}{1-\delta}\big)$ and $(1+c)\frac{1+\delta}{1-\delta}$  in Theorem~\ref{the-km}, respectively. Similarly, $\alpha$ and $\beta$ are reduced to be  $\big(1+(1+c)\frac{t}{t-1}\frac{1+\delta}{1-\delta}\big)$ and $(1+c)\frac{t}{t-1}\frac{1+\delta}{1-\delta}$  in  Theorem~\ref{the-km2}.


\subsection{For the general metric}
\label{sec-extension-2} 
To solve the metric $k$-median/means clustering with outliers problems for a given instance $(X, \mu)$, we should keep in mind that the cluster centers can only be selected from the vertices of $X$. However, the optimal cluster centers $O^*=\{o^*_1, \cdots, o^*_k\}$ may not be contained in the sample $S$, and thus we need to modify our analysis slightly. We observe that the sample $S$ contains a set $O'$ of vertices close to $O^*$ with certain probability. Specifically, for each $1\leq j\leq k$, there exists a vertex $o'_j\in O'$ such that $\mu(o'_j, o^*_j)\leq O(1)\times \frac{1}{|C^*_j|}\sum_{p\in C^*_j}\mu(p, o^*_j)$ (or $\big(\mu(o'_j, o^*_j)\big)^2\leq O(1)\times \frac{1}{|C^*_j|}\sum_{p\in C^*_j}\big(\mu(p, o^*_j)\big)^2$) with constant probability (this claim can be easily proved by using the Markov's inequality). Consequently, we can use $O'$ to replace $O^*$ in our analysis, and achieve the similar results as Theorem~\ref{the-km} and Theorem~\ref{the-km2}.


\begin{table*}[ht]  	
    \caption{Precision and Purity on real datasets ($k$-center with outliers)}  
    \label{tab-2} 
	\centering  
	\begin{tabular}{ccccccccc}  
		\toprule  
		\textsc{Datasets}	& \multicolumn{2}{c}{\textsc{Shuttle}}&\multicolumn{2}{c}{\textsc{Kddcup}}&\multicolumn{2}{c}{\textsc{Covtype}}&\multicolumn{2}{c}{\textsc{Poking Hand}}\cr 
		
		\cmidrule(lr){2-3} \cmidrule(lr){4-5}  \cmidrule(lr){6-7} \cmidrule(lr){8-9}  
		
		\textsc{Measure}     & \textsc{Prec} & \textsc{Purity} & \textsc{Prec} & \textsc{Purity} & \textsc{Prec} & \textsc{Purity} & \textsc{Prec} & \textsc{Purity} \cr
		\midrule  

        \textsc{Algorithm~\ref{alg-kc1}} $\tau = 1$ & 0.855 & 0.807 & 0.900 & 0.822 & 0.879 & 0.481 & 0.996 & 0.502 \cr
        \textsc{Algorithm~\ref{alg-kc1}} $\tau = \frac{4}{3}$ & 0.856 & 0.824 & 0.906 & 0.834 & 0.895 & 0.491 & 0.986 & 0.502 \cr
        \textsc{Algorithm~\ref{alg-kc1}} $\tau = \frac{5}{3}$ & 0.860 & 0.843 & 0.912 & 0.851 & 0.902 & 0.488 & 0.992 & 0.501 \cr
    	\textsc{Algorithm~\ref{alg-kc1}} $\tau = 2$ & 0.864 & \textbf{0.869} & 0.929 & 0.848 & \textbf{0.918} & 0.495 & \textbf{0.998} & 0.503 \cr
        \textsc{Algorithm~\ref{alg-kc2}} & 0.856 & 0.851 & 0.936 & \textbf{0.854} & 0.879 & 0.493 & 0.936 & 0.503 \cr
		\textsc{BVX} & \textbf{0.908} & 0.786 & \textbf{0.945} & 0.574 & 0.879 & 0.487 & \textbf{0.998} & 0.501 \cr
        \textsc{MK} &  0.886 & 0.828 & 0.914 & 0.832 & 0.879 & 0.488 & 0.982 & 0.502 \cr
        \textsc{Malkomes} & 0.862 & 0.790 & 0.918 & 0.841 & 0.902 & 0.501 & 0.992 & 0.504 \cr
        \textsc{DYW} & 0.853 & 0.813 & 0.912 & 0.850 & 0.899 & \textbf{0.502} & 0.981 & \textbf{0.510} \cr

		\bottomrule  
	\end{tabular}   
\end{table*} 

\vspace{0.3in}

\begin{table*}[ht]  
    \caption{Precision and Purity on real datasets ($k$-means with outliers)}  
	\label{tab-3} 
	\centering  
	\begin{tabular}{ccccccccc}  
	    \toprule  
		\textsc{Datasets}	& \multicolumn{2}{c}{\textsc{Shuttle}}&\multicolumn{2}{c}{\textsc{Kddcup}}&\multicolumn{2}{c}{\textsc{Covtype}}&\multicolumn{2}{c}{\textsc{Poking Hand}}\cr 
		
		\cmidrule(lr){2-3} \cmidrule(lr){4-5}  \cmidrule(lr){6-7} \cmidrule(lr){8-9}  
		
		\textsc{Measure}     & \textsc{Prec} & \textsc{Purity} & \textsc{Prec} & \textsc{Purity} & \textsc{Prec} & \textsc{Purity} & \textsc{Prec} & \textsc{Purity} \cr
		\midrule  
        \textsc{Algorithm~\ref{alg-km}} $\tau = 1$ & 0.857 & 0.883 & 0.948 & 0.988 & 0.879 & 0.507 & 0.982 & 0.501 \cr
        \textsc{Algorithm~\ref{alg-km}} $\tau = \frac{4}{3}$ & 0.852 & 0.922 & 0.962 & 0.986 & 0.889 & 0.522 & 0.996 & 0.502 \cr
        \textsc{Algorithm~\ref{alg-km}} $\tau = \frac{5}{3}$ & 0.855 & 0.914 & 0.955 & 0.989 & \textbf{0.913} & 0.527 & \textbf{0.999} & 0.505 \cr
        \textsc{Algorithm~\ref{alg-km}} $\tau = 2$ & 0.861 & \textbf{0.944} & \textbf{0.972} & 0.991 & 0.906 & 0.515 & \textbf{0.999} & 0.506 \cr
        \textsc{Algorithm~\ref{alg-km2}}& 0.856 & 0.867 & 0.927 & 0.982 & 0.906 & 0.523 & \textbf{0.999} & \textbf{0.510} \cr
		\textsc{Hybrid} & 0.858 & 0.903 & 0.899 & 0.992 & 0.884 & 0.534 & \textbf{0.999} & 0.501 \cr
        \textsc{$k$-means$--$} & \textbf{0.872} & 0.842 & 0.915 & 0.985 & 0.879 & 0.537 & \textbf{0.999} & 0.502 \cr
        \textsc{LocalSearch} & 0.857 & 0.915 & 0.929 & 0.989 & 0.876 & \textbf{0.541} & 0.998 & 0.501 \cr
        \textsc{DataSummary} & 0.857 & 0.920 & 0.865 & \textbf{0.994} & 0.880 & 0.512 & 0.990 & 0.501 \cr
		\bottomrule  
	\end{tabular}  
 \end{table*}

\label{sec-exp-part1}

\begin{figure*}[h] 
 	\begin{center}

    \begin{subfigure}[b]{0.32\textwidth}
        \centering
        \includegraphics[width=1\textwidth]{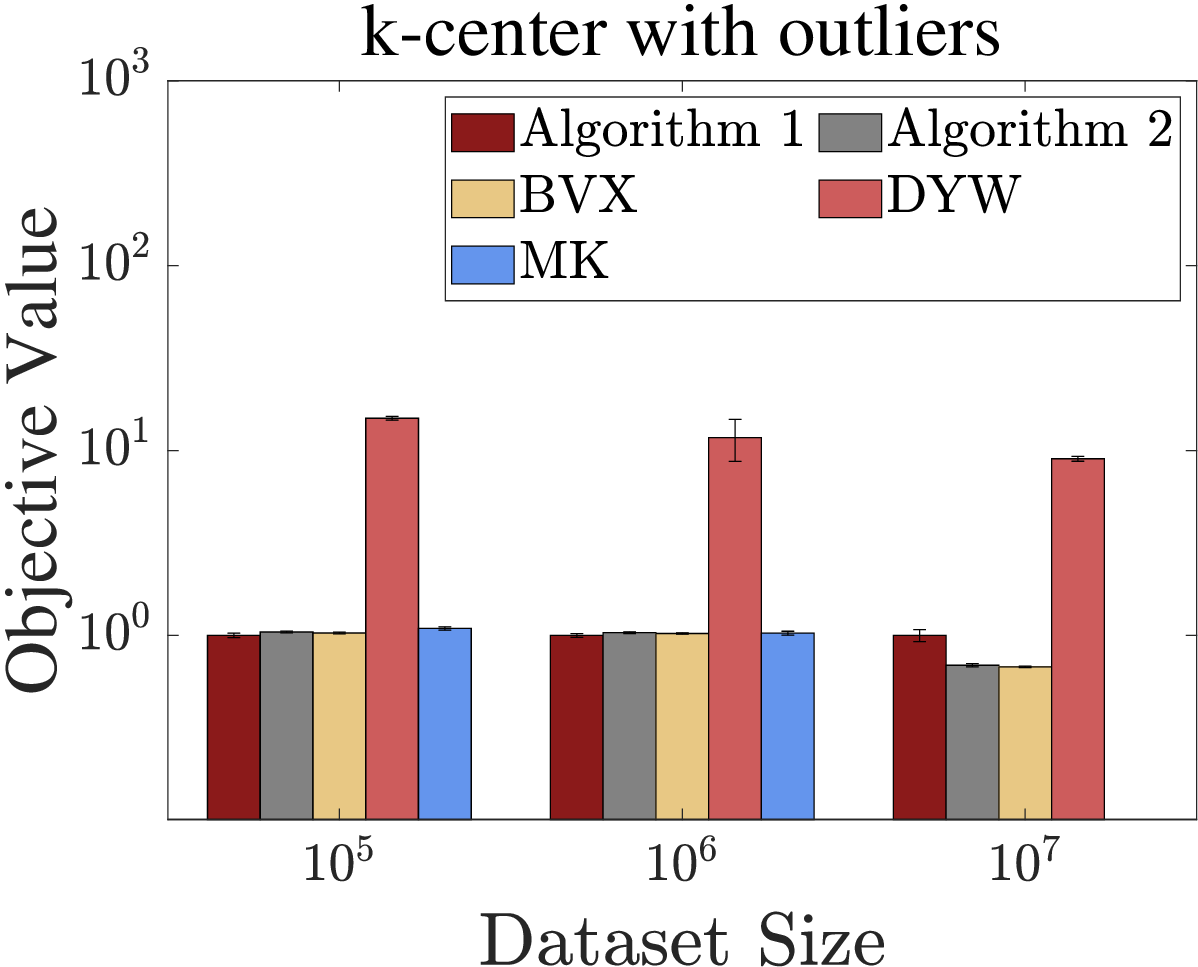} 
        \caption{}
    \end{subfigure}
    \begin{subfigure}[b]{0.32\textwidth}
        \centering
        \includegraphics[width=1\textwidth]{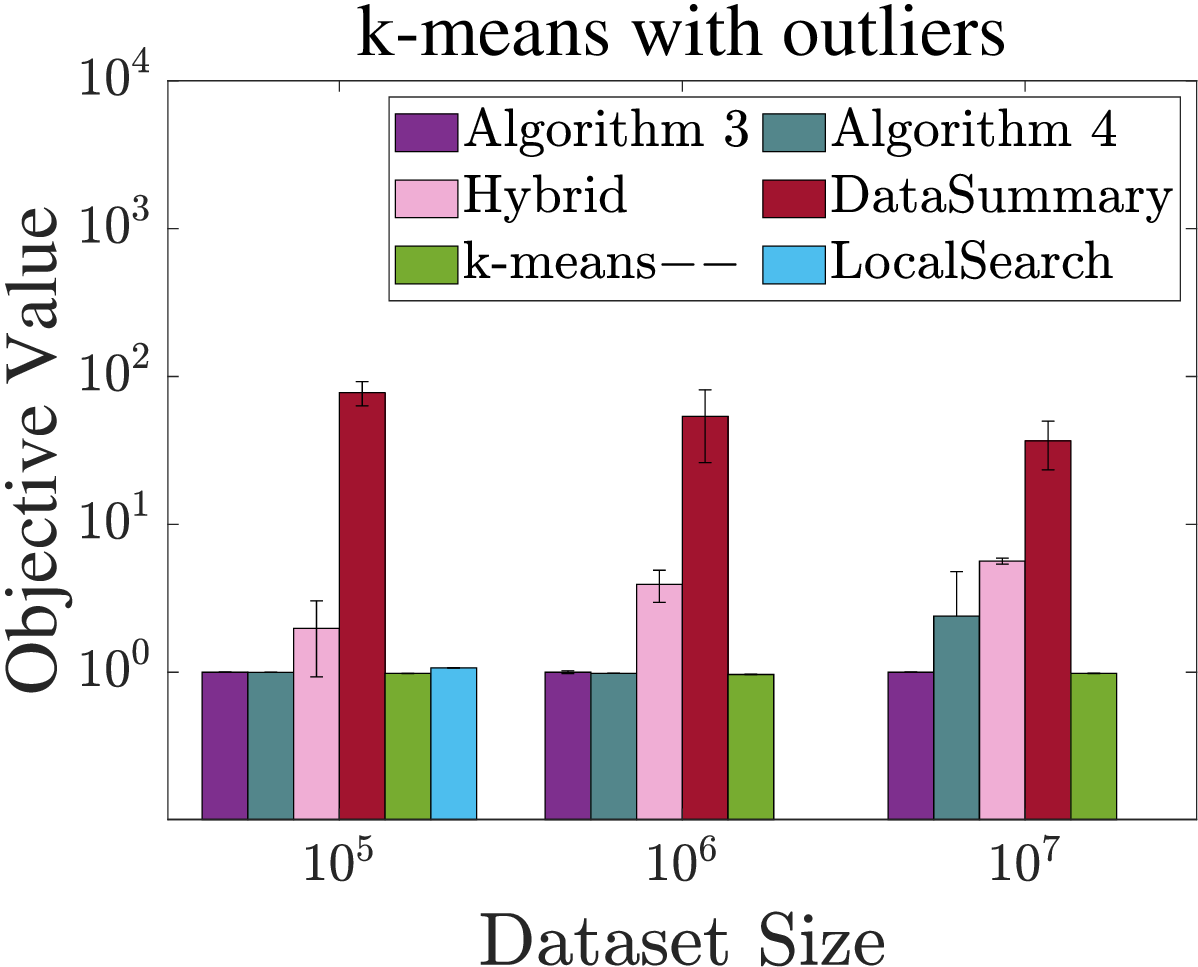} 
        \caption{}
    \end{subfigure}
    \begin{subfigure}[b]{0.32\textwidth}
        \centering
        \includegraphics[width=1\textwidth]{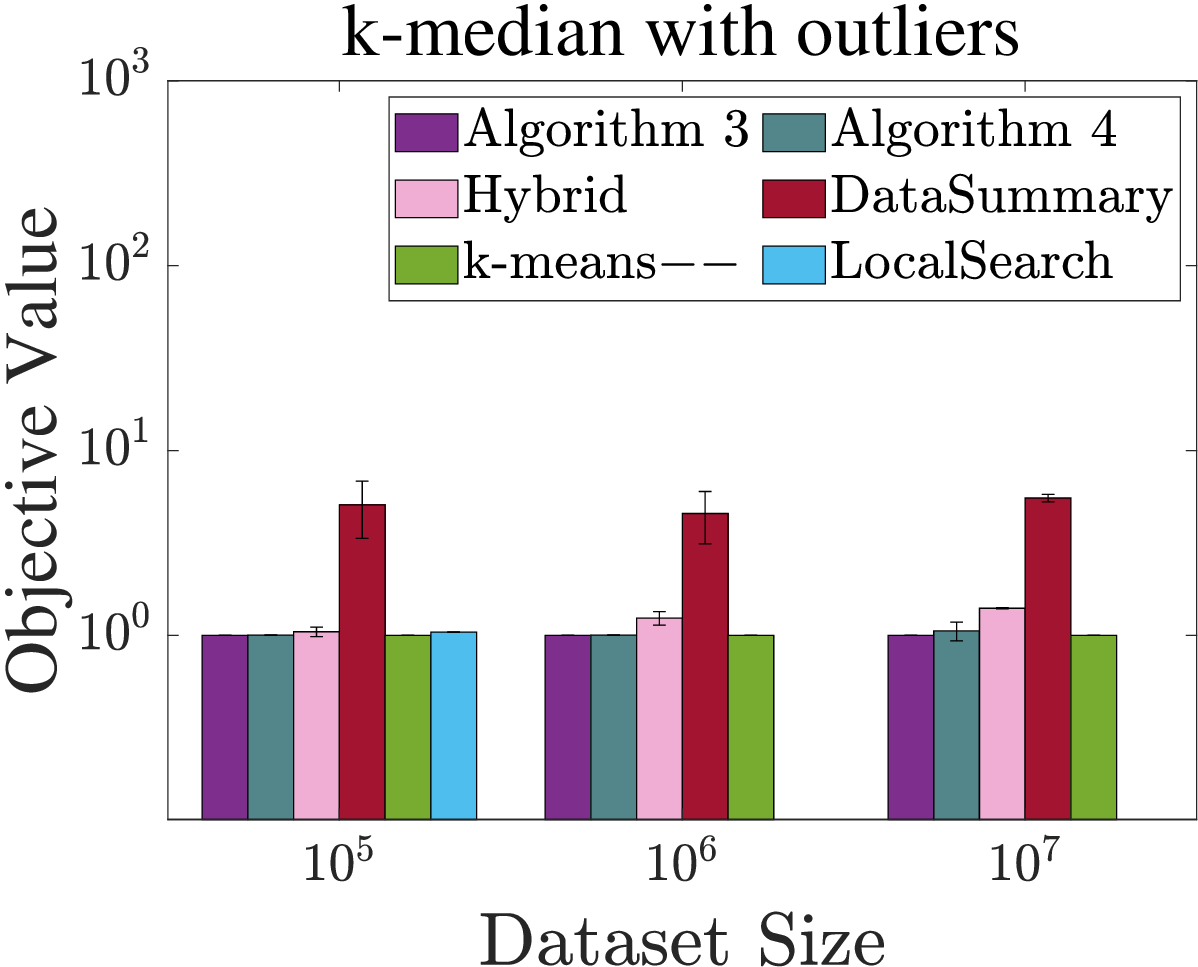}  
        \caption{}
    \end{subfigure}

    \vspace{0.2in}

    \begin{subfigure}[b]{0.32\textwidth}
        \centering
        \includegraphics[width=1\textwidth]{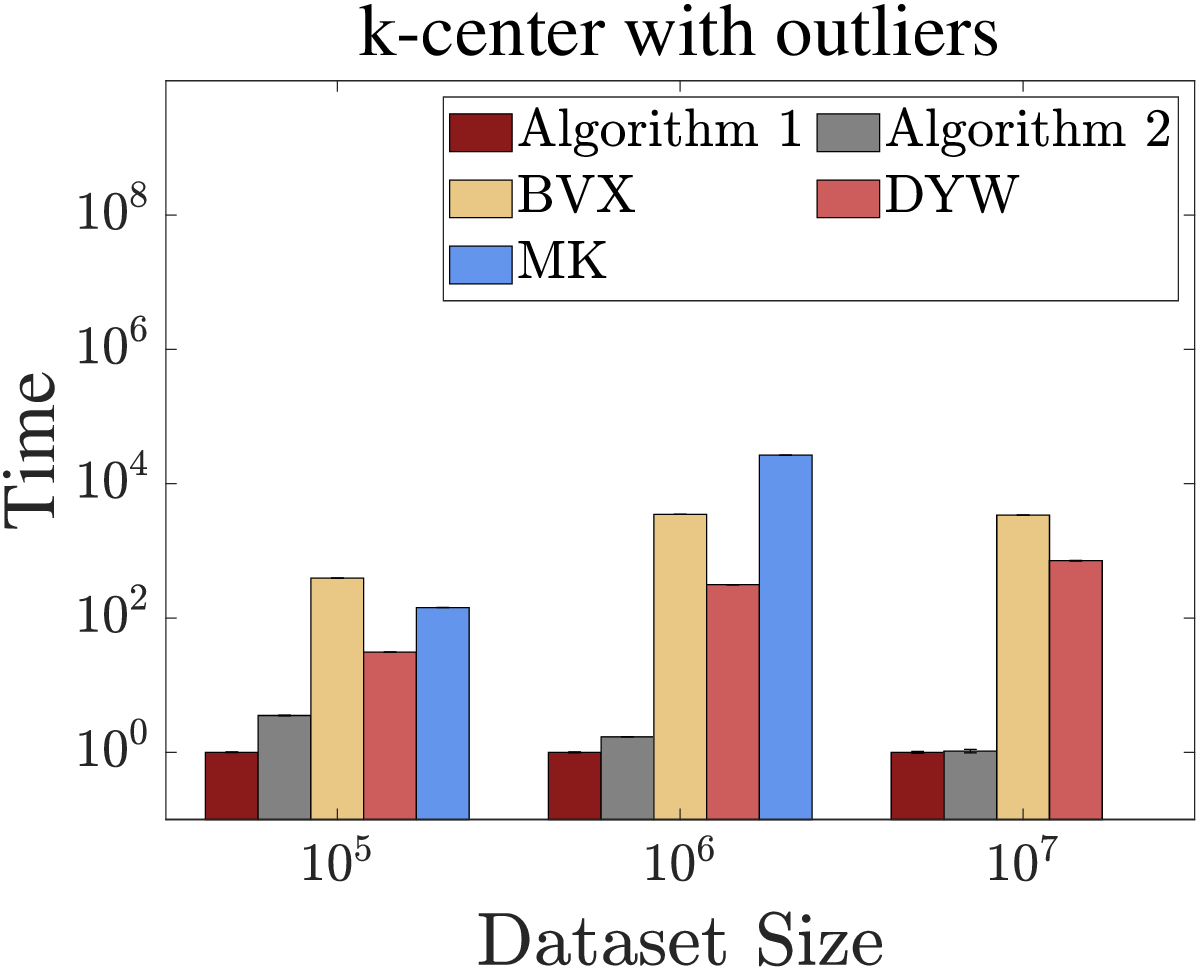}
        \caption{}
    \end{subfigure}
    \begin{subfigure}[b]{0.32\textwidth}
        \centering
        \includegraphics[width=1\textwidth]{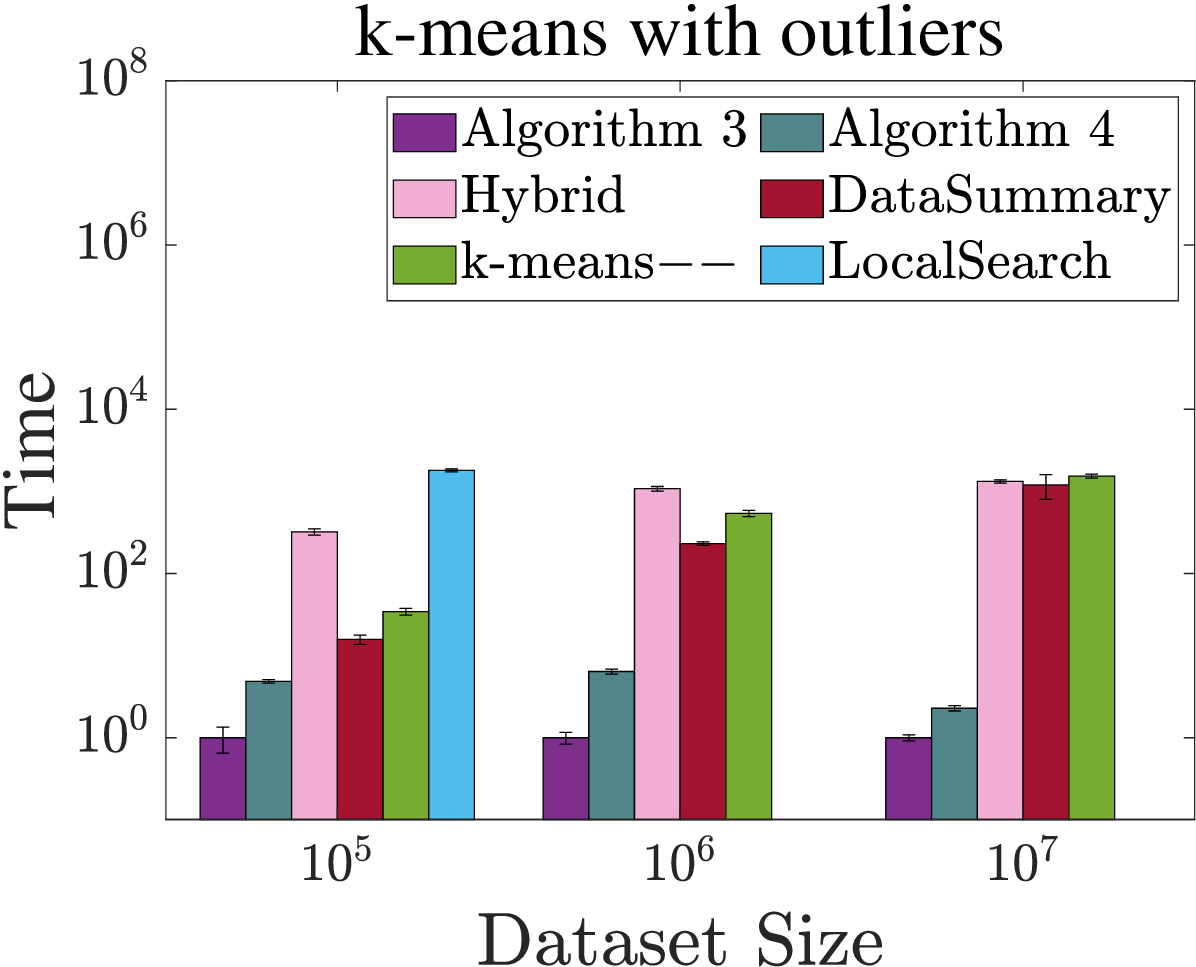}
        \caption{}
    \end{subfigure}
    \begin{subfigure}[b]{0.32\textwidth}
        \centering
        \includegraphics[width=1\textwidth]{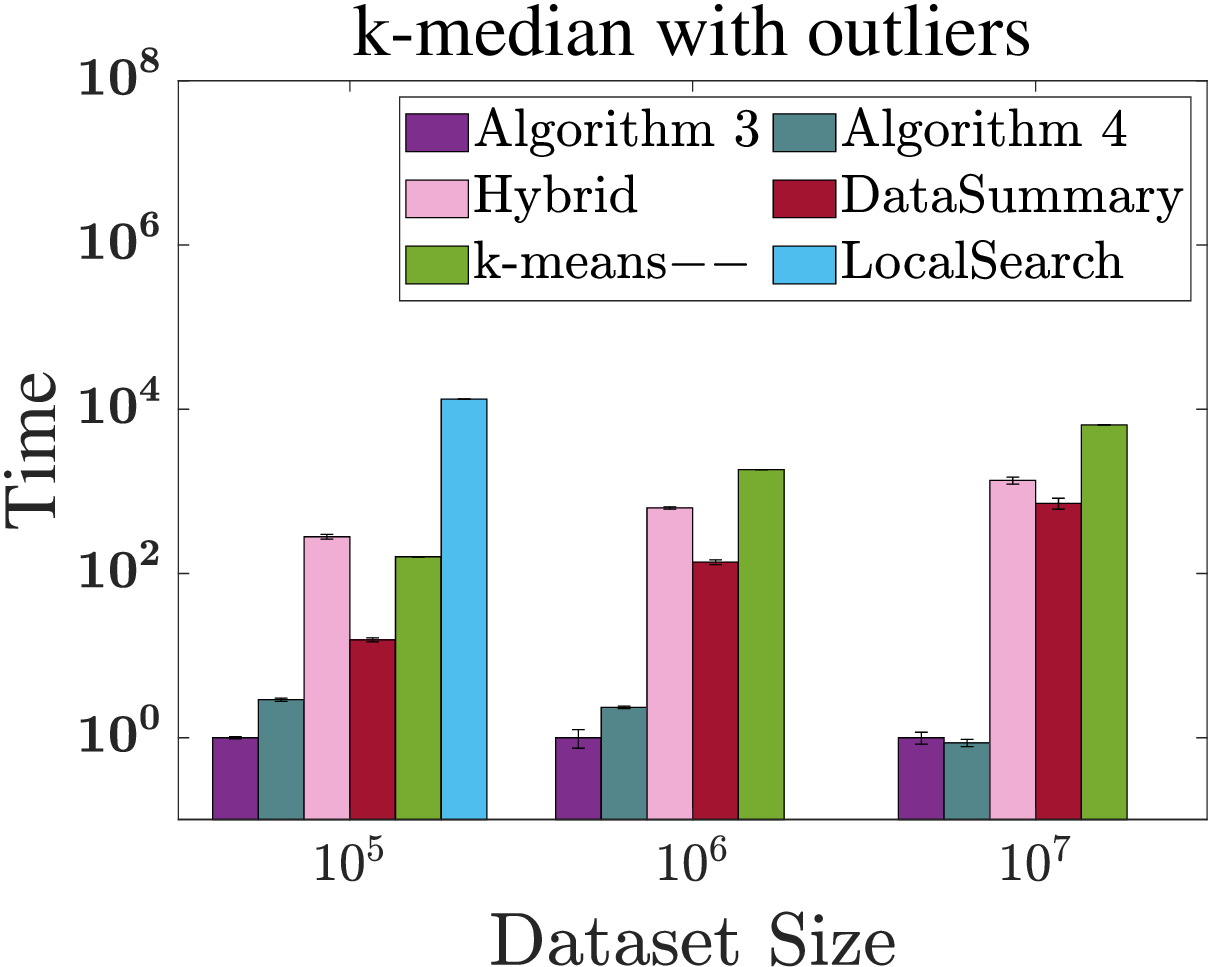} 
        \caption{}
    \end{subfigure}

		 \vspace{-0.1in}
		\caption{The normalized results (objective values and  running times) on the synthetic datasets with varying $n$. We did not run \textsc{Charikar},  \textsc{MK}, \textsc{Malkomes}, and \textsc{LocalSearch} for large $n$  because their running times are too high. To illustrate the comparisons more clearly, we normalize all the results by dividing them over the results of Algorithm~\ref{alg-kc1} with $\tau = 2$ ({\em resp.,} Algorithm~\ref{alg-km} with $\tau = 2$ ) for $k$-center ({\em resp.,} $k$-means/median ) clustering with outliers.  }     
		\label{fig-exp-size-sup}
	\end{center}
\end{figure*}

\begin{figure*}[h] 
	\begin{center}
			 \vspace{-0.1in}

     \begin{subfigure}[b]{0.24\textwidth}
         \centering
         \includegraphics[height=0.73\textwidth]{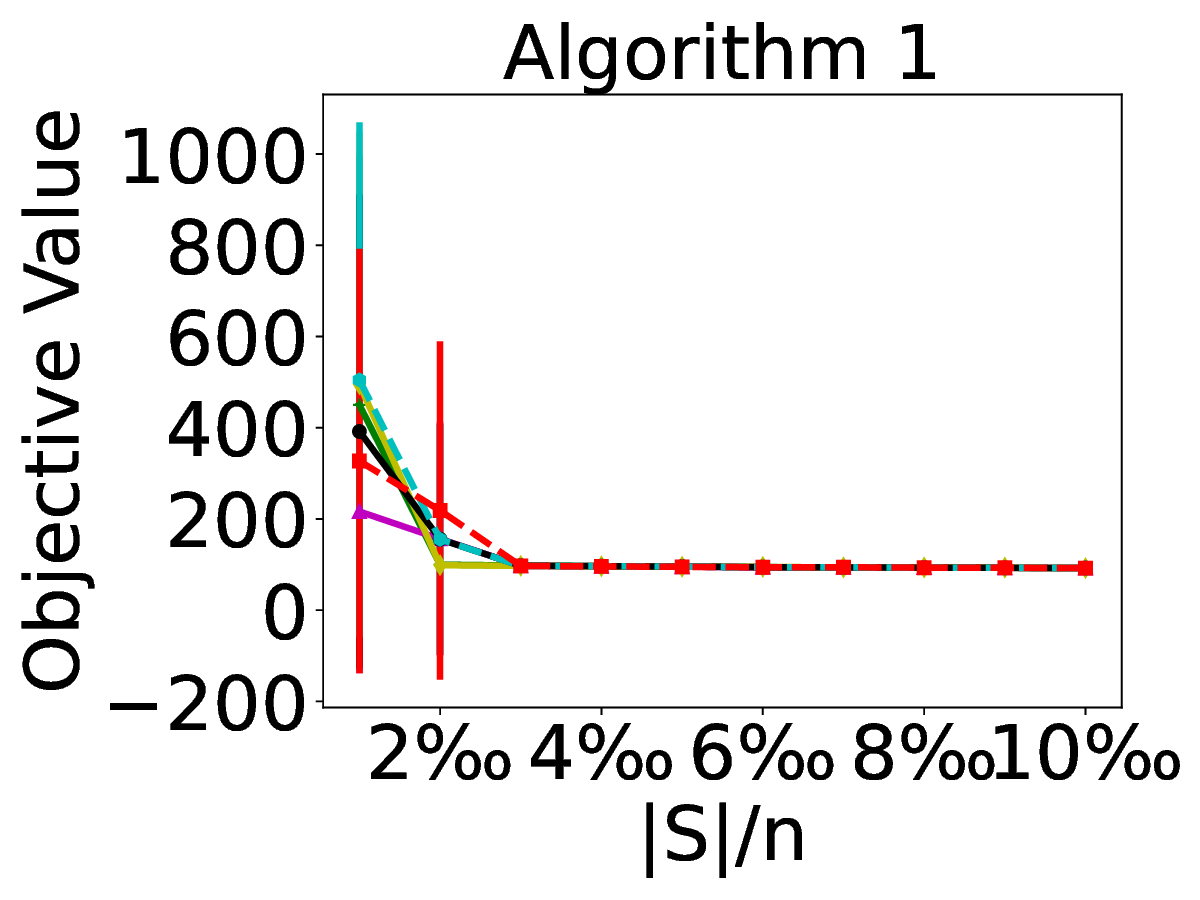} 
         \caption{}
     \end{subfigure}%
     \begin{subfigure}[b]{0.24\textwidth}
         \centering
         \includegraphics[height=0.73\textwidth]{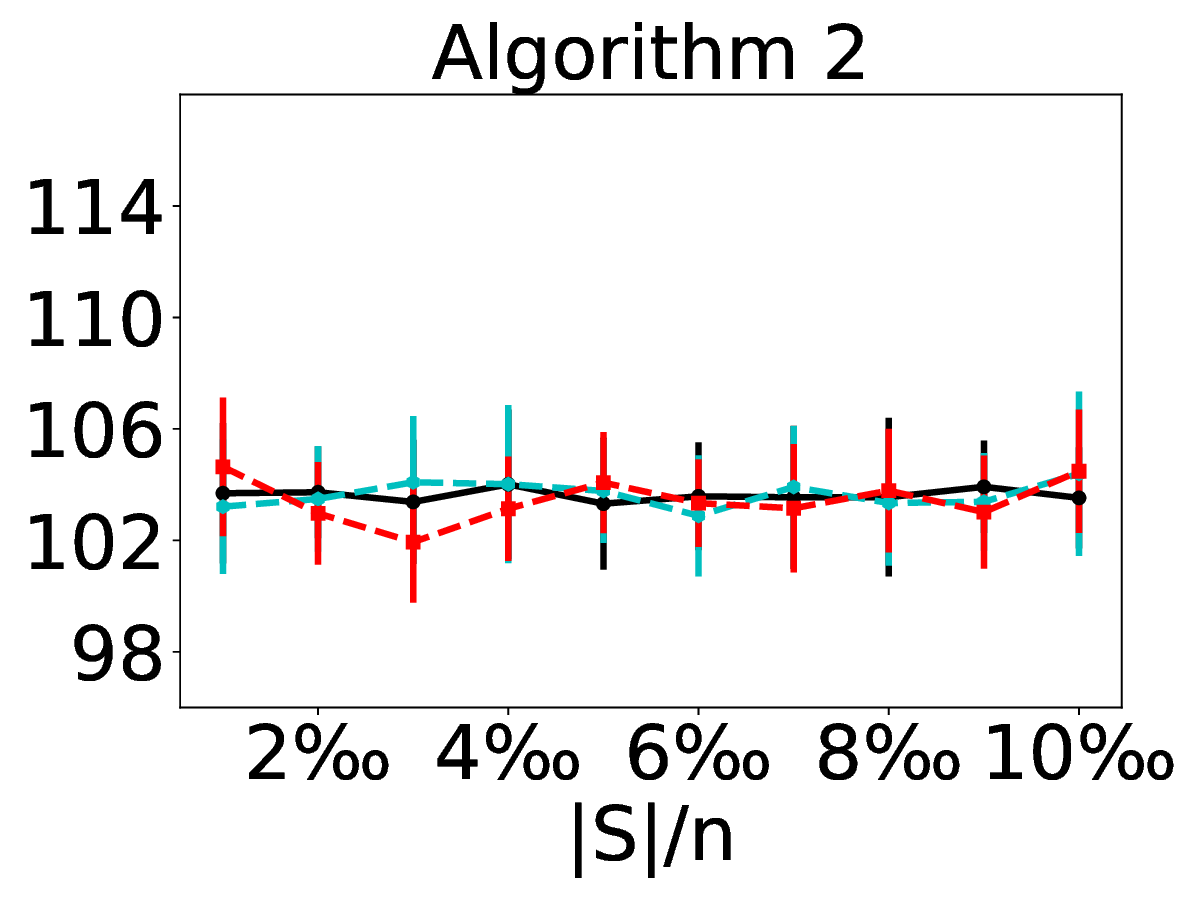} 
         \caption{}
     \end{subfigure}%
     \begin{subfigure}[b]{0.24\textwidth}
         \centering
         \includegraphics[height=0.73\textwidth]{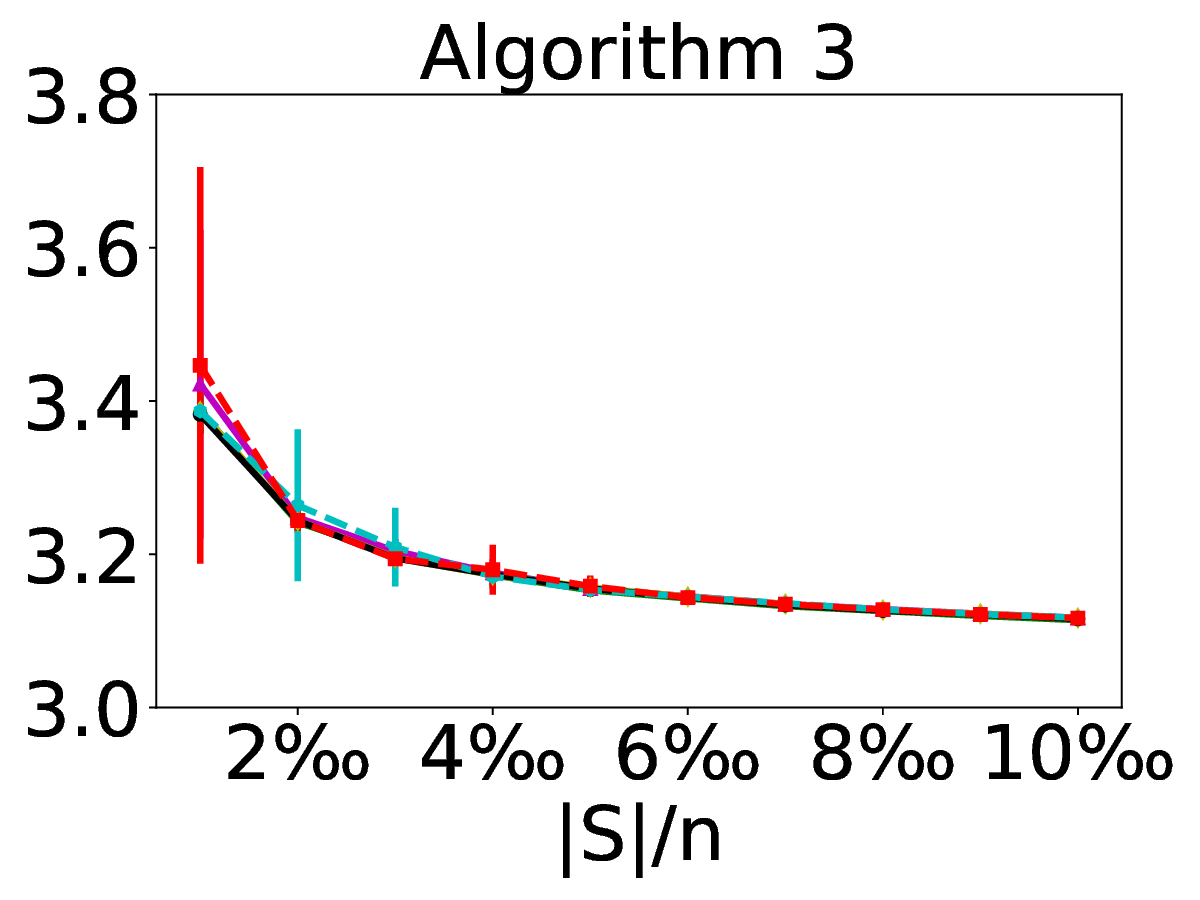}  
         \caption{}
     \end{subfigure}%
     \begin{subfigure}[b]{0.24\textwidth}
         \centering
         \includegraphics[height=0.73\textwidth]{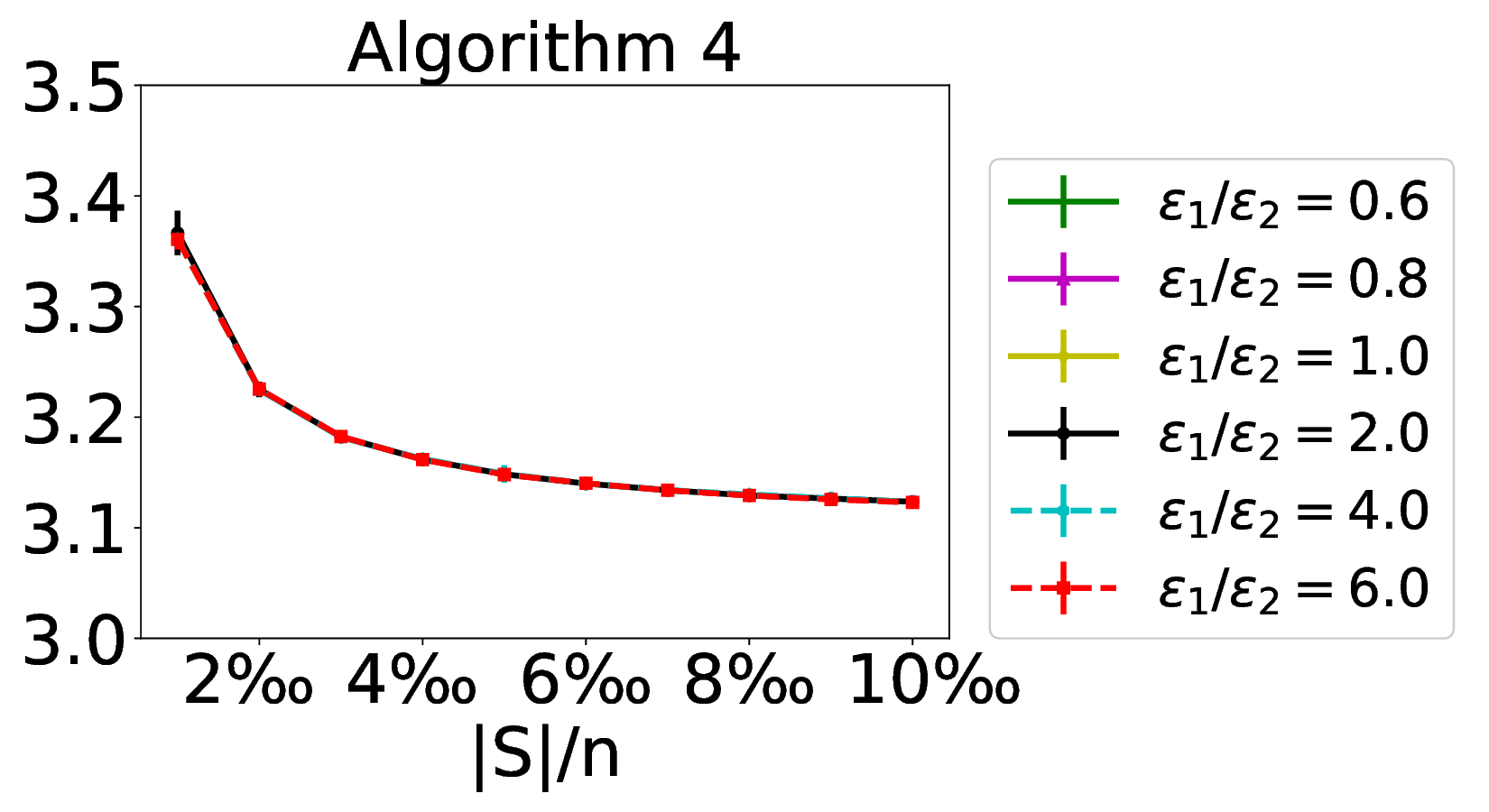}
         \caption{}
     \end{subfigure}%
     
     \vspace{0.2in}
     \begin{subfigure}[b]{0.25\textwidth}
         \centering
         \includegraphics[height=0.73\textwidth]{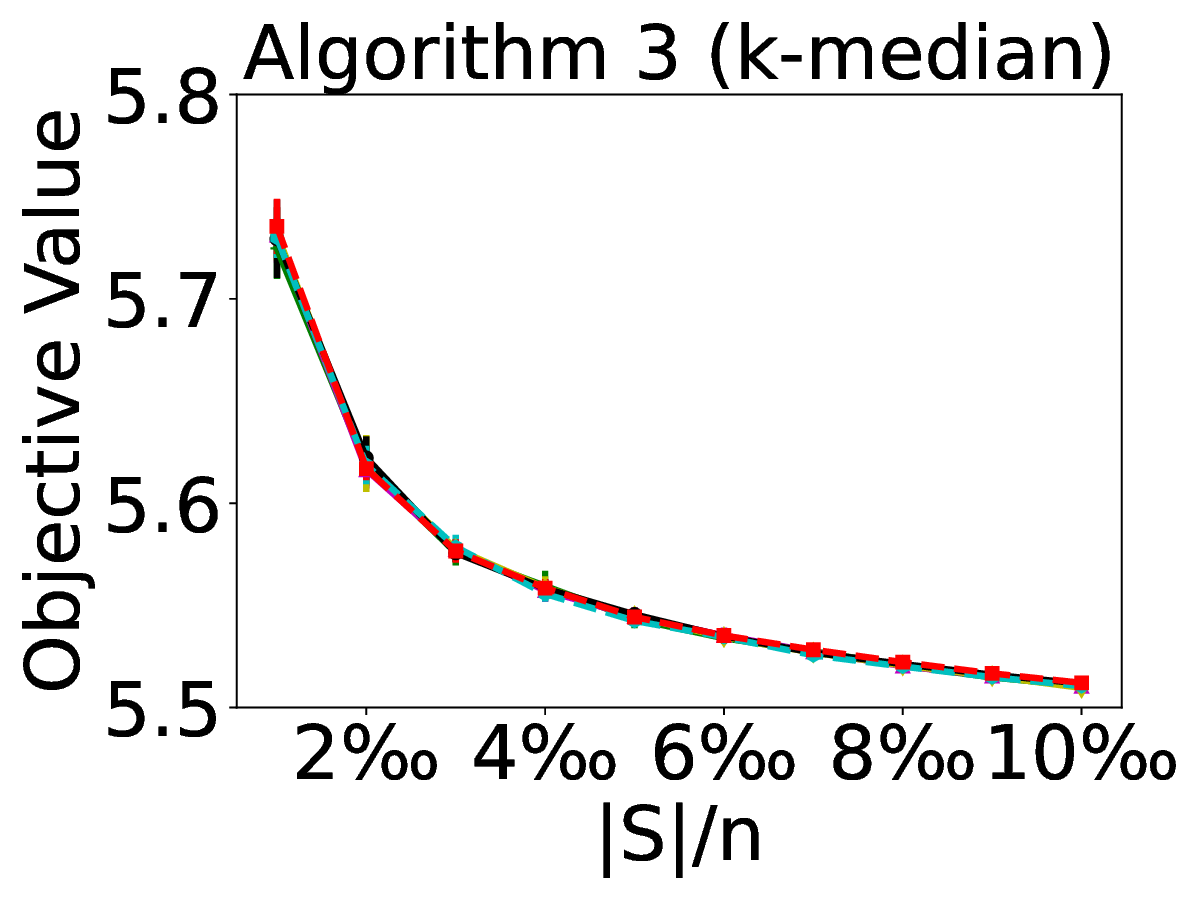} 
         \caption{}
     \end{subfigure}%
     \begin{subfigure}[b]{0.25\textwidth}
         \centering
         \includegraphics[height=0.73\textwidth]{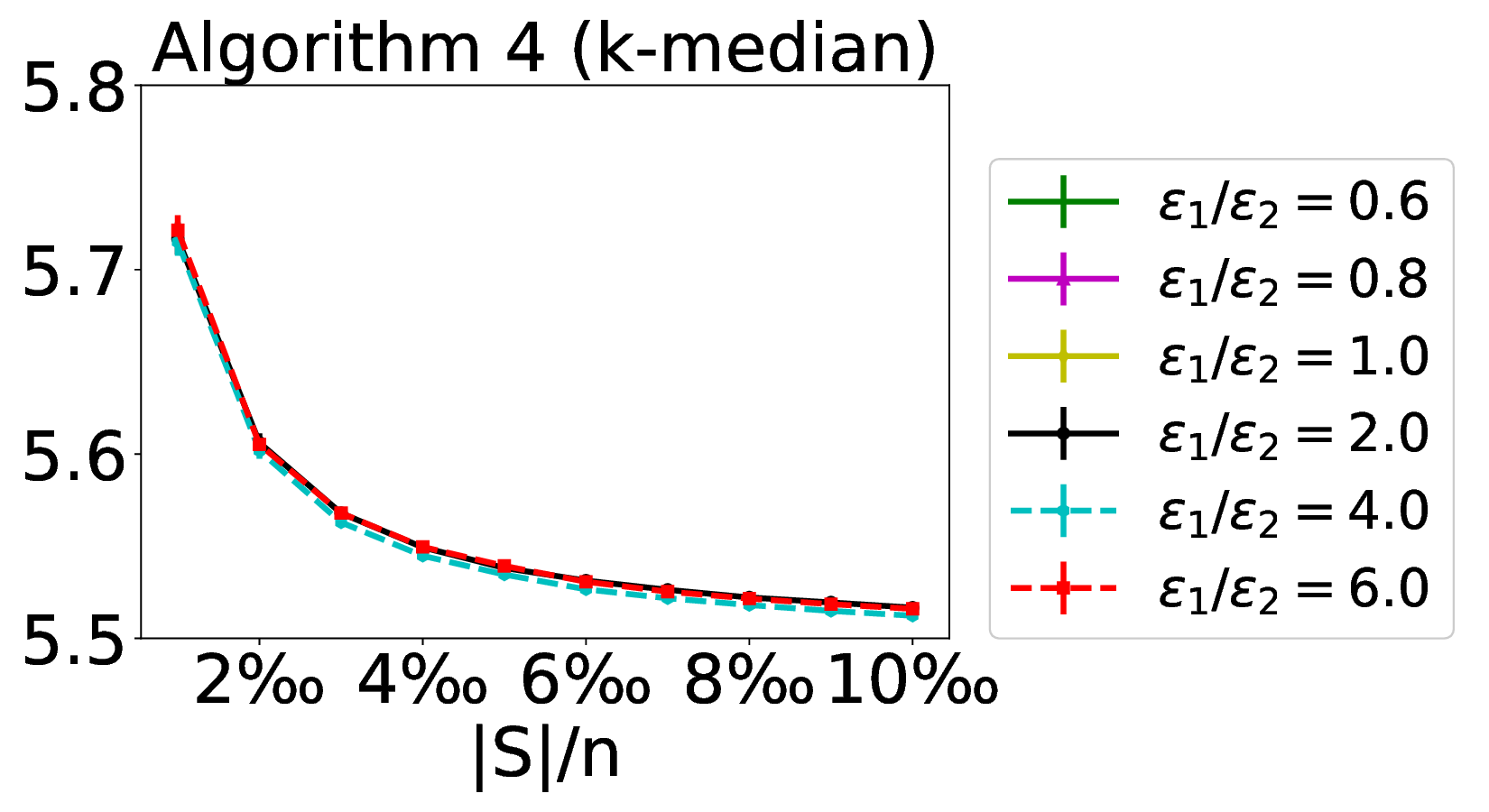}
         \caption{}
     \end{subfigure}%
				 \vspace{-0.1in}
		  \caption{The performances with varying the sampling ratio $|S|/n$ (we run algorithm~\ref{alg-kc2} and \ref{alg-km2} for $\epsilon_1/\epsilon_2>1$). }     
		\label{fig-exp-sample-sup}
	\end{center}
			\vspace{-0.2in}
\end{figure*}


\begin{figure*}[h] 
	\begin{center}


        
        \begin{subfigure}[b]{0.5\textwidth}
            \centering
            \includegraphics[height=0.6\textwidth]{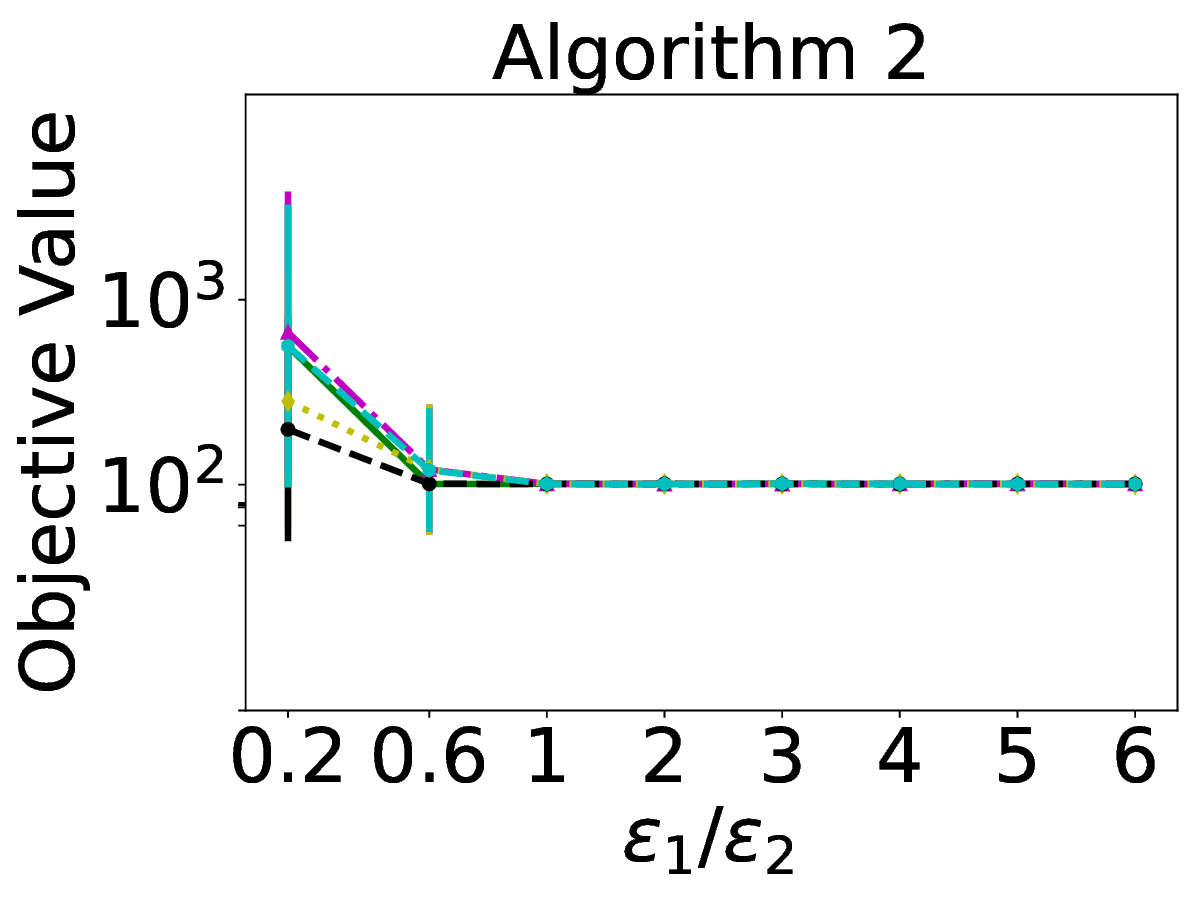} 
            \caption{Varying $\epsilon_1/\epsilon_2$ for Algorithm~\ref{alg-kc2}}
        \end{subfigure}%
        \begin{subfigure}[b]{0.5\textwidth}
            \centering
            \includegraphics[height=0.6\textwidth]{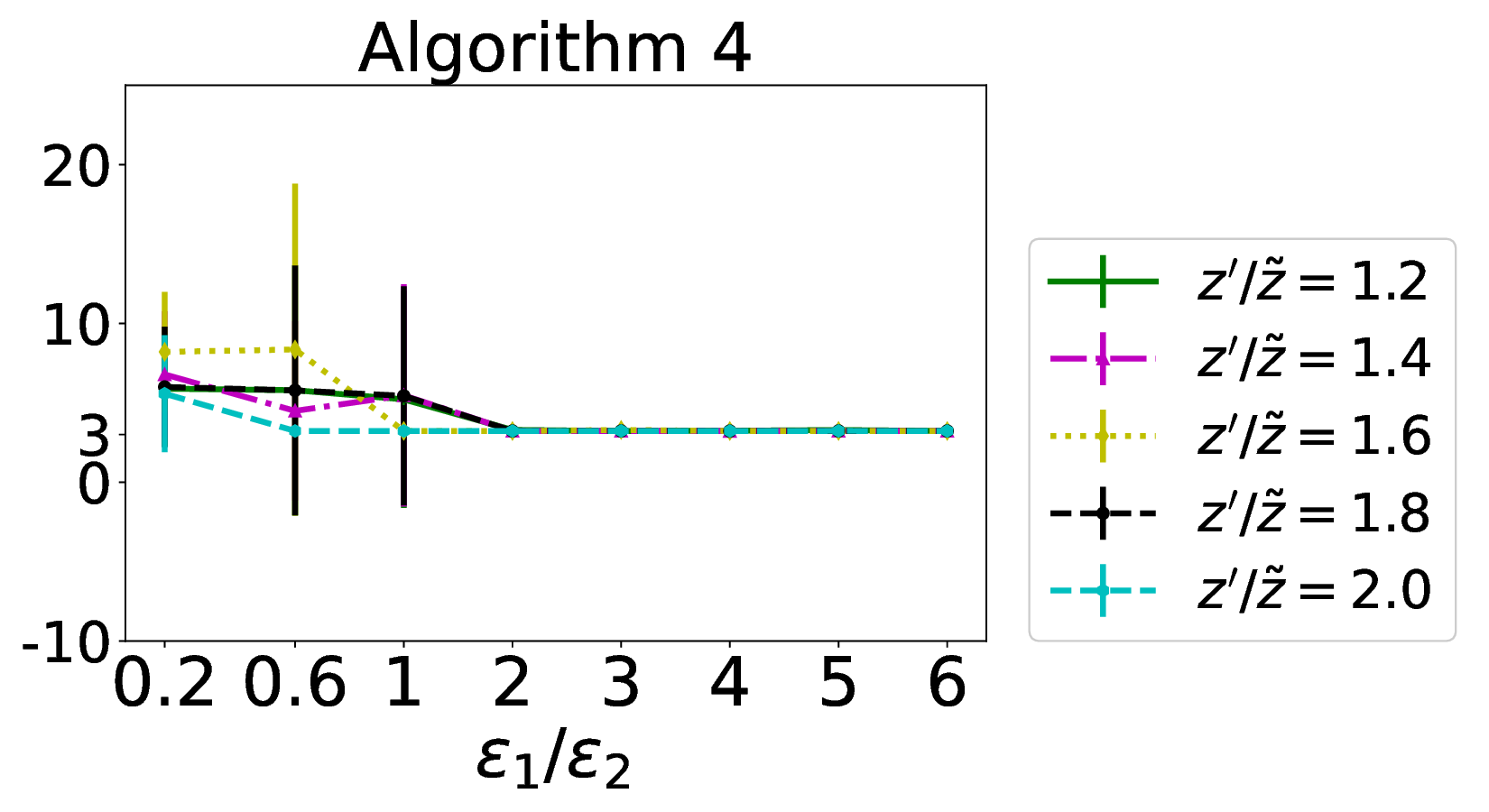} 
            \caption{Varying $\epsilon_1/\epsilon_2$ for Algorithm~\ref{alg-km2}}
        \end{subfigure}%
        \vspace{0.2in}
        \begin{subfigure}[b]{0.5\textwidth}
            \centering
            \includegraphics[height=0.6\textwidth]{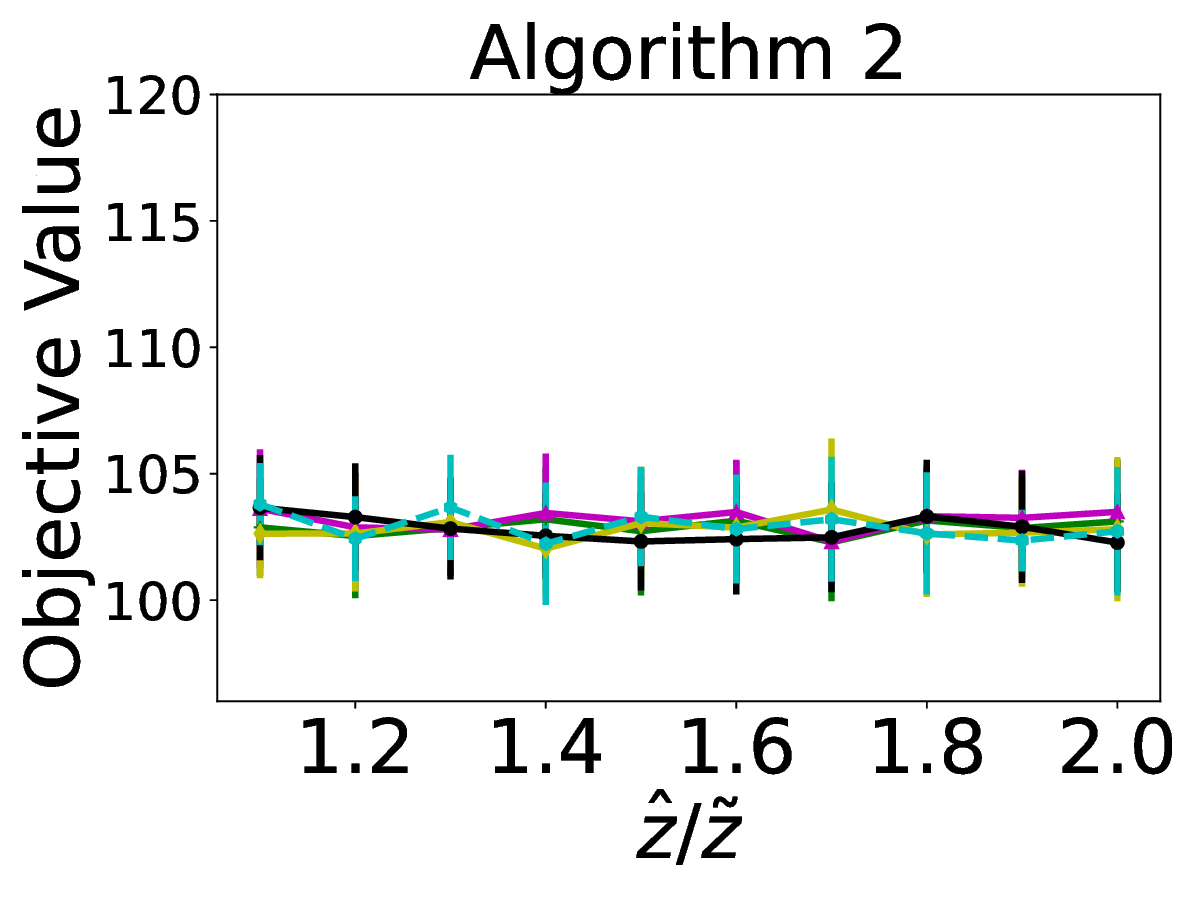} 
            \caption{Varying $\hat{z}/\tilde{z}$  for Algorithm~\ref{alg-kc2}}
        \end{subfigure}%
        \begin{subfigure}[b]{0.5\textwidth}
            \centering
            \includegraphics[height=0.6\textwidth]{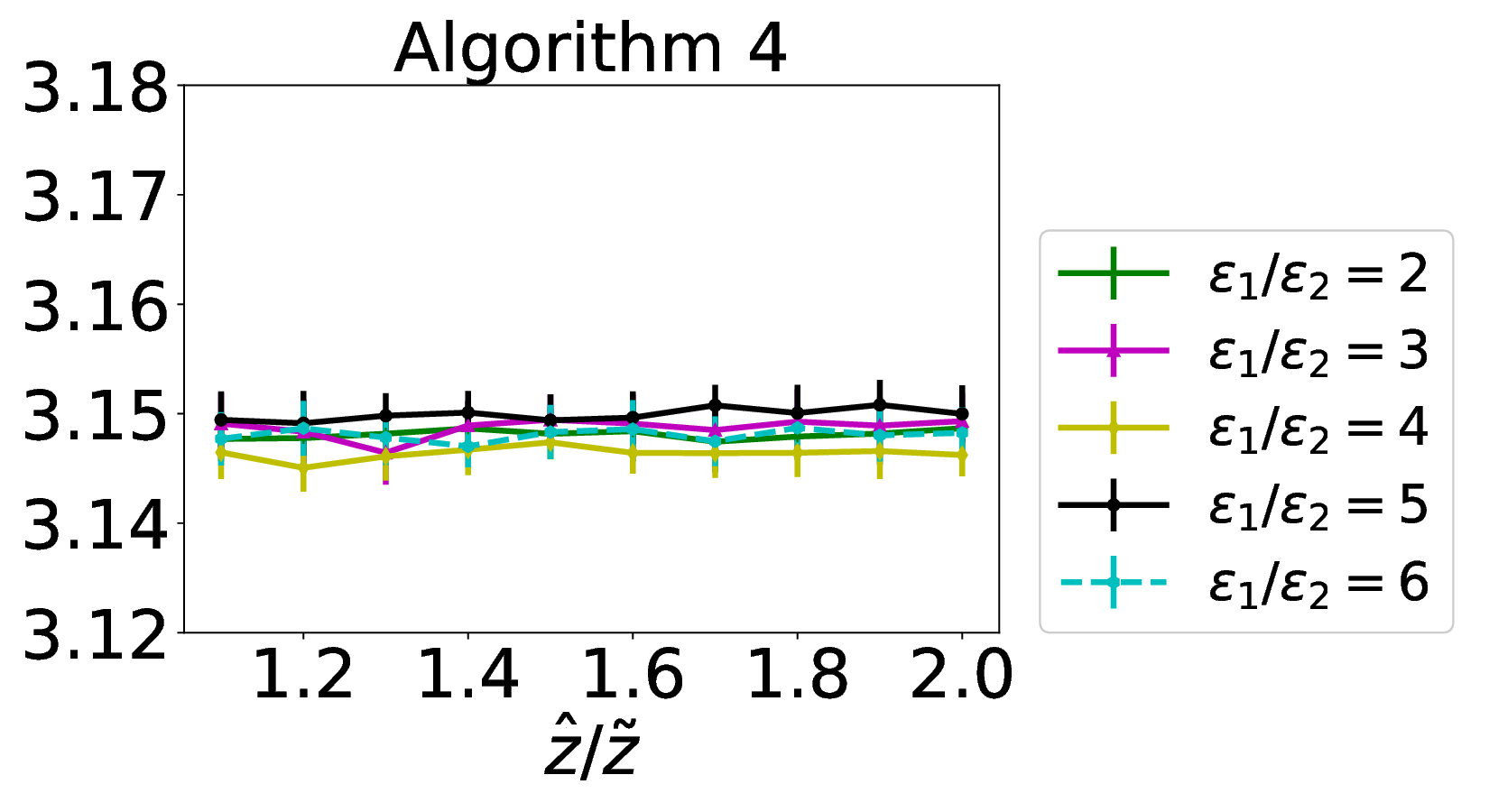} 
            \caption{Varying $\hat{z}/\tilde{z}$  for Algorithm~\ref{alg-km2}}
        \end{subfigure}%
        
		\caption{The  average objective values and standard deviations with varying $\epsilon_1/\epsilon_2$ and $\hat{z}/\tilde{z}$. 
$\tilde{z}=\frac{\epsilon_2}{k}|S|$ is the expected number of outliers contained in $S$.}     
		\label{fig-stability-sup}
	\end{center}
\end{figure*}

\begin{figure*}[h]
	\begin{center}


    \begin{subfigure}[b]{0.38\textwidth}
        \centering
        \includegraphics[width=1\textwidth]{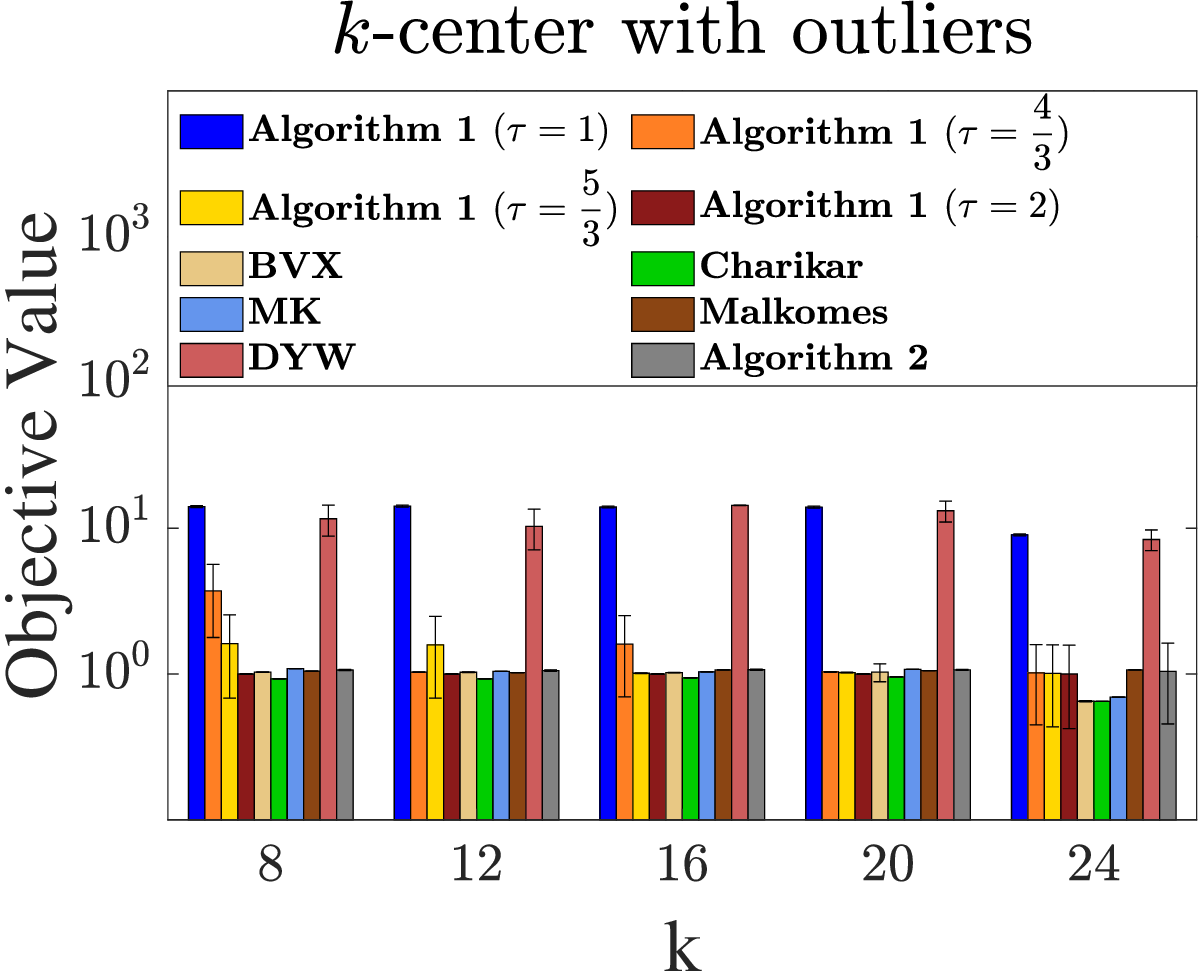}  
        \caption{}
    \end{subfigure}
    \hspace{0.2in}
    \begin{subfigure}[b]{0.38\textwidth}
        \centering
        \includegraphics[width=1\textwidth]{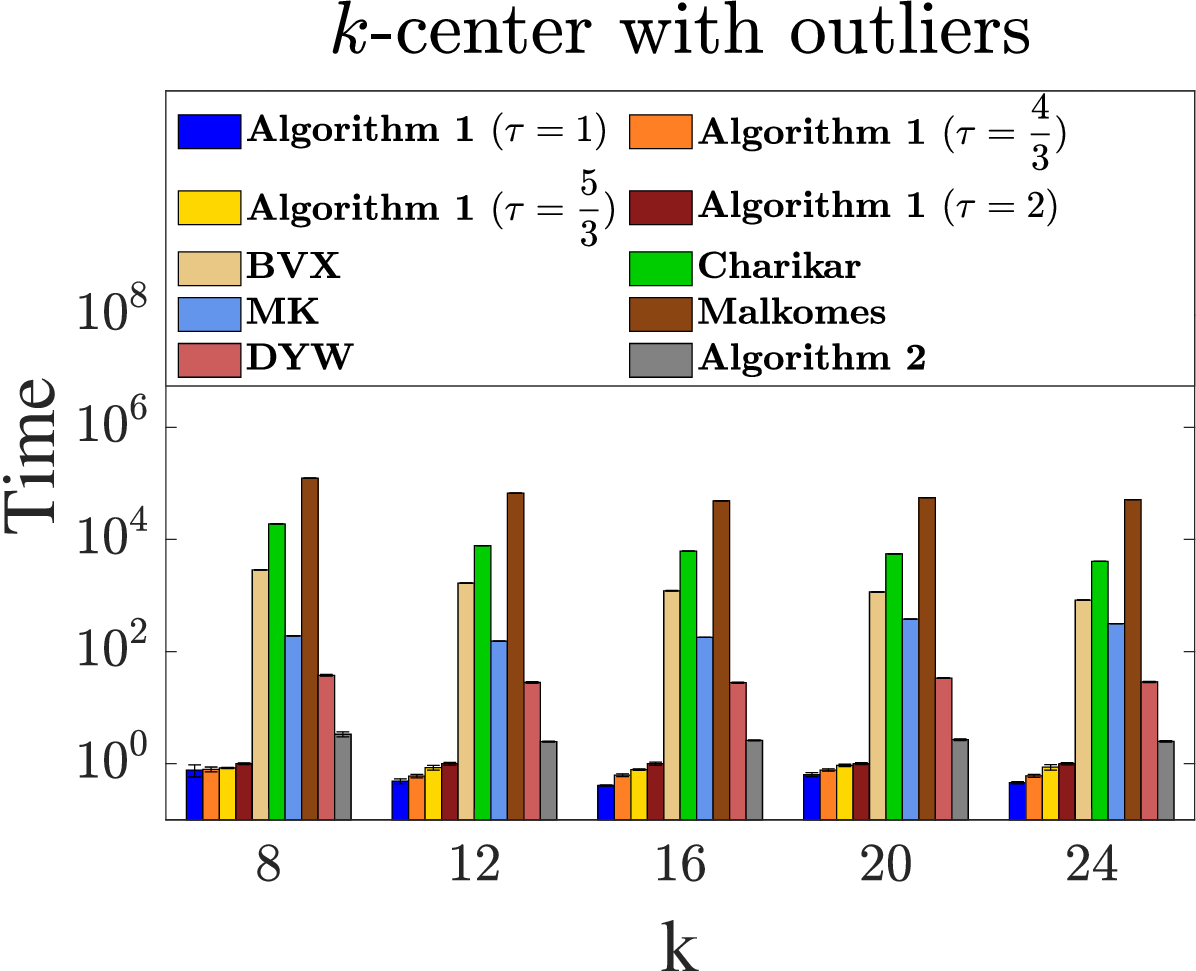} 
        \caption{}
    \end{subfigure}
    
        \vspace{0.1in}
    
    \begin{subfigure}[b]{0.38\textwidth}
        \centering
        \includegraphics[width=1\textwidth]{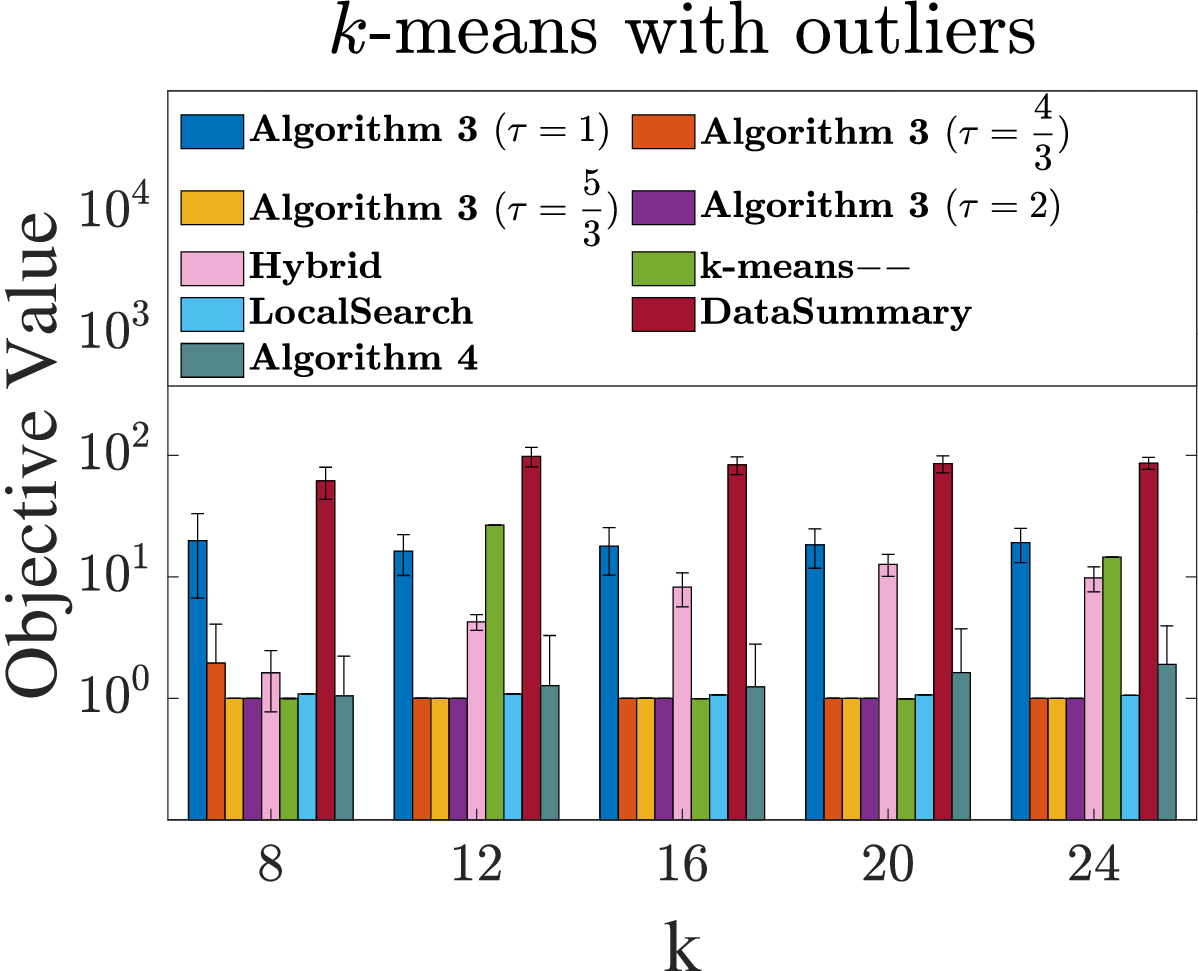}  
        \caption{}
    \end{subfigure}
    \hspace{0.2in}
    \begin{subfigure}[b]{0.38\textwidth}
        \centering
        \includegraphics[width=1\textwidth]{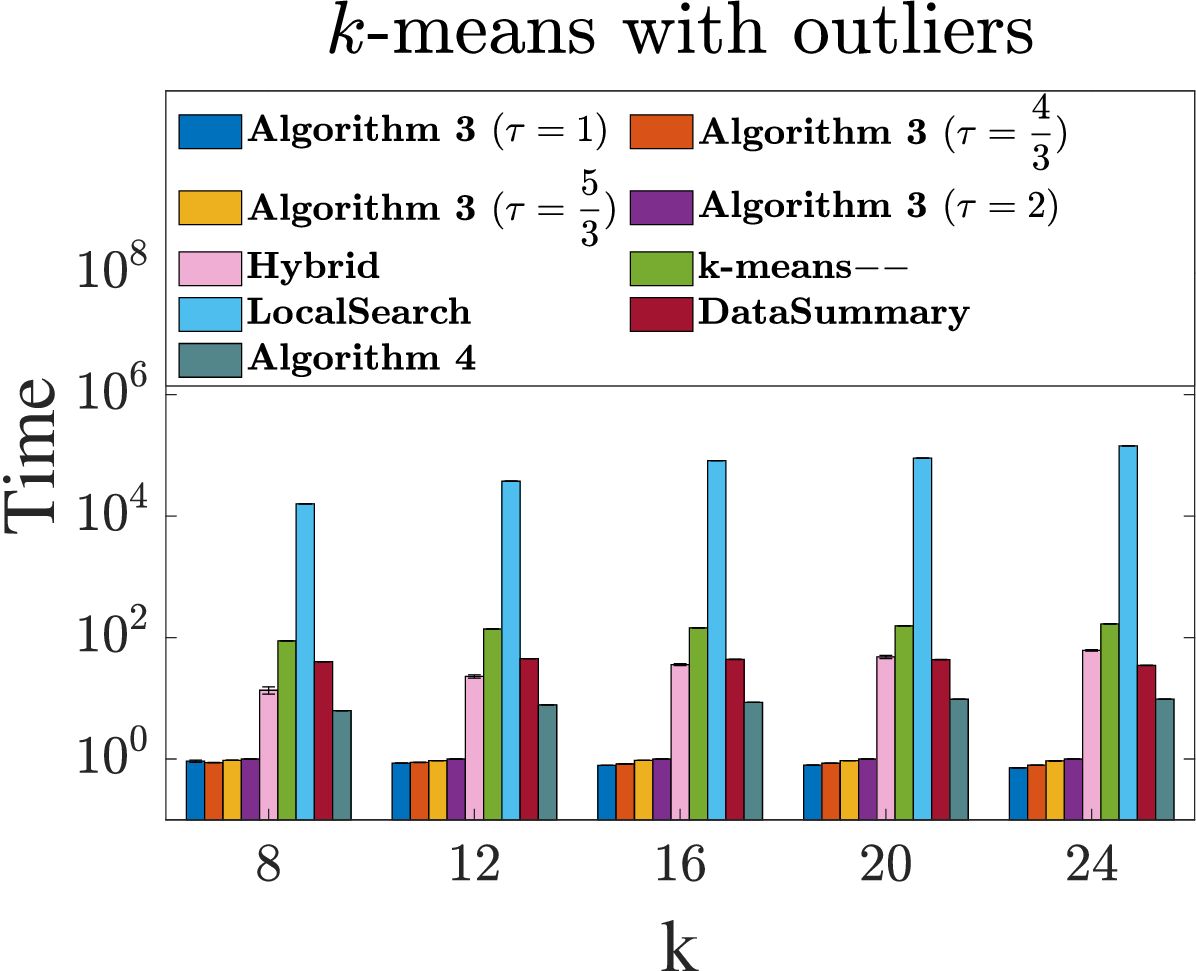}
        \caption{}
    \end{subfigure}
    
        \vspace{0.1in}

    \begin{subfigure}[b]{0.38\textwidth}
        \centering
        \includegraphics[width=1\textwidth]{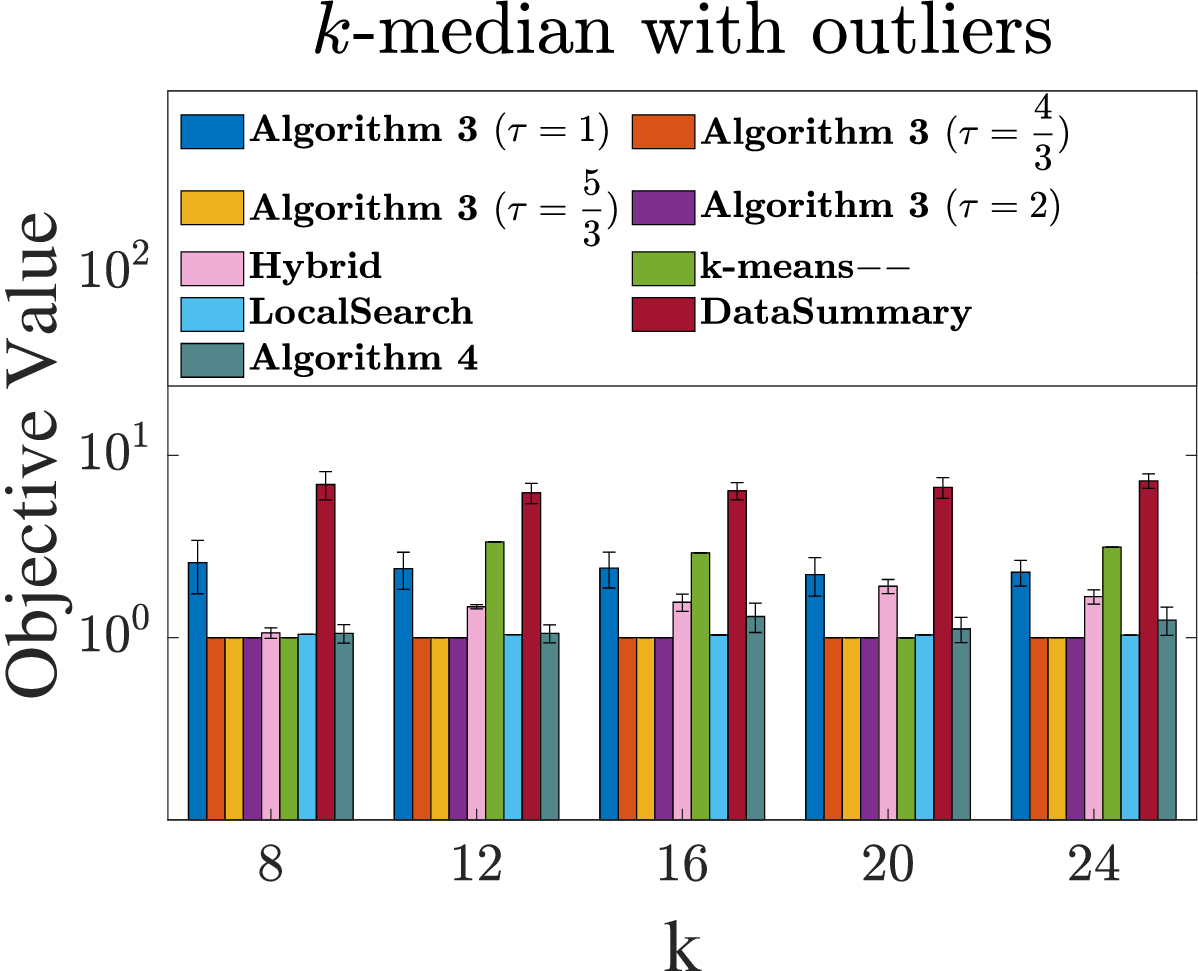}
        \caption{}
    \end{subfigure}
    \hspace{0.2in}
    \begin{subfigure}[b]{0.38\textwidth}
        \centering
        \includegraphics[width=1\textwidth]{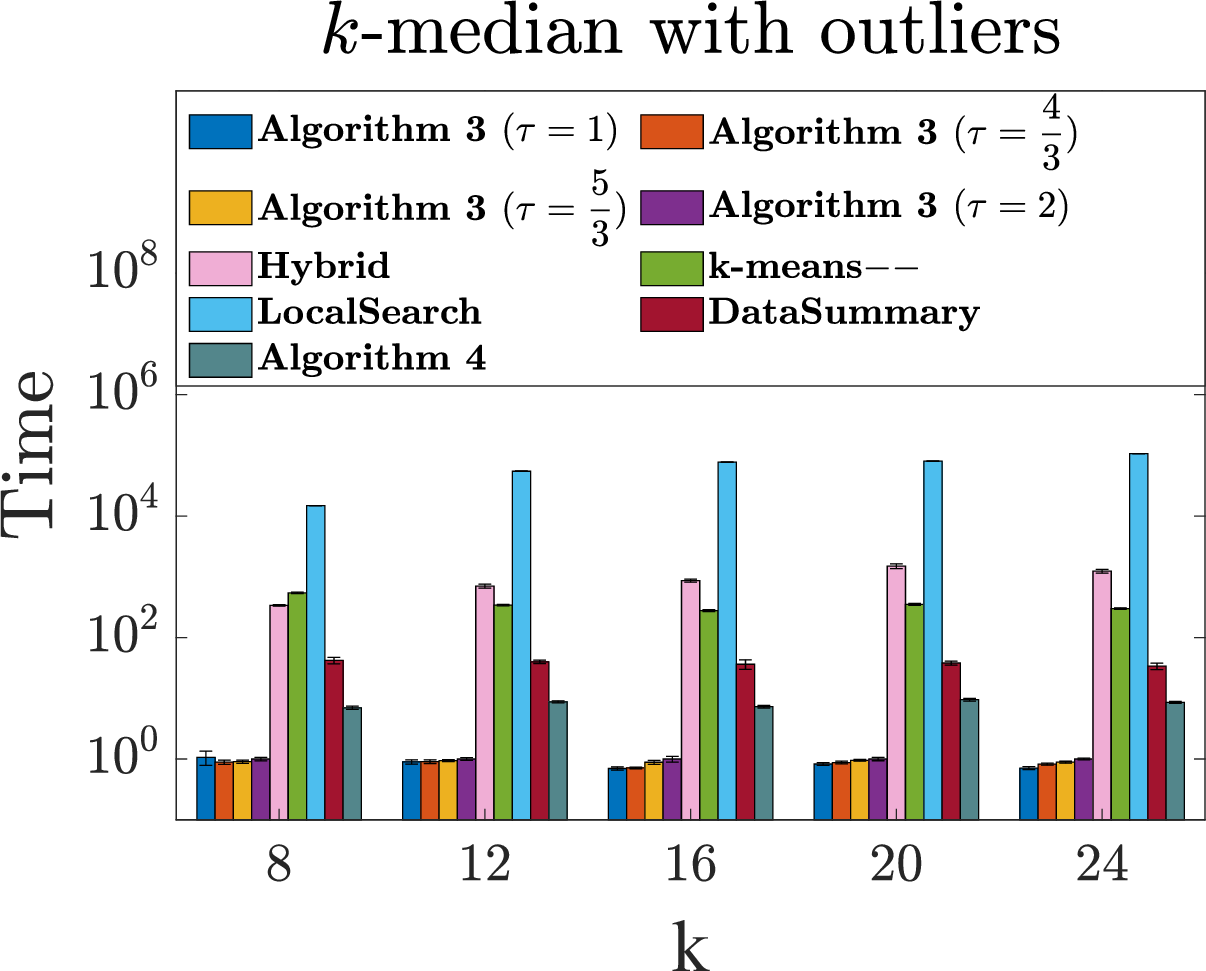} %
        \caption{}
    \end{subfigure}

    \caption{The normalized results (objective values and running times) on the synthetic datasets with varying $k$ (we set $z=2\%n$ and $\epsilon_1/\epsilon_2=2$). To illustrate the comparisons more clearly, we normalize all the results by dividing them over the results of Algorithm~\ref{alg-kc1} with $\tau = 2$ ({\em resp.,} Algorithm~\ref{alg-km} with $\tau = 2$ ) for $k$-center ({\em resp.,} $k$-means/median ) clustering with outliers. }     
	\label{fig-exp-k-sup}
	\end{center}
\end{figure*}

\vspace{-12in}
\begin{figure*}[h]
 	\begin{center}

    \begin{subfigure}[b]{0.38\textwidth}
        \centering
        \includegraphics[width=1\textwidth]{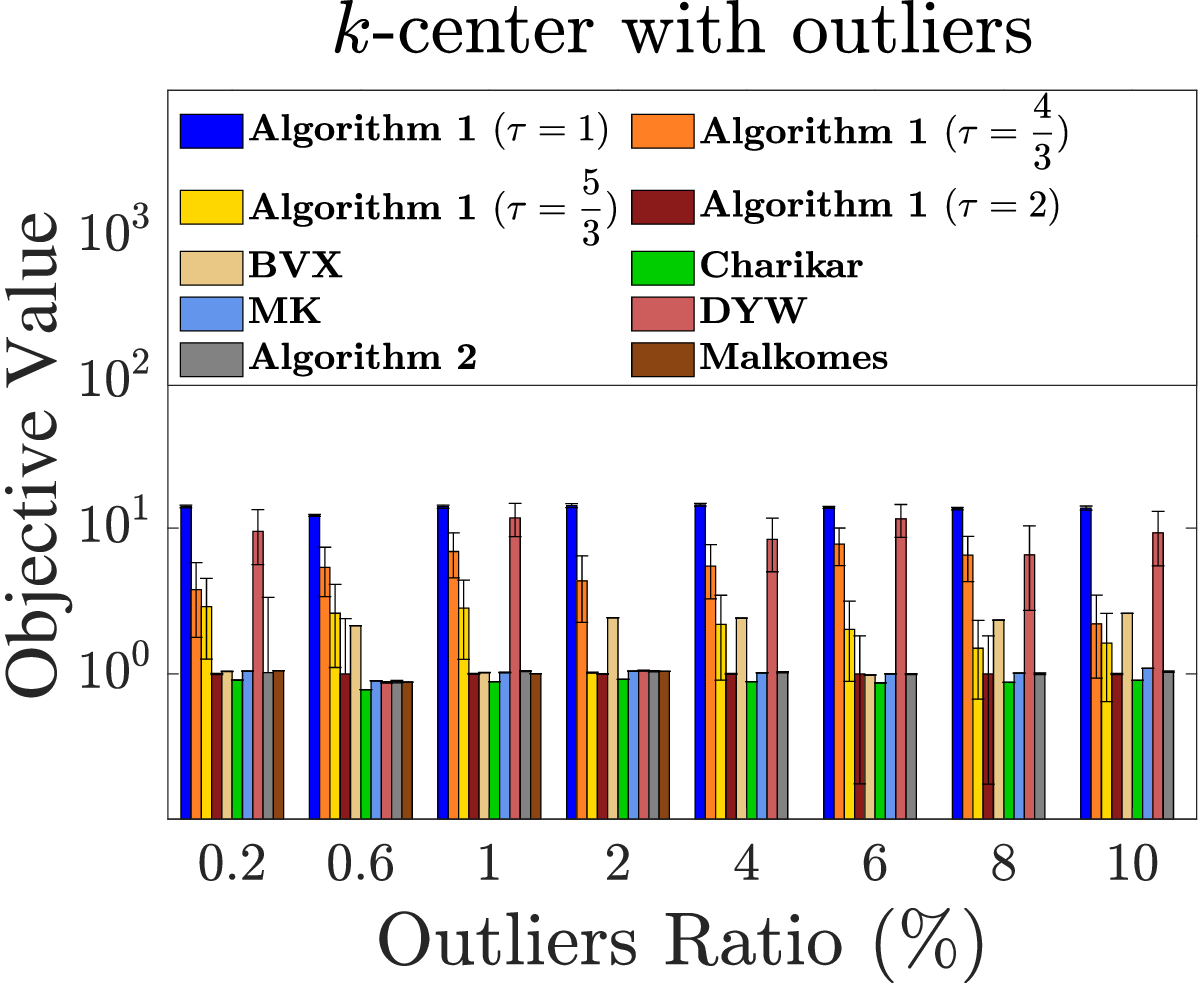}  
        \caption{}
    \end{subfigure}
    \hspace{0.2in}
    \begin{subfigure}[b]{0.38\textwidth}
        \centering
        \includegraphics[width=1\textwidth]{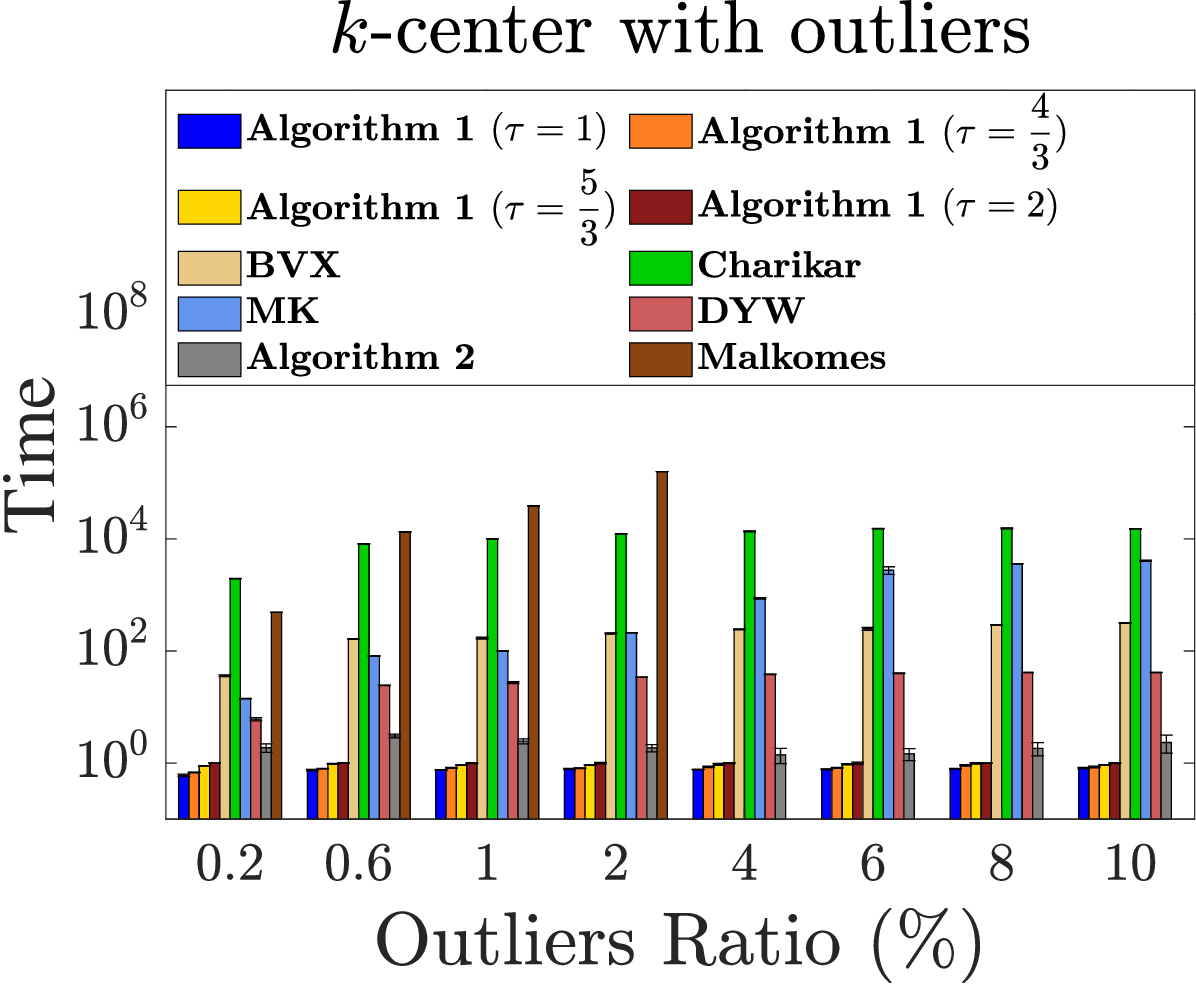} 
        \caption{}
    \end{subfigure}
    
        \vspace{0.1in}
    
    \begin{subfigure}[b]{0.38\textwidth}
        \centering
        \includegraphics[width=1\textwidth]{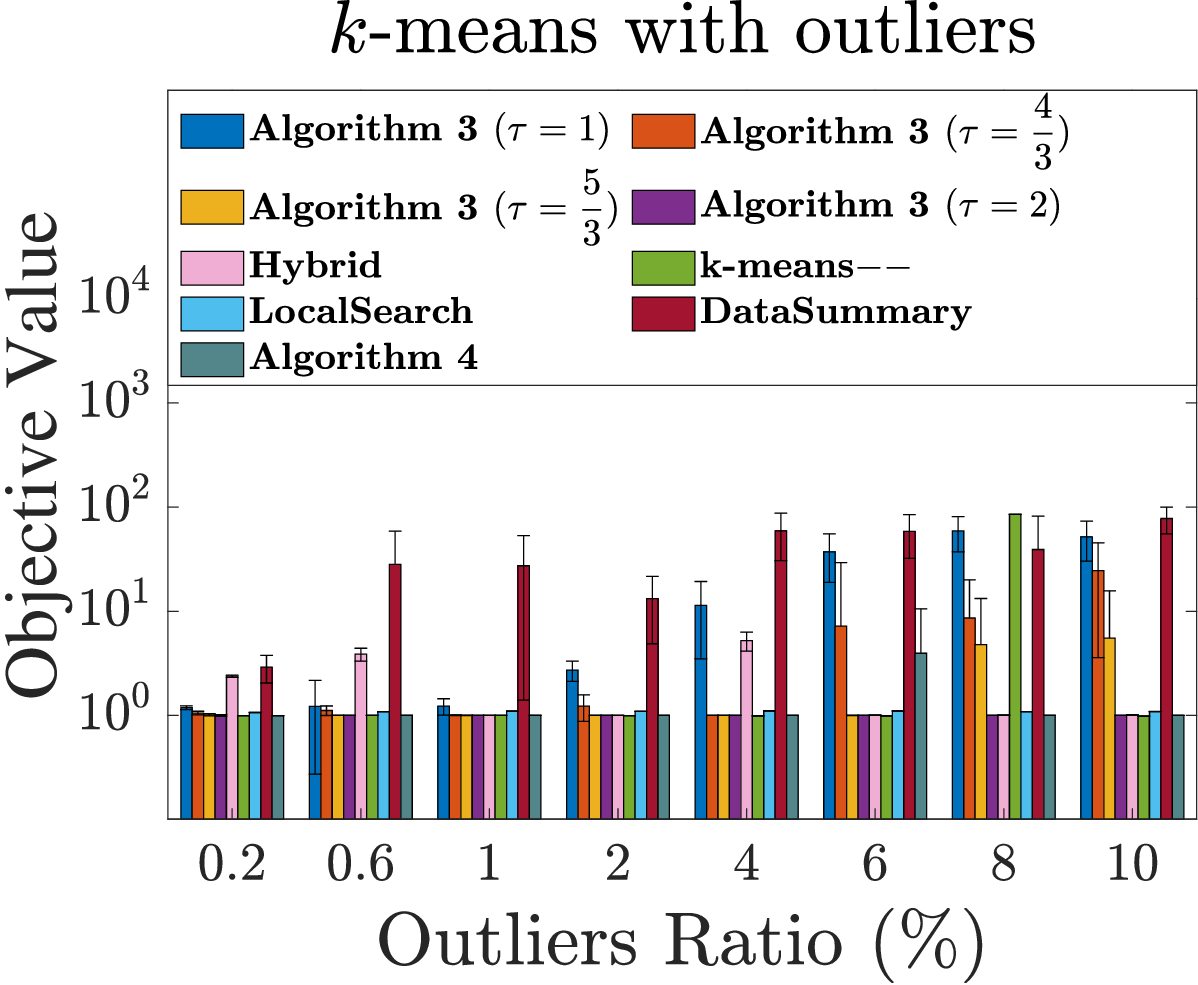}  
        \caption{}
    \end{subfigure}
    \hspace{0.2in}
    \begin{subfigure}[b]{0.38\textwidth}
        \centering
        \includegraphics[width=1\textwidth]{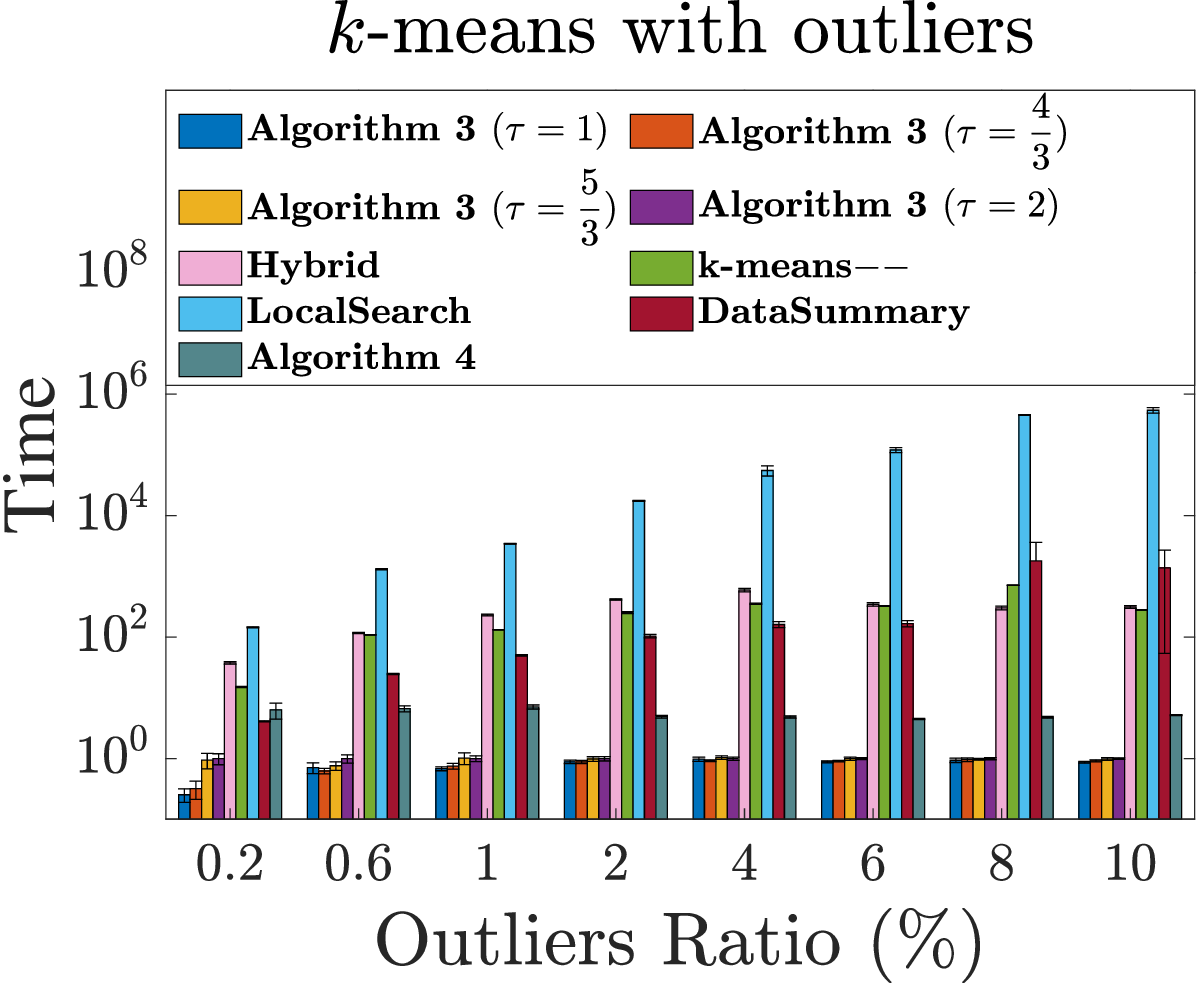}
        \caption{}
    \end{subfigure}
    
        \vspace{0.1in}

    \begin{subfigure}[b]{0.38\textwidth}
        \centering
        \includegraphics[width=1\textwidth]{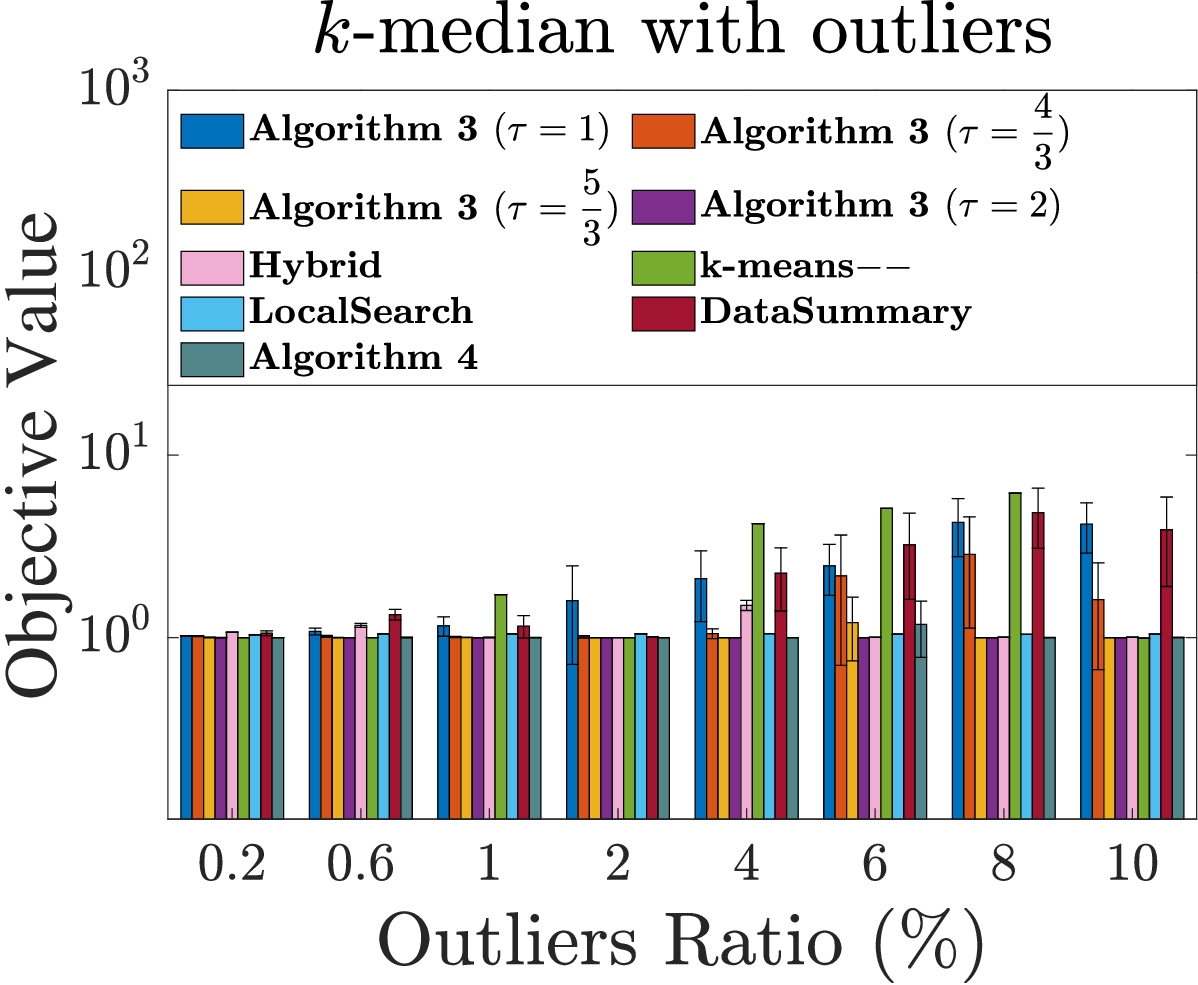}
        \caption{}
    \end{subfigure}
    \hspace{0.2in}
    \begin{subfigure}[b]{0.38\textwidth}
        \centering
        \includegraphics[width=1\textwidth]{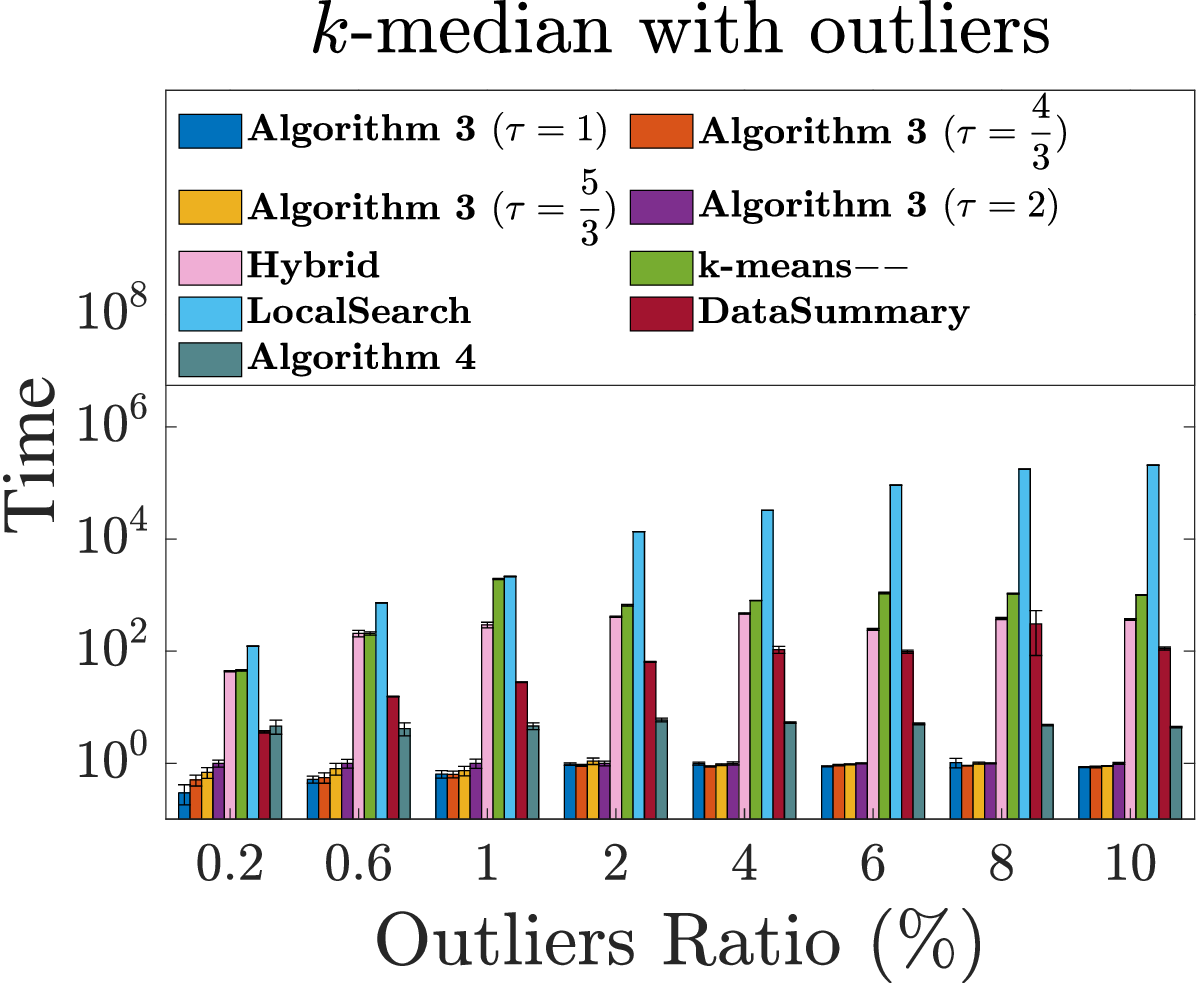} %
        \caption{}
    \end{subfigure}
    
		 \caption{The normalized results (objective values and running times) on the synthetic datasets with varying $z/n$ (we set $k=4$ and $\epsilon_1/\epsilon_2=2$).  We did not run  \textsc{Malkomes} for $z/n\geq4\%$, since it is very slow for large $z$. To illustrate the comparisons more clearly, we normalize all the results by dividing them over the results of Algorithm~\ref{alg-kc1} with $\tau = 2$ ({\em resp.,} Algorithm~\ref{alg-km} with $\tau = 2$ ) for $k$-center ({\em resp.,} $k$-means/median ) clustering with outliers. }     
		\label{fig-exp-outlier-sup}
	\end{center}
\end{figure*}

\FloatBarrier

\clearpage

\makeatletter
\renewcommand{\@biblabel}[1]{#1.}
\makeatother
\bibliographystyle{unsrtnat}

\bibliography{uniform_sampling_clustering3}

\end{document}